\documentclass{article}

\usepackage[margin=1in]{geometry}
\usepackage{bm}
\usepackage{amsmath,amssymb}
\usepackage{amsthm}
\usepackage{graphicx,psfrag,epsf}
\usepackage{subcaption}
\usepackage{url,bm}
\usepackage{enumerate}
\usepackage{multirow}
\usepackage{color}
\usepackage[colorlinks=true,linkcolor=blue]{hyperref}
\usepackage{algorithm,algorithmicx,algpseudocode}
\usepackage{amssymb, amsmath, amsthm,mathtools}
\usepackage{cite}
\usepackage{authblk}

\newtheorem{theorem}{Theorem}[section]

\newtheorem{corollary}[theorem]{Corollary}

\newtheorem{lemma}[theorem]{Lemma}

\newtheorem{definition}{Definition}
\newtheorem{assumption}{Assumption}

\theoremstyle{remark}
\newtheorem{remark}{Remark}[section]
\newtheorem{example}{Example}
\newtheorem{conjecture*}{Conjecture}
\theoremstyle{plain}


\DeclareMathOperator*{\argmax}{argmax}

\usepackage{booktabs}

\newcommand{\R}{\mathbb{R}} 
 
\newcommand{\E}{\mathbb{E}}

\newcommand{\calP}{{\cal P}}

\newcommand{\calL}{{\cal L}}

\newcommand{\calB}{{\cal B}}
\newcommand{\calT}{{\cal T}}

\newcommand{\W}[0]{\mathcal{W}_2}

\usepackage[dvipsnames]{xcolor}

\title{Worst-case generation via minimax optimization \\
in Wasserstein space}

\author[1]{Xiuyuan Cheng}
\author[2]{Yao Xie\thanks{Corresponding author: yao.xie@isye.gatech.edu.
Authors listed alphabetically.}}
\author[2]{Linglingzhi Zhu}
\author[2]{Yunqin Zhu}

\affil[1]{{\small Department of Mathematics, Duke University}}
\affil[2]{\small H. Milton Stewart School of Industrial and Systems Engineering, Georgia Institute of Technology}

\date{\vspace{-10pt}}

\begin{document}

\maketitle

\begin{abstract}
    Worst-case generation plays a critical role in evaluating robustness and stress-testing systems under distribution shifts, in applications ranging from machine learning models to power grids and medical prediction systems. We develop a generative modeling framework for worst-case generation for a pre-specified risk, based on min-max optimization over continuous probability distributions, namely the Wasserstein space. Unlike traditional discrete distributionally robust optimization approaches, which often suffer from scalability issues, limited generalization, and costly worst-case inference, our framework exploits the Brenier theorem to characterize the least favorable (worst-case) distribution as the pushforward of a transport map from a continuous reference measure, enabling a continuous and expressive notion of risk-induced generation beyond classical discrete DRO formulations. Based on the min-max formulation, we propose a Gradient Descent Ascent (GDA)-type scheme that updates the decision model and the transport map in a single loop, establishing global convergence guarantees under mild regularity assumptions and possibly without convexity-concavity. We also propose to parameterize the transport map using a neural network that can be trained simultaneously with the GDA iterations by matching the transported training samples, thereby achieving a simulation-free approach. The efficiency of the proposed method as a risk-induced worst-case generator is validated by numerical experiments on synthetic and image data. 
    
\end{abstract}

\section{Introduction}

In many high-stakes engineering and operational systems, such as autonomous vehicles \cite{wang2021advsim}, power systems \cite{olowononi2020resilient}, and healthcare \cite{finlayson2019adversarial}, the outcomes that are most consequential for reliability, risk management, and decision-making arise not from typical observations, but from low-probability, high-impact scenarios situated at the periphery of the data-generating distribution. Such scenarios—stemming from atypical environmental conditions, irregular system configurations, or broader distributional shifts—often exert disproportionate influence on system performance. Yet, they are intrinsically difficult to observe, anticipate, or replicate through standard data-driven methods. Consequently, the ability to systematically construct informative worst-case samples is essential for rigorous stress testing, robustness certification, and the evaluation of models under meaningful but unobserved operating regimes. This need aligns with recent developments in generative modeling aimed at extreme-scenario synthesis, thereby enabling the identification of structural vulnerabilities that remain obscured under nominal data conditions.

A natural framework for representing inference or decision-making systems under such worst-case scenarios is distributionally robust optimization (DRO), which minimizes expected loss over an ambiguity set of probability distributions and thereby provides a principled mechanism for modeling distributional uncertainty in stochastic optimization. Within this paradigm, Wasserstein DRO has emerged as a popular approach due to its geometry-aware ambiguity sets, which both capture realistic data perturbations and offer strong out-of-sample guarantees. Although the theoretical properties and applications of Wasserstein DRO have been extensively studied (see, e.g., \cite{mohajerin2018data,gao2023distributionally,blanchet2019quantifying}), much less attention has been devoted to the explicit construction or sampling of worst-case distributions themselves. Motivated by this gap, and by the structure of the Wasserstein DRO formulation, we consider adversarial perturbations drawn from a Wasserstein ambiguity set as a regularized inner maximization. We formulate worst-case sample generation as a minimax optimization problem over the space of probability measures.

\paragraph{The minimax problem}
The minimax problem in Wasserstein DRO involves distributions in the Wasserstein-2 space.
Recall that on $\R^d$, the $\W$ space is denoted as $\calP_2 = \{  P \text{ on $\R^d$}, \, s.t. \int_{\R^d} \| x \|^2 dP(x) < \infty \}$.
Given a {\it reference distribution} $P$, the Wasserstein DRO problem for a radius $\delta > 0$ can be written as  
\begin{equation}\label{eq:wass-minimax-0}
\min_{\theta \in \R^p} \max_{Q \in \calB_\delta} 
  \E_{v \sim Q}  \ell( \theta, v),
  \quad \calB_\delta =\{ Q \in \calP_2,  \W(Q, P) \le \delta \},
\end{equation}
where $\theta \in \R^p$ is the parametrization of the {\it decision model} (classifier),
and  $\ell (\theta, v)$ stands for the loss function of decision model $\theta$ at sample $v$.
We introduce the Lagrangian (penalized) relaxation of the constrained problem \eqref{eq:wass-minimax-0}, 
namely, for a $\gamma > 0$, the minimax problem
\begin{equation}\label{eq:wass-minimax-1}
\min_{\theta \in \R^p} \max_{Q \in \calP_2} 
  \E_{v \sim Q}  \ell( \theta, v) - \frac{1}{ 2 \gamma} \W(P,Q)^2,
\end{equation}
where the parameter $\gamma$  controls the effective radius $\delta$ of the uncertainty set (the $\W$ ball $\calB_\delta$).
It is possible to constrain $\theta \in \Theta$ for a convex subset $\Theta$ of $\R^p$.
In this work, for simplicity, we assume $\Theta = \R^p$ throughout.

In this work, we consider $P \in \calP_2^r $, that is, $P$ is in  $\calP_2$ and has density.  
We further write $Q = T_\# P$, where $T : \R^d \to \R^d$ is the transport map that we will solve for.
Specifically, define $L^2(P):=\{ T : \R^d \to \R^d, \, \E_{x \sim P} \| T(x) \|^2 < \infty  \}$.
As shown in Proposition 2.2 of \cite{xu2024flow}, for each fixed $\theta $, the inner max problem over $Q \in \calP_2$ in \eqref{eq:wass-minimax-1} has an equivalent formulation using the $L^2$ transport map $T$ as
\[
\max_{T \in L^2(P)} \E_{x \sim P} \big[  \ell( \theta, T(x)) - \frac{1}{ 2 \gamma} \| T(x) - x\|^2  \big].
\]
The equivalence is a result of the Brenier Theorem \cite{brenier1991polar},  making use of the fact that the reference distribution $P$ has a density.
As a result, we consider the min-max problem in variables $(\theta, T)$ as
\begin{equation}\label{eq:wass-minimax-2}
\min_{\theta \in \R^p} \max_{T \in L^2(P)} L(\theta, T)
 = \E_{x \sim P} \big[  \ell( \theta, T(x)) - \frac{1}{ 2 \gamma} \| T(x) - x\|^2  \big],
\end{equation}
and we focus on solving \eqref{eq:wass-minimax-2} for a given $\gamma$ in this work.

We use the population expectation $\E_{x \sim P}$ throughout the theoretical part.
In practice, this population expectation is replaced by an empirical average over finite (training) samples, possibly via mini-batches in a stochastic optimization. Our theory will focus on using $T$ to solve Wasserstein minimax optimization and does not address the finite sample effect in the learning (the sampling complexity).

\paragraph{The double-loop approach}
Observe that, for fixed $\theta$,
the maximization over the transport map in \eqref{eq:wass-minimax-2}  separates pointwisely in $x$. 
In other words, the optimal map  $T^*(\theta)$ can be obtained by solving, for each $x \in \R^d$,
 the proximal mapping
\[
T^* (\theta)(x) := \argmax_{v \in \R^d} \big(  \ell(\theta, v) - \frac{1}{2 \gamma} \| v -x \|^2 \big), 
\]
when the maximizer uniquely exists. 
In particular, 
assume that $\ell(\theta, v)$ is $\rho$-weakly convex-weakly concave,
then the above holds at least when $\gamma < 1/\rho$: 
in this case, $T^*(\theta)(x)$ equals the unique minimizer of $
\min_{v \in \R^d} -\ell(\theta, v) + \frac{1}{2 \gamma} \| v -x \|^2,
$ namely the Moreau envelope  of $-\ell(\theta, \cdot)$,
and $\min_v$ is a strongly convex minimization on $\R^d$. 

The argument has been formalized as the {\it dual representation} of the Wasserstein DRO \cite{mohajerin2018data,blanchet2019quantifying}, which holds under more general settings. 
The result implies that \eqref{eq:wass-minimax-1} can be solved equivalently via 
\begin{equation}\label{eq:wass-minimax-moreau}
\min_{\theta} \E_{x \sim P} \max_{v} \big(   \ell(\theta, v) - \frac{1}{2\gamma} \| v -x \|^2  \big).
\end{equation}
Suppose the proximal mapping in $\R^d$ (of the Moreau envelope) can be numerically solved efficiently (by an inner-loop), 
this leads to a double-loop approach to tackle \eqref{eq:wass-minimax-moreau} or \eqref{eq:wass-minimax-1}:
for a given $\theta$, one uses the inner-loop to solve for $ \arg\max_v$ point-wisely for each $x$
(which are finite samples $x_i$ when $P$ is replaced by an empirical distribution in practice),
and then in the outer-loop one addresses the minimization
$
\min_\theta \phi(\theta) := \E_{x \sim P} \max_{v} 
	\big(   \ell(\theta, v) - \frac{1}{2\gamma} \| v -x \|^2  \big)$.
Under proper conditions, say for a small enough $\gamma$, one has
\[
\partial \phi(\theta) = \E_{x \sim P}  \partial_\theta \ell( \theta, v^*(x)),
\]
which provides first-order information about $\theta$ for the outer-loop minimization via the solution of the inner loop. 
Such an approach was investigated in \cite{sinha2018certifying},
and in view of the minimax problem \eqref{eq:wass-minimax-2}, this method ``eliminates'' the variable $T$ in computation and reduces the minimax problem into a minimization. 
 
In practice, as long as the inner loop can be solved efficiently, then one can expect the algorithm to be efficient.
However, there are two potential limitations: 
a) Finding the proximal mapping of the Moreau envelope may be numerically challenging, e.g., there can be multiple minimizers when $\gamma$ is not small. If restricted to a small $\gamma$ in usage, then the method can only handle local/small perturbations of $Q$ around $P$;
b) The method solves the mapping $T(x)$ on (training) samples only, 
and after optimization, the generation of samples from the worst-case distribution $Q$ via $T_\# P$
does not generalize to test samples. 
A related issue is that, for many problems, when the reference distribution $P$ has a continuous density, one would expect the worst-case $Q$ to be also induced from a continuous transport map $T$, and such continuity is not leveraged in this purely on-sample algorithm. 

\paragraph{Goal and contribution}
In this work, we aim to develop single-loop algorithms for solving \eqref{eq:wass-minimax-2} and establish their convergence rates, leveraging the transport map $T$-based formulation.
Our method is inspired by schemes like GDA for the vector-space minimax optimization, and  we will show that 

\vspace{5pt}
a) The proposed method can handle both small and large $\gamma$, and then allow $Q$ to deviate further from $P$. 
Our convergence theory proves finding $\varepsilon$-stationary points (Definition \ref{def:eps-stationary}) of $L(\theta, T)$ in $O(\varepsilon^{-2})$ iterations under various settings, and the rate holds for all values of $\gamma$. 
This type of result means that the algorithm will find a local saddle (stationary) point solution, say when $\gamma$ is large.  

\vspace{5pt}
b) While our theory-backed scheme is still on samples only, we propose a neural network parametrization of the continuous transport map $T$ that can be trained simultaneously with the on-sample minimax optimization. 
This neural transport map is trained using an $L^2$ matching loss in a teacher-student/distillation fashion, which is scalable to high-dimensional data. We demonstrate that our method provides a parametrized $T$ that can generalize to test samples in experiments. 

\vspace{5pt}

In our theory, the decision variable $\theta$ and the transport map $T$ play distinct and asymmetric roles, so we consider the $\min_\theta \max_T$ and the $\max_T \min_\theta$ problems (which differ) respectively and prove the convergence of the proposed schemes in different settings.
Our analysis follows the two-scale GDA framework \cite{lin2020gradient,yang2022faster}, where the ``inner" variable is always the ``fast" variable. 
In the nonconvex-strongly-concave (NC-SC) or nonconvex-PL (NC-PL) case, we will show that GDA converges at $O(\varepsilon^{-2})$ iteration complexity.

We also consider a nonconvex-nonconcave (NC-NC) case where $\theta$ is the fast variable, 
and introduce a one-sided proximal point method (PPM) which achieves $O(\varepsilon^{-2})$ iteration complexity under an interaction dominant condition. 
The PPM scheme is inspired by the analysis for the vector-space min-max problem \cite{grimmer2023landscape}, but our setting differs in that $T$ is in the $L^2$ space and the objective involves expectation over the sample distribution $P$.
In particular, we have a new notion of interaction-dominant condition after averaging over $P$. 
We will always assume that the loss function $\ell( \theta, v)$ is smooth (Assumption \ref{assump:l0-smooth-ell}),
and additional assumptions will be introduced under each setting, respectively. 

Given finite samples, the proposed algorithm can be implemented in a stochastic fashion via batches,
 both in the (on-sample) minimax iteration scheme  (which we call a ``particle optimization'')
and in the training of the neural transport map. 
In experiments, we apply the proposed GDA scheme to solve distribution robust learning on simulated and image data.
On simulated 2D data, we consider a regression loss $\ell(\theta, x)$,
and on image data (MNIST and CIFAR-10), we use cross-entropy loss involving a neural network classifier. 
The proposed GDA method numerically converges in various settings, and our model gives promising performance on generating worst-case distributions.

\subsection{Related works}

\paragraph{Minimax optimization}
The study of minimax problems dates back to von Neumann's minimax theorem for zero-sum games.
A modern convex-analytic perspective on saddle point problems emerged from the foundational work of Rockafellar \cite{rockafellar1970convex} and was later formalized through the framework of monotone variational inequalities (VIs) \cite{rockafellar1976monotone}. Within this framework, the extragradient method~\cite{korpelevich1976extragradient} has become a basic tool for solving monotone variational inequalities and convex-concave saddle-point problems. It was later extended to general settings through the mirror descent \cite{nemirovski2004prox}, and further refined via the dual extrapolation method~\cite{nesterov2007dual}.
These methods treat minimax optimization symmetrically through the lens of monotone operator theory and attain optimal $\mathcal{O}(\varepsilon^{-1})$ iteration complexity for finding the solution of convex-concave problems, matching the lower bound established by \cite{zhang2022lower}.
A closely related approach is the optimistic gradient descent--ascent (OGDA) method~\cite{rakhlin2013online,daskalakis2018training,mertikopoulos2019optimistic}, which can be interpreted as a single-call approximation of the extragradient step and achieves the same optimal rate with only one gradient evaluation per iteration. A sharp and unified analysis in~\cite{mokhtari2020convergence} established optimal $\mathcal{O}(\varepsilon^{-1})$ complexity for both extragradient and OGDA.

For nonconvex and possibly also nonconcave minimax problems, weak or local monotonicity conditions within the VI framework can still restore a form of symmetry, allowing convergence analyses via (weakly) monotone VIs~\cite{diakonikolas2021efficient,pethick2022escaping,bohm2023solving,cai2024accelerated}.
However, beyond this VI-based regime, the minimization and maximization variables become inherently asymmetric, and symmetry-based analyses no longer apply. The literature, therefore, relies on two major classes of asymmetric structural assumptions. The first class of assumptions consists of \emph{one-sided dominance conditions}, such as convexity (resp. concavity) or Polyak--\L{}ojasiewicz (PL) properties imposed on the primal (resp. dual) side. 
For nonconvex-strongly concave problems, vanilla GDA achieves an $\mathcal{O}(\varepsilon^{-2})$ iteration complexity for finding an $\varepsilon$-stationary point~\cite{lin2020gradient}, matching known first-order lower bounds~\cite{carmon2020lower,li2021complexity,zhang2021complexity}.
In contrast, for general nonconvex-concave problems, the inner maximization induces nonsmoothness of the value function, and GDA may oscillate even on bilinear instances. Using diminishing step sizes stabilizes the iterates but results in a suboptimal ${\mathcal{O}}(\varepsilon^{-6})$ complexity~\cite{jin2020local,lin2020gradient,lu2020hybrid}. To improve convergence rates, smoothing and extrapolation techniques were introduced in~\cite{xu2020unified,zhang2020single}, reducing the complexity to ${\mathcal{O}}(\varepsilon^{-4})$ for general nonconvex-concave problems.
A more structured line of nonconvex-nonconcave work imposes the one-sided PL condition~\cite{polyak1964gradient}, a classical tool for proving linear convergence in smooth minimization~\cite{karimi2016linear}. Under this assumption, \cite{nouiehed2019solving} developed a multi-step variant of GDA that explicitly solves the inner maximization more accurately, while \cite{doan2022convergence} introduced a more practical single-loop two-timescale GDA that avoids such inner loops.
Building on these ideas, \cite{yang2022faster} incorporated smoothing into the dual update and demonstrated that the resulting smoothed GDA remains effective under the PL condition, also obtaining $\mathcal{O}(\varepsilon^{-2})$ complexity as the optimal one in the nonconvex-strongly concave case. Recent developments generalize these results to the broader exponent of the PL property, allowing nonunique inner maximizers and nonsmooth objectives~\cite{li2025nonsmooth,zheng2023universal}. 

Another line of work imposes the so-called \emph{$\alpha$-interaction dominant conditions}, which characterize how the curvature of the interaction term shapes the landscape of the saddle envelope; see, e.g.,~\cite{attouch1983convergence,grimmer2023landscape}. 
Under these conditions, the behavior of the minimax problem is governed by two regimes: the interaction-dominant regime and the interaction-weak regime. 
\cite{grimmer2023landscape} analyzed the damped proximal point method and showed that in the interaction dominant regime, convergence can be obtained under a one-sided dominance condition. In contrast, in the interaction weak regime, the method achieves a local rate of $\mathcal{O}(\log(1/\varepsilon))$. 
To improve computational efficiency, \cite{hajizadeh2024linear} replaced the heavy proximal subproblem in~\cite{grimmer2023landscape} with a single damped extragradient step with the same iteration complexity.

\paragraph{Distributionally robust optimization}

Conventional DRO methods define ambiguity sets using parametric constraints, such as moment conditions \cite{bertsimas2000moment,delage2010distributionally} or deviation-based measures \cite{chen2007robust}. More recent works have shifted toward nonparametric, discrepancy-based ambiguity sets \cite{ben2013robust,namkoong2016stochastic,wang2016likelihood,mohajerin2018data,blanchet2019quantifying,gao2023distributionally}, using a tunable radius to quantify the level of uncertainty. These sets more accurately capture distributional deviations and often lead to less conservative and more computationally tractable formulations. Although the theory and applications of DRO have been extensively developed (see, e.g., \cite{shapiro2017distributionally,rahimian2022frameworks,kuhn2024distributionally}) and efficient solvers have recently been proposed \cite{liu2025dro}, comparatively less attention has been given to the explicit generation of worst-case samples. 

To construct worst-case samples, existing methods often rely on dual formulations or perturbations of nominal support points. In particular, by leveraging semi-infinite duality \cite{shapiro2001duality}, many infinite-dimensional minimax problems can be reformulated as finite, tractable programs when the reference distribution is discrete \cite{namkoong2016stochastic,mohajerin2018data}. Such strong duality results have also been extended to general optimal-transport costs \cite{blanchet2019quantifying,gao2023distributionally}, allowing the worst-case distribution to be obtained by solving a dual problem. 
\cite{sinha2018certifying,blanchet2022optimal} proposed to solve a Lagrangian-regularized, optimal-transport-based DRO formulation using stochastic gradient methods, computing pointwise perturbations in the discrete setting. 

However, discrete worst-case distributions are ill-suited for the type of worst-case sample generation we seek.
Their construction is computationally expensive--computing the Wasserstein adversary typically entails solving a large linear program that scales poorly with dataset size $n$, making such methods practical only for small $n$.
Moreover, because the worst-case distribution is supported only on the training data, it lacks generalization ability: the resulting adversarial samples are effectively confined to perturbations of training points. Existing pointwise methods \cite{sinha2018certifying} also require retraining for each new input, making them unsuitable for generating worst-case samples for unseen data.

\paragraph{Optimization in probability space}

The seminal work of Amari \cite{amari2008information,amari2016information} introduced information geometry on manifolds of probability distributions, providing a geometric foundation for optimization and convex analysis in probability space. While early developments primarily focused on continuous-time flows, more recent advances have bridged the gap to discrete-time algorithms by interpreting sampling as optimization over the space of measures \cite{wibisono2018sampling}. This line of work is closely connected to optimization equipped with Wasserstein geometry, whose foundations lie in the theory of gradient flows in metric spaces \cite{jordan1998variational,ambrosio2008gradient}. Building on these geometric insights, a variety of algorithmic frameworks have been proposed, including Stein variational methods, Wasserstein proximal and information-gradient schemes, and mirror-descent-type algorithms \cite{liu2017stein,salim2020wasserstein,wang2020information,kent2021modified,bonet2024mirror}.
In particular, \cite{kent2021modified} analyzed a Frank-Wolfe algorithm in Wasserstein space, where each linear minimization subproblem is insightfully formulated as a linear program over a Wasserstein ball, a class of problems that has been extensively studied in the DRO literature \cite{mohajerin2018data,gao2023distributionally,yue2022linear}. 

Recent works have also operationalized Wasserstein gradient flows for generative modeling, for instance by employing the JKO schemes \cite{cheng2024convergence}. 
In the context of robust optimization, \cite{xu2024flow} proposed a flow-based formulation for DRO that explicitly characterizes the worst-case distribution via transport maps. 
Our work aligns with this transport-based perspective but focuses on a minimax formulation tailored to worst-case sample generation. In a related direction, \cite{zhu2024distributionally} analyzed a regularized Wasserstein DRO formulation which directly optimizes over the distribution variable in the $\W$ space.
In contrast, our work leverages the transport-map formulation
 corresponding to the Monge formulation of the OT. 

Parallel to the deterministic map approach, another line of research tackles distributional minimax problems via mean-field games and Langevin dynamics \cite{domingo2020mean,nitanda2022convex,liu2025convergence}, relying on stochastic particle diffusion. 
Our problem here differs in that the minimax problem is over a decision variable (a vector) 
and a distributional variable (the worst-case distribution).
While our algorithm on finite data samples also utilizes particle optimization (Section \ref{sec:algorithm-practice}), 
the optimization is essentially a GDA dynamic using first-order information of the loss landscape, 
and there is no sampling nor SDE simulation.

\paragraph{Neural adversarial learning} 
Perturbing input samples--either individually or through local adversarial distributions--has long been studied in the adversarial robustness literature. 
Foundational attack methods include the fast gradient method \cite{goodfellow2014explaining} and its multi-step extensions \cite{madry2018towards}, which together motivated framing robustness as a minimax optimization problem over worst-case perturbations.
On the defense side, adversarial training \cite{madry2018towards} remains the most widely used approach. 
Subsequent works have explored enriching the inner maximization by optimizing over distributions of perturbations rather than pointwise attacks, as in adversarial distributional training \cite{dongAdversarialDistributionalTraining2020}. More recent developments introduce deep generative models, 
such as generative adversarial networks (GAN) \cite{goodfellow2014generative}
or diffusion models  \cite{ho2020denoising,song2021scorebased},
to parameterize the space of allowable perturbations and to produce semantically meaningful adversarial variations \cite{songConstructingUnrestrictedAdversarial2018a, xueDiffusionBasedAdversarialSample2023}.
Overall, these works focus on adversarial examples or local perturbation distributions around individual samples, rather than on global adversarial data distributions as considered in DRO. 

Deep models have also been used under the DRO framework,
and several works parametrize the adversarial (worst-case) distributions by neural networks.
For example,  \cite{michelModelingSecondPlayer2021a} represents the adversarial density via a neural generative model obtained by relaxing the inner optimization problem, thereby allowing more flexible ambiguity sets than classical analytical choices such as $f$-divergence balls; \cite{wenDistributionallyRobustOptimization2025} introduces diffusion-model-based ambiguity sets, in which the worst-case distribution is modeled through a diffusion generative process, and analyzes the convergence of the resulting algorithm.
In contrast, in this work, we use a neural network only as an implementation tool to approximate the transport map for out-of-sample generation, 
while the theoretical formulation and convergence analysis are carried out entirely for the Wasserstein uncertainty set without any neural parametrization.
Our analysis takes place in the functional space of distributions (via transport maps), which allows us to establish convergence guarantees for the minimax optimization in Wasserstein space without incurring neural-network-specific complications such as approximation error, architectural bias, or nonconvex training dynamics.

\section{Min-max problem with $T$ fast}\label{sec:T-fast}

In this section, we specify the GDA-type update of the minimax problem $\min_{\theta} \max_{T} L(\theta, T)$ as in \eqref{eq:wass-minimax-2}, and prove the convergence of the GDA scheme.
Under an NC-PL type assumption, GDA finds an $\varepsilon$-stationary point  (Definition \ref{def:eps-stationary})
in $O(\varepsilon^{-2})$ iterations. 
All proofs are in Appendix \ref{app:proofs}.

\subsection{The GDA update}

We want to solve the minimax problem \eqref{eq:wass-minimax-2}, and recall that 
\[
L(\theta, T)
 = \E_{x \sim P} \big[  \ell( \theta, T(x)) - \frac{1}{ 2 \gamma} \| T(x) - x\|^2  \big]. 
\]
To develop a first-order approach to solve the minimax problem, we define the gradient of the variable $\theta$ and $T$ of the objective $L$. 
We assume that the loss function $\ell(\theta, v)$ is differentiable on $\R^p \times \R^d$, see Assumption \ref{assump:l0-smooth-ell} below.
Since $\theta$ is a vector, the gradient of $L$ with respect to $\theta$ is straight forward
\begin{equation}\label{eq:partial_theta_L}
\partial_\theta L( \theta, T) =  \E_{x \sim P} \partial_\theta  \ell( \theta, T(x));
\end{equation}
The gradient with respect to the $L^2$ vector field $T$ calls for considering the metric in the functional space $L^2(P)$, and here a natural choice is the $L^2$ distance. 
We use the notation $\partial_T$ to stand for this functional derivative, and we have 
\begin{equation}\label{eq:partial_T_L}
\partial_T L( \theta, T) (x) =   \partial_v  \ell( \theta, T(x)) - \frac{1}{\gamma} (T(x) - x), \quad P-a.s.
\end{equation}
namely $\partial_T L( \theta, T)$ is also a vector field in $L^2(P)$.
Then, starting from some initial value $(\theta_0, T_0)$, we have the GDA scheme
\begin{equation}\label{eq:GDA-1}
\begin{cases}
\theta_{k+1}  	\leftarrow  \theta_k  - \tau \partial_\theta L( \theta_k, T_k), \\
T_{k+1} 		\leftarrow T_k 	  	+  \eta \partial_T L( \theta_k, T_k), 
\end{cases}
\end{equation}
where $\tau, \eta >0$ are the step sizes.
Substituting the expressions of the gradients \eqref{eq:partial_theta_L}\eqref{eq:partial_T_L}, we have
\begin{equation}\label{eq:GDA-2}
\begin{cases}
\theta_{k+1}  	\leftarrow  \theta_k  - \tau \E_{x \sim P} \partial_\theta  \ell( \theta_k, T_k(x)),\\
T_{k+1}(x) 	\leftarrow T_k(x) 	 +  \eta \big[  \partial_v  \ell( \theta_k, T_k(x)) - \frac{1}{\gamma} (T_k(x) - x) \big]. 
\end{cases}
\end{equation}

\subsection{Assumptions and Polyak-{\L}ojasiewicz condition in $T$}\label{subsec:T-fast}

We will first assume that the loss function $\ell(\theta, x)$ is smooth (meaning having Lipschitz gradient) over the Euclidean space.

\begin{assumption}[$l_0$-smoothness of $\ell$]\label{assump:l0-smooth-ell}
$\ell( \theta, v)$ is $C^1$ on $\R^p \times \R^d$ and  $l_0$-smooth, i.e.,
$\nabla \ell $ is $l_0$-Lipschitz: 
denote $z = (\theta , v)$ and
$\nabla \ell = ( \partial_\theta \ell, \partial_v \ell )$, 
\begin{equation}
\| \nabla \ell( \tilde z) - \nabla \ell( z) \|
\le l_0 \| \tilde z - z \|,
\quad \forall \tilde z, z \in \R^p \times \R^d.
\end{equation}    
\end{assumption}
We will study the convergence of the GDA scheme \eqref{eq:GDA-2} or equivalently \eqref{eq:GDA-1}.
Define 
\[ l: = l_0 + 1/\gamma.
\]
We first establish that $L$ as a functional on $\R^p \times L^2(P)$ is $l$-smooth.
Technically, we introduce the following definition of $\alpha$-smoothness on $ \R^p \times L^2(P) $ in a coordinate sense.

\begin{definition}\label{def:coord-L-smoothness-M}
For $\alpha \ge 0$,
we say a differentiable functional $M(\theta, T)$ is coordinate $\alpha$-smooth on $ \R^p \times L^2(P) $ if 
\begin{align*}
 \| \partial_\theta M (\tilde \theta, T) - \partial_\theta M (\theta, T)  \| & \le \alpha \| \tilde \theta - \theta\|,  
 	& \forall T \in L^2(P), \forall \tilde \theta, \theta \in \R^p, \\
 \| \partial_\theta M ( \theta, \tilde T) - \partial_\theta M (\theta, T)  \| & \le \alpha \| \tilde T - T\|_{L^2(P)}, 
 	& \forall \tilde T, T \in L^2(P), \forall  \theta \in \R^p,  \\
 \| \partial_T M (\tilde \theta, T) - \partial_T M (\theta, T)  \|_{L^2(P)} & \le \alpha \| \tilde \theta - \theta\|, 
 	&  \forall  T \in L^2(P), \forall \tilde \theta, \theta \in \R^p,  \\
 \| \partial_T M ( \theta, \tilde T) - \partial_T M (\theta, T)  \|_{L^2(P)} & \le \alpha \| \tilde T - T\|_{L^2(P)},
 	& \forall \tilde T, T \in L^2(P), \forall  \theta \in \R^p.
\end{align*}
\end{definition}

\begin{remark}\label{rk:l-smoothness}
It is possible to define $\alpha$-smoothness of $M$ via  requiring 
\begin{equation}\label{eq:def-L-smoothness}
\|
 \begin{bmatrix}
 \partial_\theta M ( \tilde \theta, \tilde T ) -  \partial_\theta M ( \theta, T )  \\
 \partial_T M ( \tilde \theta, \tilde T ) -  \partial_T M (  \theta, T) 
\end{bmatrix}\|
\le 
 \alpha
\| \begin{bmatrix}
\tilde \theta - \theta \\
\tilde T- T 
\end{bmatrix}\|, 
\quad 
\forall \tilde T, T \in L^2(P), \forall \tilde \theta, \theta \in \R^p,
\end{equation}
where on $ \R^p  \times L^2(P) $ we define the concatenated 2-norm as
$
 \|
 \begin{bmatrix}
\theta \\
 T
\end{bmatrix}\|
: = (  \| \theta \|^2 + \| T\|_{L^2(P)}^2  )^{1/2}.
$
It is direct to see that $\alpha$-smoothness of $M$ implies coordinate $\alpha$-smoothness,
and on the other hand,
one can verify that when $M$ is coordinate $\alpha$-smooth then it is also $(2\alpha)$-smooth. 
Thus, the notion of coordinate $\alpha$-smoothness in Definition \ref{def:coord-L-smoothness-M} is equivalent to the $\alpha$-smoothness in \eqref{eq:def-L-smoothness}, up to a constant.
\end{remark}

\begin{lemma}\label{lemma:theta-fast-lsmooth-H}
Under Assumption \ref{assump:l0-smooth-ell}, 
$L( \theta, T)$  is  coordinate  $l$-smooth on $\R^p \times L^2(P) $.
\end{lemma}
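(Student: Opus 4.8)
The plan is to verify the four coordinate-wise Lipschitz inequalities of Definition~\ref{def:coord-L-smoothness-M} directly from the explicit formulas \eqref{eq:partial_theta_L} and \eqref{eq:partial_T_L} for the functional derivatives of $L$, using only the $l_0$-smoothness of $\ell$ from Assumption~\ref{assump:l0-smooth-ell} and the elementary fact that for a map $x \mapsto g(x)$, $\|\E_{x\sim P} g(x)\| \le (\E_{x\sim P}\|g(x)\|^2)^{1/2}$ by Jensen / Cauchy--Schwarz.

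First I would handle the two inequalities involving $\partial_\theta L$. Since $\partial_\theta L(\theta,T) = \E_{x\sim P}\,\partial_\theta \ell(\theta,T(x))$, for fixed $T$ and varying $\theta$ we get $\|\partial_\theta L(\tilde\theta,T)-\partial_\theta L(\theta,T)\| \le \E_{x\sim P}\|\partial_\theta\ell(\tilde\theta,T(x))-\partial_\theta\ell(\theta,T(x))\| \le l_0\|\tilde\theta-\theta\|$, where the last step uses that $z\mapsto\nabla\ell(z)$ is $l_0$-Lipschitz and that perturbing only the $\theta$-coordinate changes $z$ by exactly $\|\tilde\theta-\theta\|$; note $l_0 \le l$. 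For fixed $\theta$ and varying $T$, $\|\partial_\theta L(\theta,\tilde T)-\partial_\theta L(\theta,T)\| \le \E_{x\sim P}\|\partial_\theta\ell(\theta,\tilde T(x))-\partial_\theta\ell(\theta,T(x))\| \le l_0\,\E_{x\sim P}\|\tilde T(x)-T(x)\| \le l_0\,(\E_{x\sim P}\|\tilde T(x)-T(x)\|^2)^{1/2} = l_0\|\tilde T-T\|_{L^2(P)}$, again bounded by $l\|\tilde T-T\|_{L^2(P)}$.

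Next I would handle the two inequalities involving $\partial_T L$. From \eqref{eq:partial_T_L}, $\partial_T L(\theta,T)(x) = \partial_v\ell(\theta,T(x)) - \tfrac{1}{\gamma}(T(x)-x)$. For fixed $T$ and varying $\theta$, the quadratic term cancels, so $\|\partial_T L(\tilde\theta,T)-\partial_T L(\theta,T)\|_{L^2(P)}^2 = \E_{x\sim P}\|\partial_v\ell(\tilde\theta,T(x))-\partial_v\ell(\theta,T(x))\|^2 \le l_0^2\|\tilde\theta-\theta\|^2$, giving the bound with constant $l_0\le l$. For fixed $\theta$ and varying $T$, the quadratic term contributes $-\tfrac{1}{\gamma}(\tilde T(x)-T(x))$, so pointwise $\|\partial_T L(\theta,\tilde T)(x)-\partial_T L(\theta,T)(x)\| \le \|\partial_v\ell(\theta,\tilde T(x))-\partial_v\ell(\theta,T(x))\| + \tfrac{1}{\gamma}\|\tilde T(x)-T(x)\| \le (l_0+\tfrac{1}{\gamma})\|\tilde T(x)-T(x)\| = l\|\tilde T(x)-T(x)\|$; squaring and taking $\E_{x\sim P}$ yields $\|\partial_T L(\theta,\tilde T)-\partial_T L(\theta,T)\|_{L^2(P)} \le l\|\tilde T-T\|_{L^2(P)}$.

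There is no real obstacle here; the only points requiring minor care are (i) the interchange of $\E_{x\sim P}$ with the norm/Lipschitz bounds, handled by the triangle inequality in $L^2$ and Jensen, and (ii) keeping track that the constant is $l = l_0 + 1/\gamma$, which is the worst of the four cases (the one arising from $\partial_T L$ in the $T$-direction), so all four inequalities hold with the single constant $l$. One should also remark that \eqref{eq:partial_T_L} holds $P$-a.s., so all the pointwise manipulations above are understood $P$-a.s., which suffices since the $L^2(P)$ norm only sees $P$-a.e. equivalence classes.
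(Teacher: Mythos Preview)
Your proposal is correct and follows essentially the same approach as the paper: both verify the four coordinate-wise Lipschitz inequalities directly from the explicit formulas for $\partial_\theta L$ and $\partial_T L$, using the $l_0$-smoothness of $\ell$ pointwise and then passing through the expectation via Jensen/triangle inequality in $L^2(P)$. The only cosmetic difference is that for the $\partial_T L$-in-$T$ case you apply the triangle inequality pointwise before taking the $L^2(P)$ norm, whereas the paper applies the triangle inequality in $L^2(P)$ first; both routes yield the same constant $l_0 + 1/\gamma = l$.
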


We now come to the Polyak-{\L}ojasiewicz (PL) condition in $T$.
Pointwisely for $P$-a.s.  $x$, consider
\[ 
\max_v h(\theta, v; x): = \ell(\theta, v) - \frac{1}{2\gamma} \| v-x\|^2.
\]

\begin{assumption}[$\mu$-PL in $T$]\label{assump:PL-ptwise-in-T}
 For $\mu >0$, 
$\forall \theta \in \R^p$, 
for $P$-a.s.  $x$,
$\max_v h(\theta, v; x) = h^*(\theta;x)$ is finite and obtained at a unique maximizer $v^*(\theta; x)$,
and the maximization is $\mu$-PL in $v$, i.e.,
\[
\frac{1}{2}\| \nabla_v h(\theta, v; x) \|^2 
\ge \mu ( h^*(\theta;x)-h(\theta, v; x)),  
\quad \forall v \in \R^d.
\]
\end{assumption}

Here, we introduce a point-wise PL condition and assume a unique maximizer $v^*(\theta; x)$.
See Section \ref{sec:discuss} for a discussion on the rationale of this assumption and possible relaxations.
 Our purpose here is to define $v^*(\theta; x)$ as the maximizer $T^*(\theta)$ (Lemma \ref{lemma:T-fast-Tstar-phi}),
 and in practice we will use a neural network to parametrize the transport map $T$.
This motivates us to focus on the case that $T^*(\theta)$ as a mapping from $\R^d \to \R^d$ is regular,
 and then it is most convenient to assume the unique maximizer and the PL condition in the pointwise sense.
 We thus keep the stronger assumption to illustrate a representative case
 with simple exposition.

The constant $\mu$ in the PL condition potentially depends on $\gamma$.
We define the {\it condition number} $\kappa := l / \mu$,
which then also depends on $\gamma$.
Note that $h(\theta, v; x)$ has  a Lipschitz-continuous gradient with respect to $v$,
and then 
the pointwise $\mu$-PL condition in $v$ as in Assumption \ref{assump:PL-ptwise-in-T} implies the pointwise Error-Bound (EB) \cite[Theorem 2]{karimi2016linear}:
$\forall \theta \in \R^p$,  for $P$-a.s.  $x$,
\begin{flalign}\label{eq:EB-h-theta-v-x}
~~
\rm{(EB)}_{\mathit{v}}
~~~~~~~~~~~~~~~~~~~~~~~~~~~~
&
\| \nabla_v h(\theta, v; x) \| \ge \mu \| v - v^*(\theta; x)\|,  \quad \forall v \in \R^d.
&
\end{flalign}
The inner-maximum  $T$ is then identified as $T^*(\theta)(x) = v^*(\theta; x)$.
We also define 
\[ \phi(\theta) := \max_{T} L(\theta, T),
\]
and $ \phi(\theta) = L(\theta, T^*(\theta))$.
The following lemma summarizes the properties of $v^*(\theta; x)$ and $T^*(\theta)$.
\begin{lemma}\label{lemma:T-fast-Tstar-phi}
Under Assumptions \ref{assump:l0-smooth-ell},\ref{assump:PL-ptwise-in-T},  we have the following

i) For any $\theta \in \R^p$, 
$T^*(\theta)(x) = v^*(\theta; x)$ is 
$(\gamma \mu)^{-1}$-Lipschitz in $x$ (over a $P$-a.s. set) and is 
the unique solution in $L^2(P)$ 
of $\max_{T} L(\theta, T)$. In addition,  
\begin{flalign}\label{eq:EB-T}
~~
\rm{(EB)}_{\mathit{T}}
~~~~~~~~~~~~~~~~~~~~~~~~~~~~
&
\| \partial_T L(\theta, T)\|_{L^2(P)}
\ge 
\mu \| T - T^*(\theta) \|_{L^2(P)},
\quad \forall T \in L^2(P).
&
\end{flalign}

ii)  $\| T^*(\tilde \theta ) - T^*(\theta ) \|_{L^2(P)} \le \kappa \| \tilde \theta - \theta \|$,
 $\forall \tilde \theta, \theta \in \R^p$.

iii) $\phi$ is differentiable and $\partial \phi(\theta)  = \partial_\theta L(\theta, T^*(\theta))$.
In addition, 
let $L_1 = l (1+\kappa)$, $\phi(\theta)$ is $L_1$-smooth, 
that is, $\| \partial \phi (\tilde \theta) - \partial \phi ( \theta) \| \le L_1 \| \tilde \theta -\theta\|$,
$\forall \tilde \theta, \theta \in \R^p$.
\end{lemma}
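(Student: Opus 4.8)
The plan is to establish the three parts of Lemma~\ref{lemma:T-fast-Tstar-phi} in order, treating part (i) as the foundational step, then deriving (ii) from the error bound, and finally (iii) by combining the previous parts with the chain rule and coordinate smoothness from Lemma~\ref{lemma:theta-fast-lsmooth-H}.

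For part (i), I would first recall that under Assumption~\ref{assump:PL-ptwise-in-T}, for $P$-a.s.\ $x$ the function $v \mapsto h(\theta,v;x)$ has a unique maximizer $v^*(\theta;x)$ and satisfies $\mu$-PL in $v$; since $h$ also has $l$-Lipschitz gradient in $v$, the quantitative equivalence \cite[Theorem 2]{karimi2016linear} gives the pointwise error bound $\rm{(EB)}_{\mathit{v}}$ already displayed in \eqref{eq:EB-h-theta-v-x}. To get the Lipschitz constant $(\gamma\mu)^{-1}$ of $x \mapsto v^*(\theta;x)$, I would use the first-order optimality condition $\partial_v \ell(\theta, v^*(\theta;x)) = \frac{1}{\gamma}(v^*(\theta;x) - x)$, rewrite it as $\nabla_v h(\theta, v^*(\theta;x); x) = 0$, and then for two points $x, \tilde x$ apply $\rm{(EB)}_{\mathit{v}}$ at $v = v^*(\theta;\tilde x)$ with reference $x$: this yields $\mu\|v^*(\theta;\tilde x) - v^*(\theta;x)\| \le \|\nabla_v h(\theta, v^*(\theta;\tilde x); x)\| = \|\nabla_v h(\theta, v^*(\theta;\tilde x); x) - \nabla_v h(\theta, v^*(\theta;\tilde x); \tilde x)\| = \frac{1}{\gamma}\|x - \tilde x\|$, since the only $x$-dependence of $\nabla_v h$ is through the $-\frac{1}{\gamma}(v-x)$ term. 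Lipschitzness and square-integrability of $P$ then place $T^*(\theta)$ in $L^2(P)$, and because $\partial_T L(\theta,T)(x) = \nabla_v h(\theta, T(x); x)$ pointwise, integrating the square of $\rm{(EB)}_{\mathit{v}}$ against $P$ (with $v = T(x)$, $v^*(\theta;x) = T^*(\theta)(x)$) gives $\rm{(EB)}_{\mathit{T}}$ in \eqref{eq:EB-T}; uniqueness of the $L^2(P)$ maximizer follows from the pointwise uniqueness together with the fact that any maximizer must satisfy the pointwise optimality condition $P$-a.s.

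For part (ii), I would differentiate (or difference) the optimality condition in $\theta$: for fixed $x$, $v^*(\theta;x)$ solves $\nabla_v h(\theta, v; x) = 0$, so for two decision vectors $\theta, \tilde\theta$ I apply $\rm{(EB)}_{\mathit{v}}$ at $v = v^*(\tilde\theta;x)$ with reference parameter $\theta$: this gives $\mu\|v^*(\tilde\theta;x) - v^*(\theta;x)\| \le \|\nabla_v h(\theta, v^*(\tilde\theta;x); x)\| = \|\nabla_v h(\theta, v^*(\tilde\theta;x); x) - \nabla_v h(\tilde\theta, v^*(\tilde\theta;x); x)\| \le l_0 \|\tilde\theta - \theta\|$ by Assumption~\ref{assump:l0-smooth-ell} (the quadratic term has no $\theta$-dependence, so only $\partial_v\ell$ contributes and its Lipschitz constant in $\theta$ is $l_0 \le l$). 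Squaring and integrating against $P$ yields $\|T^*(\tilde\theta) - T^*(\theta)\|_{L^2(P)} \le (l/\mu)\|\tilde\theta - \theta\| = \kappa\|\tilde\theta - \theta\|$, which is the claimed bound (the slightly loose replacement of $l_0$ by $l=l_0+1/\gamma$ is harmless and consistent with the paper's $\kappa = l/\mu$).

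For part (iii), differentiability of $\phi$ with $\partial\phi(\theta) = \partial_\theta L(\theta, T^*(\theta))$ is a Danskin-type statement; I would justify it by noting $T^*(\theta)$ is the unique maximizer and the map $\theta \mapsto T^*(\theta)$ is Lipschitz (part (ii)), so the envelope theorem applies and the contribution of the $T$-derivative vanishes because $\partial_T L(\theta, T^*(\theta)) = 0$ in $L^2(P)$. For $L_1$-smoothness, I estimate $\|\partial\phi(\tilde\theta) - \partial\phi(\theta)\| = \|\partial_\theta L(\tilde\theta, T^*(\tilde\theta)) - \partial_\theta L(\theta, T^*(\theta))\|$, insert the intermediate term $\partial_\theta L(\tilde\theta, T^*(\theta))$, and bound the two pieces using coordinate $l$-smoothness of $L$ (Lemma~\ref{lemma:theta-fast-lsmooth-H}): the first piece is $\le l\|T^*(\tilde\theta) - T^*(\theta)\|_{L^2(P)} \le l\kappa\|\tilde\theta-\theta\|$ and the second is $\le l\|\tilde\theta - \theta\|$, giving $L_1 = l(1+\kappa)$. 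The main obstacle I anticipate is the rigorous justification of the envelope/Danskin step in the infinite-dimensional $L^2(P)$ setting — specifically verifying that $\theta \mapsto T^*(\theta)$ being Lipschitz-in-$L^2(P)$ together with coordinate smoothness of $L$ is enough to differentiate $\phi$ and discard the cross term; everything else reduces to careful but routine applications of the pointwise error bound followed by integration against $P$.
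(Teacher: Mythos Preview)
Your proposal is correct and follows essentially the same route as the paper's proof: the Lipschitz-in-$x$ bound via $(\mathrm{EB})_v$ and the explicit $\frac{1}{\gamma}(x-\tilde x)$ difference, the $L^2$ error bound by integrating the pointwise one, the $\kappa$-Lipschitz of $T^*(\theta)$ from the error bound plus smoothness of $\nabla_v h$ in $\theta$, and the Danskin/envelope argument followed by the triangle-inequality split for $L_1$-smoothness are all exactly what the paper does. The only cosmetic differences are that the paper proves uniqueness via the quadratic-growth consequence of PL rather than your direct ``nonnegative integrand with zero integral'' argument, and it carries out part (ii) at the $L^2(P)$ level using \eqref{eq:EB-T} rather than pointwise; your flagged concern about Danskin in infinite dimensions is moot here, since the paper (and you) can apply the finite-dimensional envelope theorem to $h(\theta,v;x)$ for each $x$ and then differentiate under the expectation.
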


\subsection{Convergence rate of GDA}

Following the framework of two-scale GDA \cite{lin2020gradient,yang2022faster}, in problem \eqref{eq:wass-minimax-2} $T$ is the ``inner'' variable and will be the fast variable. 
Our setting corresponds to an NC-PL type assumption on $L(\theta, T)$. 

To prove the convergence, we utilize some intermediate estimates, particularly the descent of the primal $L$ and that of $\phi$. 
The latter will follow from the Lipschitz condition of $\partial \phi$ proved in Lemma \ref{lemma:T-fast-Tstar-phi} iii).
For the former, the  coordinate  $l$-smoothness implies lower bounds of coordinate descent of $L$ over the iterations,
which is summarized in the next lemma.
We use the notation of a general functional $M$.

\begin{lemma}\label{lemma:desdent-coordinate-M}
Suppose $M(\theta, T)$ is   coordinate  $l$-smooth on $\R^p \times L^2(P)$, then

i) $\forall T \in L^2(P)$, $\forall \theta, \tilde \theta \in \R^p$, 
$| M( \tilde \theta, T) - M(   \theta, T) - \partial_\theta M(\theta, T) \cdot (\tilde \theta - \theta) |
 \le \frac{l}{2} \| \tilde \theta - \theta\|^2 $;
 
ii)  $\forall \theta \in \R^p$,   $\forall T, \tilde T \in L^2(P)$,
$| M(  \theta, \tilde T) - M(   \theta, T) - \langle \partial_T M(\theta, T), \tilde T - T \rangle_{L^2(P)} |
 \le \frac{l}{2} \| \tilde T - T \|_{L^2(P)}^2 $.
\end{lemma}

We introduce the notion of $\varepsilon$-stationary point.

\begin{definition}\label{def:eps-stationary}
For differentiable function $f(x)$ on $\R^n$, $x$ is an $\varepsilon$-stationary point of $f$ if $\| \partial f(x) \| \le \varepsilon$.
For function $F(x,y)$ on $\R^p \times \R^n$, $(x,y)$ is an $(\varepsilon, \varepsilon')$-stationary point of $F$ if 
$\| \partial_x F(x,y) \| \le \varepsilon$ and $\| \partial_y F(x,y) \| \le \varepsilon'$.
\end{definition}

The main theorem of this section below shows that GDA not only finds an $\varepsilon$-stationary point of $\phi$,
but  it is also an  $(\sqrt{2}\varepsilon, \varepsilon/\kappa)$-stationary point of $L$.

\begin{theorem}[$T$ fast NC-PL]
\label{thm:rate-T-fast}
Under Assumptions \ref{assump:l0-smooth-ell},\ref{assump:PL-ptwise-in-T}, 
suppose $\phi(\theta)= \max_{T \in L^2(P)} L(\theta, T)$ has a finite lower bound.
Then, for some $\eta \sim 1/l$, $\tau \sim 1/\kappa^2 l$, 
the GDA  scheme \eqref{eq:GDA-1}
finds $\theta_k$ 
for some $k \le K = O(\kappa^2 l /\varepsilon^2)$
s.t. 
i) $ \| \partial \phi(\theta_k) \| \le \varepsilon$,
and 
ii)  $\| \partial_\theta L ( \theta_k, T_k) \|\le \sqrt{2} \varepsilon$,
$\| \partial_T L ( \theta_k, T_k) \|_{L^2(P)}  \le \varepsilon/\kappa$.
\end{theorem}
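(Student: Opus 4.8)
The plan is to follow the standard two-scale GDA analysis (in the style of \cite{lin2020gradient,yang2022faster}) but carried out in the Hilbert space $\R^p \times L^2(P)$, using the ingredients already assembled: coordinate $l$-smoothness of $L$ (Lemma \ref{lemma:theta-fast-lsmooth-H}), the error bound $\rm{(EB)}_{\mathit{T}}$ and $L_1$-smoothness of $\phi$ (Lemma \ref{lemma:T-fast-Tstar-phi}), and the descent inequalities (Lemma \ref{lemma:desdent-coordinate-M}). The key object to control is the ``distance to the inner optimum'' $\Delta_k := \| T_k - T^*(\theta_k) \|_{L^2(P)}^2$, and a Lyapunov function of the form $V_k := \phi(\theta_k) - \inf \phi + c\, \Delta_k$ for a suitable constant $c$.

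\textbf{Step 1: primal descent.} Using $L_1$-smoothness of $\phi$ (Lemma \ref{lemma:T-fast-Tstar-phi} iii)) and the identity $\partial\phi(\theta_k) = \partial_\theta L(\theta_k, T^*(\theta_k))$, I would expand $\phi(\theta_{k+1})$ along the $\theta$-update. Since the update uses $\partial_\theta L(\theta_k, T_k)$ rather than $\partial_\theta L(\theta_k, T^*(\theta_k))$, the discrepancy is bounded via coordinate $l$-smoothness in $T$ by $l \| T_k - T^*(\theta_k)\|_{L^2(P)} = l\sqrt{\Delta_k}$. This yields an inequality of the shape $\phi(\theta_{k+1}) \le \phi(\theta_k) - \frac{\tau}{2}\|\partial\phi(\theta_k)\|^2 + O(\tau l^2)\Delta_k$, provided $\tau \lesssim 1/L_1 \sim 1/(\kappa l)$.

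\textbf{Step 2: contraction of $\Delta_k$.} I would write $\Delta_{k+1} = \| T_{k+1} - T^*(\theta_{k+1})\|_{L^2(P)}^2$, split it via a triangle inequality plus Young's inequality into $\| T_{k+1} - T^*(\theta_k)\|_{L^2(P)}^2$ and $\| T^*(\theta_{k+1}) - T^*(\theta_k)\|_{L^2(P)}^2$; the latter is $\le \kappa^2\|\theta_{k+1}-\theta_k\|^2 = \kappa^2\tau^2\|\partial_\theta L(\theta_k,T_k)\|^2$ by Lemma \ref{lemma:T-fast-Tstar-phi} ii). For the former, the $T$-update is a gradient ascent step on the $\mu$-PL, $l$-smooth function $T \mapsto L(\theta_k, T)$; the standard PL contraction (via $\rm{(EB)}_{\mathit{T}}$ and the descent Lemma \ref{lemma:desdent-coordinate-M} ii)) gives $\| T_{k+1}-T^*(\theta_k)\|_{L^2(P)}^2 \le (1 - \Theta(\eta\mu))\Delta_k$ for $\eta \sim 1/l$. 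Combining, $\Delta_{k+1} \le (1-\Theta(\eta\mu))\Delta_k + O(\kappa^2\tau^2 l_0^2)(\|\partial\phi(\theta_k)\|^2 + l^2\Delta_k)$, where I used $\|\partial_\theta L(\theta_k,T_k)\| \le \|\partial\phi(\theta_k)\| + l\sqrt{\Delta_k}$.

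\textbf{Step 3: telescoping the Lyapunov function.} Choosing $c$ so that the $\Delta_k$ coefficient coming from Step 1's error term is dominated by the contraction slack in Step 2, and choosing $\tau \sim 1/(\kappa^2 l)$ small enough that the $\kappa^2\tau^2$ terms are absorbed, I get $V_{k+1} \le V_k - \Theta(\tau)\|\partial\phi(\theta_k)\|^2 - \Theta(c\eta\mu)\Delta_k$. Summing over $k=0,\dots,K-1$ and using $V_K \ge 0$ (finite lower bound on $\phi$) yields $\min_{k<K}\|\partial\phi(\theta_k)\|^2 \le V_0/(\Theta(\tau)K)$, so $\|\partial\phi(\theta_k)\| \le \varepsilon$ for some $k$ once $K = O(V_0/(\tau\varepsilon^2)) = O(\kappa^2 l/\varepsilon^2)$; the same sum also forces $\Delta_k \lesssim \varepsilon^2/(\kappa^2 l^2)$ at that index (or a nearby one — one should be slightly careful to extract a single index where both are small, e.g.\ by summing a combined quantity). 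Then part ii) follows: $\|\partial_T L(\theta_k,T_k)\|_{L^2(P)} = \|\tfrac{1}{\eta}(T_{k+1}-T_k)\|$, or more directly from $l$-smoothness $\|\partial_T L(\theta_k,T_k)\|_{L^2(P)} \le l\sqrt{\Delta_k} \le \varepsilon/\kappa$ (up to reconciling constants), and $\|\partial_\theta L(\theta_k,T_k)\| \le \|\partial\phi(\theta_k)\| + l\sqrt{\Delta_k} \le \varepsilon + \varepsilon/\kappa \le \sqrt{2}\varepsilon$ after adjusting the target thresholds.

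\textbf{Main obstacle.} The delicate part is the bookkeeping of constants in Step 2–3: ensuring that the coupling term $O(\kappa^2\tau^2 l^2)\Delta_k$ and the cross term $O(\kappa^2\tau^2 l^2)\|\partial\phi(\theta_k)\|^2$ are genuinely absorbed by choosing $\tau \sim 1/(\kappa^2 l)$ (not merely $1/(\kappa l)$), so that the net per-step decrease in $V_k$ stays proportional to $\tau\|\partial\phi(\theta_k)\|^2$ with the right $\kappa$-dependence matching the claimed $O(\kappa^2 l/\varepsilon^2)$ rate — and then extracting a single iterate $k$ at which \emph{both} $\|\partial\phi(\theta_k)\|$ is small \emph{and} $\Delta_k$ is small, which requires the telescoped inequality to control both quantities simultaneously rather than on average separately. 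Verifying that the $L^2(P)$ geometry behaves exactly like finite-dimensional Euclidean geometry for all these steps (inner products, Young's inequality, the PL-to-EB passage) is routine given the setup but should be stated.
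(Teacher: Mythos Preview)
Your overall strategy---two-scale GDA with a Lyapunov combining $\phi$ and a measure of inner suboptimality---matches the paper's. But Step 2 contains a genuine gap: under a PL condition alone, gradient ascent does \emph{not} give the distance contraction $\|T_{k+1}-T^*(\theta_k)\|_{L^2(P)}^2 \le (1-\Theta(\eta\mu))\Delta_k$ that you claim. PL (with smoothness) yields linear convergence of the \emph{function-value gap}, i.e.\ $\phi(\theta_k)-L(\theta_k,T_{k+1}) \le (1-\Theta(\eta\mu))(\phi(\theta_k)-L(\theta_k,T_k))$, and EB/QG relate this back to distance only up to a factor of $\kappa$: chaining QG and smoothness gives at best $\|T_{k+1}-T^*(\theta_k)\|^2 \le \kappa(1-\eta\mu)\Delta_k$, which for $\kappa>1$ is not a contraction. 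The Lyapunov $V_k=\phi(\theta_k)-\phi^*+c\Delta_k$ you propose is the standard NC--SC device (strongly concave inner problem, as in \cite{lin2020gradient}); it does not close under NC--PL.

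The paper avoids this by replacing $\Delta_k$ with the function-value gap: its Lyapunov is $V_k=\phi(\theta_k)+\alpha\bigl(\phi(\theta_k)-L(\theta_k,T_k)\bigr)$ with $\alpha=1/8$. Concretely, it first proves a \emph{primal ascent} inequality
\[
L(\theta_{k+1},T_{k+1})-L(\theta_k,T_k) \;\ge\; \tfrac{\eta}{4}\|\partial_T L(\theta_k,T_k)\|_{L^2(P)}^2 - \tfrac{7\tau}{4}\|\partial_\theta L(\theta_k,T_k)\|^2,
\]
and controls the mismatch $\delta_k=\partial_\theta L(\theta_k,T_k)-\partial\phi(\theta_k)$ directly via \eqref{eq:EB-T} as $\|\delta_k\|\le\kappa\|\partial_T L(\theta_k,T_k)\|_{L^2(P)}$---bypassing $\Delta_k$ entirely. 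The telescoped inequality then controls $\|\partial\phi(\theta_k)\|^2+2\kappa^2\|\partial_T L(\theta_k,T_k)\|_{L^2(P)}^2$ as a single summand, so a single index where this sum is $\le\varepsilon^2$ exists, resolving your ``extract a single iterate'' concern automatically. Your Steps 1 and 3 and the derivation of ii) from i) are otherwise on the right track; the fix is simply to swap $\Delta_k$ for $\phi(\theta_k)-L(\theta_k,T_k)$ (equivalently, track $\|\partial_T L\|_{L^2(P)}^2$ rather than $\|T_k-T^*(\theta_k)\|_{L^2(P)}^2$).
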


The choice of $\eta, \tau$ as well as the constant factor in the big-O notation will be clarified in the proof.
In particular, the constant in big-O  equals $ \Delta_\phi$ multiplied by an absolute constant,
where  $\Delta_\phi = (\phi(T_0) - \phi^*) + \frac{1}{8}(\phi(\theta_0) - L(\theta_0, T_0))$ depends on the initial value of the iterations, $\phi^*$ being the lower bound of $\phi$.

As shown in Theorem \ref{thm:rate-T-fast} and also later theorems in Section \ref{sec:theta-fast} when $\theta$ is the fast variable, our theoretical choice of step sizes $\eta$ and $\tau$ also sheds light on the choice in practice:
with $1/l = 1/( 1/\gamma + l_0)$,
the fast variable will have a step size $1/l$
and the slow variable will have step size $\lambda/l$, where $\lambda < 1$ is theoretically proportional to $1/\kappa^2$. 
If there is an estimate of $l_0$ of the loss function, one can insert that together with the value $\gamma$ in the expression of $1/l$. This is useful when one is to solve the same robust learning problem with multiple values of $\gamma$, corresponding to different radii of the Wasserstein uncertainty ball. We also note that empirically, the GDA may converge for a range of values of $(\eta, \tau)$, and the decrease of the gradient norm of the ``slow variable" (for which we set the smaller step size) is not necessarily slower.
This suggests that the fast/slow variable separation could be a theoretical construct, and practical GDA may converge without adhering to a strict two-scale regime.

\begin{remark}[Small and large $\gamma$]
Our theory, here and also in Section \ref{sec:theta-fast} below, 
applies to all values of $\gamma$:
when $\gamma $ is small, that is, $1/\gamma > \rho $, the point-wise $\max_v h(\theta, v; x)$ will be strongly concave  and then $\mu = 1/\gamma - \rho$;
When $1/\gamma \le \rho$, we no longer have strong concavity, and our theory applies by assuming the PL condition on the maximization.
In this case $1/\gamma \le \rho \le l_0$ so $l = 1/\gamma + l_0 \le 2l_0$,
that is, $l$ is (up to a constant) of the magnitude of the smoothness of the loss function $\ell(\theta, v)$.
Algorithm-wise, compared to the elimination method that solves for $v^*(\theta, x)$ exactly using an inner-loop, GDA is a single-loop algorithm that updates $\theta$ and $T$ simultaneously (or alternatively). Our theory is for synchronized GDA, but the extension to alternative GDA should be direct with minor modifications. 
\end{remark}

\section{Max-min problem with $\theta$ fast}\label{sec:theta-fast}

In this section, we consider the max-min problem $\max_T \min_\theta L( \theta, T)$, where $\theta$ is the inner (fast) variable. 
We will prove two results: 
one under an NC-SC assumption, where we show convergence of GDA achieving the $O(\varepsilon^{-2})$ iteration complexity;
one under an interaction dominant condition in the NC-NC setting, where we propose a one-sided PPM scheme and prove the convergence at  $O(\varepsilon^{-2})$ number of proximal point iterations. 
All proofs are in Appendix \ref{app:proofs}.

\subsection{Recast as min-max problem}

By taking a negation of the objective, we write it as a min-max problem 
\begin{equation}\label{eq:wass-minimax-3}
\min_{T \in L^2(P)}  \max_{\theta \in \R^p} 
H( T, \theta)
= -L( \theta, T)
 = \E_{x \sim P} \big[ - \ell( \theta, T(x)) + \frac{1}{ 2 \gamma} \| T(x) - x\|^2  \big].
\end{equation}
Because we lack a convex-concave structure and the spaces $L^2(P)$ and $\R^p$ are unbounded,
\eqref{eq:wass-minimax-3} may not be equivalent to the original problem \eqref{eq:wass-minimax-2}.
However, the two share stationary points, which suffices for our purpose since our theory proves convergence to 
an $\varepsilon$-stationary point under our nonconvex-nonconcave setting. 
In applications, the max-min formulation can also have an interpretation, that is, we assume there is a corresponding classifier $\theta$ to each perturbed distribution $Q= T_\# P$, and then we search for the $Q$ within the Wasserstein uncertainty ball.

Here, we use the convention to put the inner variable $\theta$ in the second variable in the expression of $H(T,\theta)$.
This swap of variable ordering does not affect our definition of the (coordinate) $l$-smoothness in the same way as in Section \ref{subsec:T-fast}.
In particular, since $H = -L$,
Lemma \ref{lemma:theta-fast-lsmooth-H} applies to show that $H$ is  coordinate  $l$-smooth on $ L^2(P) \times \R^p$.
As a result, Lemma \ref{lemma:desdent-coordinate-M} also applies to $H$.

\subsection{NC-SC: convergence of GDA}\label{subsec:theta-fast-NC-SC}  

We introduce an NC-SC condition on $H$, namely requiring $H$ to be strongly concave in $\theta$. 
The extension to the NC-NC case will be addressed in Section \ref{subsec:theta-fast-NC-NC}. 

To proceed, we assume that $\ell$ is $C^2 $ for simplicity.
\begin{assumption}[$C^2$ $\ell$]\label{assump:l-C2}
$\ell$ is $C^2$ on $\R^p \times \R^d$.
\end{assumption}
By definition, we have
\[
- \partial_{\theta \theta}^2 H( T, \theta) 
=  \E_{ x \sim P} \partial^2_{\theta \theta} \ell (\theta, T(x)).
\]
The strongly concavity in  $\theta$ asks that 
$ - \partial_{\theta \theta}^2 H( T, \theta) \succeq
 \mu I$ for some $\mu > 0$, 
 which is weaker than $\ell( \cdot, v)$ being strongly convex in $\theta$: 
 it asks the matrix to be positive definite after {\it averaging} over ($T$-transported) data samples $x \sim P$. 
 We define a ``good'' subset of transport maps as 
 \[
\calT_{\mu}^{\rm SC}
:=\{ T \in L^2(P)  \,
s.t. \, 
\forall \theta \in \R^p, \,
\E_{ x \sim P} \partial^2_{\theta \theta} \ell (\theta, T(x)) 
\succeq
 \mu I  \}.
\]
If $T$ belongs to this good set,
then $\partial_{\theta \theta}^2 \ell( \theta, v)$ after averaging on $v \sim T_\# P$ becomes positive definite.
Intuitively, this happens when the transported density $T_\# P$ sufficiently spreads the locations $v$ where  $\partial_{\theta \theta}^2 \ell( \theta, v)$ is positive. 
Thus, the ``good'' class of $\calT_{\mu}^{\rm SC}$ can be interpreted as requiring the transport map $T$ to be not degenerate -- in view of the locations where $\partial_{\theta \theta}^2 \ell$ is positive or near positive.

\begin{example}[$L^2$ regression]\label{ex:l2-regression}
Consider the regression loss
\[
\ell( \theta , x) =  \frac{1}{2}(y(x; \theta) - y^*(x))^2,  \quad x \in \R^d,
\]
where $y^*$ is the true response, and $y_\theta = y(\cdot; \theta)$ is the model function. We consider the general linear model $y_\theta(x) = \sigma( \theta^T x)$ where $\sigma: \R\to \R$ is a nonlinear ``link'' function, such as the logistic function $\sigma(t) =1/(1+e^{-t}) $.
One can compute that 
\[
\partial_{\theta \theta}^2 \ell( \theta, v) = ( (y_\theta(v) - y^*(v)) \sigma''(\theta^T v) + \sigma'(\theta^T v)^2 ) vv^T, 
\quad \forall v \in \R^d.
\]
As a result, the condition $ - \partial_{\theta \theta}^2 H(T, \theta)  \succeq  \mu I$ is equivalent to that
 \[
- \partial_{\theta \theta}^2 H(T, \theta) = 
 \E_{v \sim T_\# P} ( (y_\theta(v) - y^*(v)) \sigma''(\theta^T v) + \sigma'(\theta^T v)^2 ) vv^T
\succeq  \mu I.
 \]
Note that $ \sigma'(\theta^T v)^2 \ge 0$, and $ (y_\theta(v) - y^*(v)) \sigma''(\theta^T v) $ may change signs.
Suppose the some of the two as a function of $v$ is always greater than $\alpha > 0$, then 
$ - \partial_{\theta \theta}^2 H(T, \theta)  \succeq \alpha \E_{v \sim T_\# P }vv^T$,
which is positive definite except for degenerated cases.
\end{example}

\

Because the functional space $L^2(P)$ contains many degenerate mappings, 
unless $\ell (\theta, v)$ is strongly convex in $\theta$ for every $v$,
we expect $\calT_{\mu}^{\rm SC}$ to be a strict subset of $L^2(P)$.
We will assume that the $T_k$ generated by the GDA scheme always lies inside this good subset. 
\begin{assumption}[SC-good-$T$]\label{assump:SC-good-T}
For $\mu > 0$, over the iterations for all $k$, $T_k \in \calT_{\mu}^{\rm SC}$.
\end{assumption}
For any $T \in \calT_{\mu}^{\rm SC}$, there is a unique $\theta^*[T] = \argmax_{\theta} H(T, \theta)$ due to $\mu$-strongly concavity of $H$ in $\theta$. 
We define
$\psi (T) :=  \max_{\theta \in \R^p}  H( T, \theta)$,
and $\psi(T) = H(T, \theta^*[T])$ when $T \in \calT_{\mu}^{\rm SC}$.
To be able to utilize the differential property of $\partial_T H$ and $\partial \psi$, 
we will need $\theta^*[T]$ to be defined on a neighborhood of $T$ in $L^2(P)$.
The set $\calT_{\mu}^{\rm SC}$ is typically closed by continuity; However, to have unique $\theta^*[T]$ it suffices to have $H(T, \cdot)$ to be $\beta$-strongly concave for some $\beta > 0$.
We  introduce another technical condition on the good set:
\begin{assumption}\label{assump:Tgood-in-open-set-SC}
There exists an open set  $\bar \calT$ in $L^2(P)$
that contains $\calT_{\mu}^{\rm SC}$,
where $ T \in \bar \calT$ implies that for some $0 <\beta < \mu $, $T \in \calT_{\beta}^{\rm SC}$.
\end{assumption}
Assumption \ref{assump:Tgood-in-open-set-SC} can be induced 
e.g. by assuming that  $\partial_{\theta \theta}^2 \ell( \theta, v)$ is Lipschitz in $v$. 
We summarize the properties of $\theta^*[T]$ and $\Phi$
 in the following lemma,
 where we use the notations of a general function $M(T,\theta)$ and a good subset $\calT$ of $L^2(P)$.
 
\begin{lemma}\label{lemma:theta-fast-Tgood-Phi}
Suppose $M(T,\theta)$ is  coordinate  $l$-smooth on $L^2(P) \times \R^p$,
and there is a set $\calT \subset L^2(P)$ satisfying that

a) For $\mu > 0$, when $T \in \calT$, $M(T, \cdot)$ is $\mu$-strongly concave on $\R^p$,

b) $\calT \subset \bar \calT$ which is open in $L^2(P)$, 
and $ T \in \bar \calT$ implies that $M(T, \cdot)$ is $\beta$-strongly concave on $\R^p$ for some $ 0 < \beta < \mu$.

 Then,

i) For any $T \in \calT$, there is a unique $\theta^*[T] = \argmax_{\theta} M(T, \theta)$,
thus the functional $\Psi (T) : =  \max_{\theta \in \R^p}  M( T, \theta) $ is finite,
$\Psi (T) = M(T, \theta^*[T])$, and
$\partial \Psi (T )(x) = \partial_T M(T, \theta^*[T])(x)$, $P$-a.s.
In addition, 
\begin{flalign}\label{eq:EB-M-T-theta}
~~
\rm{(EB)}_\theta
~~~~~~~~~~~~~~~~~~~~~~~~~~~~
&
\| \partial_\theta M(T, \theta) \| \ge \mu \| \theta - \theta^*[T]\|, \quad \forall \theta \in \R^p.&
\end{flalign}

ii) Let $\kappa = l/\mu$, $\forall  \tilde T, T  \in  \calT$, 
 $\| \theta^*[\tilde T] - \theta^*[ T ]\| \le \kappa \| \tilde T - T \|_{L^2(P)}$. 

iii) Let $L_2 = l (1+\kappa^2)$,
$\forall \tilde T, T  \in  \calT$, 
$
\Psi (\tilde T) - \Psi (T) \le 
	\langle \partial \Psi (T), \tilde T - T  \rangle_{L^2(P)}
	+ \frac{L_2}{2} \| \tilde T - T\|_{L^2(P)}^2.
$
\end{lemma}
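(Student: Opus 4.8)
The plan is to mirror the standard finite-dimensional analysis of the value function of a strongly concave inner maximization (as in \cite{lin2020gradient}), but carried out with $T$ living in the Hilbert space $L^2(P)$ and the inner variable $\theta$ in $\R^p$. For part (i), fix $T \in \calT$. By hypothesis (a), $M(T,\cdot)$ is $\mu$-strongly concave on $\R^p$, hence has a unique maximizer $\theta^*[T]$, so $\Psi(T) = M(T,\theta^*[T])$ is finite and well defined. The error-bound $\rm{(EB)}_\theta$ is the standard quadratic-growth consequence of $\mu$-strong concavity combined with first-order optimality $\partial_\theta M(T,\theta^*[T]) = 0$: strong concavity gives $\langle \partial_\theta M(T,\theta) - \partial_\theta M(T,\theta^*[T]), \theta - \theta^*[T]\rangle \le -\mu\|\theta-\theta^*[T]\|^2$, and Cauchy--Schwarz on the left side yields $\|\partial_\theta M(T,\theta)\| \ge \mu\|\theta - \theta^*[T]\|$. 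For the differentiability of $\Psi$ and the Danskin-type formula $\partial\Psi(T)(x) = \partial_T M(T,\theta^*[T])(x)$, I would invoke hypothesis (b): since $\bar\calT$ is open in $L^2(P)$ and contains $\calT$, for $T \in \calT$ the maximizer $\theta^*[\tilde T]$ is well defined for all $\tilde T$ in an $L^2(P)$-neighborhood of $T$ (using $\beta$-strong concavity on $\bar\calT$), so $\Psi$ is the pointwise max of $M(\cdot,\theta)$ over $\theta$ locally attained at an interior unique point, and Danskin's theorem (in the Hilbert-space form) applies to give the envelope derivative.

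For part (ii), the Lipschitz dependence $\theta^*[\tilde T]$ on $T$: from first-order optimality, $\partial_\theta M(\tilde T, \theta^*[\tilde T]) = 0 = \partial_\theta M(T,\theta^*[T])$. Apply $\rm{(EB)}_\theta$ at $T$ with $\theta = \theta^*[\tilde T]$ (valid since $T \in \calT$): $\mu\|\theta^*[\tilde T] - \theta^*[T]\| \le \|\partial_\theta M(T,\theta^*[\tilde T])\| = \|\partial_\theta M(T,\theta^*[\tilde T]) - \partial_\theta M(\tilde T,\theta^*[\tilde T])\| \le l\|T - \tilde T\|_{L^2(P)}$, using the coordinate $l$-smoothness bound on $\partial_\theta M$ in the $T$-argument. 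Dividing by $\mu$ gives the $\kappa$-Lipschitz estimate.

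For part (iii), the ``descent lemma'' for $\Psi$: write $\Psi(\tilde T) - \Psi(T) = M(\tilde T, \theta^*[\tilde T]) - M(T,\theta^*[T])$. Since $\theta^*[T]$ is suboptimal for $M(\tilde T,\cdot)$, bound $M(\tilde T,\theta^*[\tilde T]) \le $ something; more precisely, compare against the first-order expansion of $M$ in its $T$-argument. The clean route: $\Psi(\tilde T) - \Psi(T) \le M(\tilde T, \theta^*[\tilde T]) - M(T, \theta^*[\tilde T])$ by suboptimality of $\theta^*[\tilde T]$ at $T$, then apply Lemma \ref{lemma:desdent-coordinate-M}(ii) to $M(\cdot, \theta^*[\tilde T])$ to get this is $\le \langle \partial_T M(T,\theta^*[\tilde T]), \tilde T - T\rangle_{L^2(P)} + \frac{l}{2}\|\tilde T - T\|^2_{L^2(P)}$. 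Now rewrite $\partial_T M(T,\theta^*[\tilde T]) = \partial\Psi(T) + (\partial_T M(T,\theta^*[\tilde T]) - \partial_T M(T,\theta^*[T]))$, bound the second term in $L^2(P)$-norm by $l\|\theta^*[\tilde T] - \theta^*[T]\| \le l\kappa\|\tilde T - T\|_{L^2(P)}$ using coordinate smoothness and part (ii), and absorb via Cauchy--Schwarz plus Young's inequality (or directly) into the quadratic term; collecting the constants $\frac{l}{2} + l\kappa + \frac{l\kappa^2}{2} = \frac{l}{2}(1+\kappa)^2$ — I would double-check whether the intended constant $L_2 = l(1+\kappa^2)$ comes out via a slightly sharper bookkeeping (e.g. keeping $\langle \partial_T M(T,\theta^*[\tilde T]) - \partial_T M(T,\theta^*[T]), \tilde T - T\rangle \le l\kappa\|\tilde T-T\|^2$ directly rather than splitting).

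The main obstacle I anticipate is not the algebra but the Hilbert-space Danskin/envelope argument in part (i): ensuring that $\Psi$ is genuinely (Fr\'echet) differentiable on $\calT$ as a functional on $L^2(P)$, with derivative identified as the $L^2(P)$ element $x \mapsto \partial_T M(T,\theta^*[T])(x)$. This needs the local well-posedness and continuity of $T \mapsto \theta^*[T]$ near $T$ (supplied by Assumption \ref{assump:Tgood-in-open-set-SC} / hypothesis (b)) together with the coordinate smoothness of $M$; one must verify the remainder term $\Psi(T+S) - \Psi(T) - \langle \partial_T M(T,\theta^*[T]), S\rangle_{L^2(P)} = o(\|S\|_{L^2(P)})$ by sandwiching between the two suboptimality inequalities (test $\theta^*[T]$ at $T+S$ and $\theta^*[T+S]$ at $T$) and using the just-established Lipschitz bound on $\theta^*[\cdot]$. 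Everything else is a routine transcription of the Euclidean argument with $\|\cdot\|$ replaced by $\|\cdot\|_{L^2(P)}$ in the $T$-slot.
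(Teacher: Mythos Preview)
Your treatment of parts (i) and (ii) matches the paper's proof essentially verbatim: strong concavity gives the unique maximizer and the error bound \eqref{eq:EB-M-T-theta}, hypothesis (b) supplies the open neighborhood on which $\theta^*[\cdot]$ is well defined so that a Danskin-type argument yields $\partial\Psi(T) = \partial_T M(T,\theta^*[T])$, and the Lipschitz estimate on $\theta^*$ follows exactly as you wrote.

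For part (iii) your decomposition differs from the paper's, and this is why you do not recover the stated constant $L_2 = l(1+\kappa^2)$. You insert $\theta^*[\tilde T]$ as the intermediate point, writing $\Psi(\tilde T) - \Psi(T) \le M(\tilde T,\theta^*[\tilde T]) - M(T,\theta^*[\tilde T])$, then expand in $T$ and correct the gradient; the direct Cauchy--Schwarz route gives the coefficient $l(\tfrac12 + \kappa)$ in front of $\|\tilde T - T\|_{L^2(P)}^2$, which exceeds $\tfrac{L_2}{2} = \tfrac{l}{2}(1+\kappa^2)$ whenever $\kappa < 2$. The paper instead inserts $\theta^*[T]$ as the intermediate point:
\[
\Psi(\tilde T) - \Psi(T)
= \big(M(\tilde T,\theta^*[\tilde T]) - M(\tilde T,\theta^*[T])\big)
+ \big(M(\tilde T,\theta^*[T]) - M(T,\theta^*[T])\big).
\]
The first bracket is bounded by $\tfrac{l}{2}\|\theta^*[\tilde T]-\theta^*[T]\|^2 \le \tfrac{l\kappa^2}{2}\|\tilde T - T\|_{L^2(P)}^2$ via Lemma \ref{lemma:desdent-coordinate-M}(i) at $\tilde\theta^*$ (where $\partial_\theta M(\tilde T,\tilde\theta^*)=0$) and part (ii); the second bracket is bounded by $\langle \partial_T M(T,\theta^*[T]),\tilde T - T\rangle_{L^2(P)} + \tfrac{l}{2}\|\tilde T - T\|_{L^2(P)}^2$ via Lemma \ref{lemma:desdent-coordinate-M}(ii), and here $\partial_T M(T,\theta^*[T]) = \partial\Psi(T)$ with no correction term. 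Summing gives exactly $\tfrac{l}{2}(1+\kappa^2)$. So the fix is simply to pivot at $\theta^*[T]$ rather than $\theta^*[\tilde T]$; this avoids the cross term altogether and delivers the claimed $L_2$.
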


Typically, to derive the descent of $\Psi$ as in Lemma \ref{lemma:theta-fast-Tgood-Phi} iii), 
one first establishes the  $l(1+\kappa)$-Lipschitz of $\partial \Psi$ as in Lemma \ref{lemma:T-fast-Tstar-phi} iii),
see the proof of Theorem \ref{thm:rate-T-fast}.
However, such an argument cannot apply here because we  cannot connect a line between $T$ and $\tilde T$ in $\calT$
(the latter may not be convex).
The argument in Lemma \ref{lemma:theta-fast-Tgood-Phi} iii) applies when $\calT$ can be any subset of $L^2(P)$,
and the  Lipschitz constant contains $\kappa^2$ instead of $\kappa$.

We derive a convergence theorem in the notation of a general objective $M$, that is, the min-max problem is $\min_{T \in L^2(P)} \max_{\theta \in \R^p} M(T, \theta)$, and the GDA scheme is
\begin{equation}\label{eq:GDA-M-theta-fast}
\begin{cases}
T_{k+1} 		\leftarrow T_k 	  	-  \eta \partial_T M( T_k, \theta_k) \\
\theta_{k+1}  	\leftarrow  \theta_k  + \tau \partial_\theta M( T_k, \theta_k).
\end{cases}
\end{equation}

\begin{theorem}\label{thm:theta-fast-NC-SC}
Suppose $M(T,\theta)$ and the good subset $\calT \subset L^2(P)$
satisfy the condition in Lemma \ref{lemma:theta-fast-Tgood-Phi},
and $T_k$ generated by the GDA scheme \eqref{eq:GDA-M-theta-fast} lies inside $\calT$ for all $k$.
Let $\Psi(T) = \max_{\theta \in \R^p} M( T, \theta)$
and assume that $\Psi(T)$ is lower bounded by finite $\Psi^*$.
Then, for some $\tau \sim 1/l$, $\eta \sim 1/\kappa^2 l$, 
the scheme \eqref{eq:GDA-M-theta-fast} finds $(T_k, \theta_k)$
for some $k \le K = O(\kappa^2 l/\varepsilon^2)$
s.t.
i) $\| \partial \Psi ( T_k) \|_{L^2(P)} \le \varepsilon$,
and 
ii)  $\| \partial_T M (T_k, \theta_k) \|_{L^2(P)} \le \sqrt{2} \varepsilon$,
$\| \partial_\theta M (T_k, \theta_k) \| \le \varepsilon/\kappa$.
\end{theorem}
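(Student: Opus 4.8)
The plan is to mirror the standard two-scale GDA analysis (as in \cite{lin2020gradient,yang2022faster}) but transported to the functional setting $L^2(P) \times \R^p$, using the smoothness and error-bound facts already assembled in Lemma \ref{lemma:desdent-coordinate-M} and Lemma \ref{lemma:theta-fast-Tgood-Phi}. The central object is the potential (Lyapunov) function
\[
V_k := \Psi(T_k) + c\, \| \theta_k - \theta^*[T_k] \|^2
\]
for a suitable absolute constant $c>0$ to be fixed along the way; $\Psi(T_k)$ is well-defined because we assume $T_k \in \calT$ for all $k$. I would track two quantities across one GDA step: the distance of the dual iterate to its target, $a_k := \|\theta_k - \theta^*[T_k]\|$, and the primal suboptimality measured through $\Psi$. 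First I would bound the contraction of $a_k$: since $M(T_k,\cdot)$ is $\mu$-strongly concave and $l$-smooth in $\theta$ (coordinate $l$-smoothness from Lemma \ref{lemma:theta-fast-lsmooth-H} applied to $H$, hence to $M$), the ascent step $\theta_{k+1} = \theta_k + \tau \partial_\theta M(T_k,\theta_k)$ with $\tau \sim 1/l$ gives a standard one-step contraction $\|\theta_{k+1} - \theta^*[T_k]\| \le (1 - \tau\mu/2)\, a_k$ (or similar), using the $(\mathrm{EB})_\theta$ bound from Lemma \ref{lemma:theta-fast-Tgood-Phi} i). Then I would add the drift caused by the primal move $T_k \to T_{k+1}$: by Lemma \ref{lemma:theta-fast-Tgood-Phi} ii), $\|\theta^*[T_{k+1}] - \theta^*[T_k]\| \le \kappa \|T_{k+1} - T_k\|_{L^2(P)} = \kappa \eta \|\partial_T M(T_k,\theta_k)\|_{L^2(P)}$, so a triangle inequality yields $a_{k+1} \le (1-\tau\mu/2) a_k + \kappa\eta \|\partial_T M(T_k,\theta_k)\|_{L^2(P)}$.

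Next I would control the primal progress. Using Lemma \ref{lemma:theta-fast-Tgood-Phi} iii) (the $L_2$-descent inequality with $L_2 = l(1+\kappa^2)$, valid on the possibly-nonconvex set $\calT$ because it bypasses line segments), plus the identity $\partial\Psi(T_k) = \partial_T M(T_k, \theta^*[T_k])$ and the $l$-Lipschitzness of $\partial_T M(T_k,\cdot)$ in $\theta$, I can write
\[
\Psi(T_{k+1}) \le \Psi(T_k) - \eta \langle \partial_T M(T_k,\theta^*[T_k]), \partial_T M(T_k,\theta_k)\rangle_{L^2(P)} + \frac{L_2\eta^2}{2}\|\partial_T M(T_k,\theta_k)\|_{L^2(P)}^2,
\]
and then replace $\partial_T M(T_k,\theta^*[T_k])$ by $\partial_T M(T_k,\theta_k)$ up to an error of size $l\, a_k$, absorbing the cross term with Young's inequality. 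This produces a descent of $\Psi$ of the form $\Psi(T_{k+1}) \le \Psi(T_k) - \frac{\eta}{4}\|\partial_T M(T_k,\theta_k)\|_{L^2(P)}^2 + C \eta l^2 a_k^2$ once $\eta \lesssim 1/L_2 \sim 1/(\kappa^2 l)$. Combining this with the squared version of the $a_{k+1}$ recursion (and using $(1-\tau\mu/2)^2 \le 1 - \tau\mu/2$), the cross contributions telescope: choosing $c$ large enough relative to $C l^2 /(\tau\mu)$ and $\eta$ small enough relative to $\tau$ and $1/(\kappa^2 l)$, the $a_k^2$ terms are dominated and I obtain
\[
V_{k+1} \le V_k - \frac{\eta}{8}\|\partial_T M(T_k,\theta_k)\|_{L^2(P)}^2 - \frac{c\tau\mu}{4}\, a_k^2 + (\text{lower order}).
\]
Telescoping from $0$ to $K-1$, using $V_k \ge \Psi^* $ (finite lower bound) so that $V_0 - \Psi^*$ caps the total decrease, gives $\min_{k<K}\big( \|\partial_T M(T_k,\theta_k)\|_{L^2(P)}^2 + a_k^2\big) \le O\big(\frac{V_0 - \Psi^*}{\eta K}\big)$. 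With $\eta \sim 1/(\kappa^2 l)$ this is $O(\kappa^2 l / K)$, so $K = O(\kappa^2 l / \varepsilon^2)$ suffices to make both $\|\partial_T M(T_k,\theta_k)\|_{L^2(P)} \le \varepsilon/\kappa$ (more precisely, first get the combination small, then note $a_k \le \varepsilon/(\kappa l)$ forces $\|\partial_\theta M(T_k,\theta_k)\| = \|\partial_\theta M(T_k,\theta_k) - \partial_\theta M(T_k,\theta^*[T_k])\| \le l a_k \le \varepsilon/\kappa$ by $l$-smoothness and the vanishing of the gradient at $\theta^*[T_k]$). Finally, $\partial\Psi(T_k) = \partial_T M(T_k,\theta^*[T_k])$ differs from $\partial_T M(T_k,\theta_k)$ by at most $l a_k$, so $\|\partial\Psi(T_k)\|_{L^2(P)} \le \|\partial_T M(T_k,\theta_k)\|_{L^2(P)} + l a_k \le \varepsilon$ after adjusting constants; and $\|\partial_T M(T_k,\theta_k)\|_{L^2(P)} \le \sqrt{2}\varepsilon$ follows a fortiori. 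Rescaling the target accuracy of the combined quantity by absolute constants yields exactly claims i) and ii).

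The main obstacle I anticipate is bookkeeping the constants so that the two coupled recursions (the $\Psi$-descent and the $a_{k+1}$ contraction) genuinely close into a single monotone Lyapunov decrease: the primal step introduces an $O(\kappa\eta \|\partial_T M\|)$ drift into $a_{k+1}$, and the dual error feeds an $O(\eta l^2 a_k^2)$ term back into the $\Psi$-descent, so one must verify that the choice $\eta \sim 1/(\kappa^2 l)$, $\tau \sim 1/l$, $c \sim l \kappa^2$ (or similar) makes the coupling contractive rather than amplifying — this is exactly where the factor $\kappa^2$ (rather than $\kappa$) in $L_2$ from Lemma \ref{lemma:theta-fast-Tgood-Phi} iii) matters, and it is the reason the final complexity carries $\kappa^2 l$. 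A secondary, more structural point to be careful about: every invocation of Lemma \ref{lemma:theta-fast-Tgood-Phi} (the existence of $\theta^*[T_k]$, the Lipschitzness $\|\theta^*[T_{k+1}] - \theta^*[T_k]\| \le \kappa\|T_{k+1}-T_k\|$, and the $L_2$-descent of $\Psi$) is licensed only because we have assumed $T_k \in \calT$ for all $k$; no attempt is made to prove invariance of $\calT$, so the proof should simply cite the hypothesis at each use. Everything else — the strongly-concave one-step contraction, the Young's-inequality splittings, the telescoping — is routine once the Lyapunov function and step-size relations are in place.
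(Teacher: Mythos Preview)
Your plan is correct and will close, but it takes a genuinely different route from the paper's proof. The paper follows the \cite{yang2022faster} template and uses the Lyapunov function
\[
V_k = \Psi(T_k) + \alpha\bigl(\Psi(T_k) - M(T_k,\theta_k)\bigr), \qquad \alpha = \tfrac{1}{8},
\]
so the correction term is the \emph{primal--dual gap} rather than the squared distance $\|\theta_k - \theta^*[T_k]\|^2$ that you propose. Concretely, the paper first proves a descent of $\Psi$ in terms of $\|\partial\Psi(T_k)\|$ and the mismatch $\delta_k := \partial_T M(T_k,\theta_k) - \partial\Psi(T_k)$ (bounded by $\kappa\|\partial_\theta M(T_k,\theta_k)\|$ via $(\mathrm{EB})_\theta$ and coordinate $l$-smoothness), then proves an \emph{ascent} of the primal value $M(T_{k+1},\theta_{k+1}) - M(T_k,\theta_k) \ge \frac{\tau}{4}\|\partial_\theta M\|^2 - \frac{7\eta}{4}\|\partial_T M\|^2$, and combines the two directly; with $\tau = 1/l$, $\eta = 1/(40\kappa^2 l)$ the cross terms cancel and one obtains $V_k - V_{k+1} \ge \frac{\eta}{8}(\|\partial\Psi(T_k)\|^2 + 2\kappa^2\|\partial_\theta M(T_k,\theta_k)\|^2)$.

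Your approach is the \cite{lin2020gradient} style: track $a_k = \|\theta_k - \theta^*[T_k]\|$ explicitly, use the one-step strongly-concave contraction on $\theta$, and add the drift $\kappa\eta\|\partial_T M\|$ from Lemma \ref{lemma:theta-fast-Tgood-Phi}~ii). What the paper's choice buys is cleaner constant management (no squaring of a recursion, no need to tune $c$ against two competing inequalities) and a formulation that would carry over unchanged to a one-sided PL assumption where $\theta^*[T]$ need not be unique; what your choice buys is a more geometric picture and a direct handle on $a_k$, from which $\|\partial_\theta M\| \le l\,a_k$ follows immediately. Both routes use Lemma \ref{lemma:theta-fast-Tgood-Phi}~iii) for the $\Psi$-descent (crucially, because $\calT$ may be nonconvex), rely on the standing hypothesis $T_k \in \calT$ at every step, and arrive at the same $O(\kappa^2 l/\varepsilon^2)$ complexity. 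Your flagged ``main obstacle'' (closing the coupled recursions) is real but routine; the paper sidesteps it entirely by never squaring the $a_k$ recursion.
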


As revealed in the proof, the constant in big-O  equals $ \Delta_0$ multiplied by an absolute constant,
where $\Delta_0 = (\Psi(T_0) - \Psi^*) + \frac{1}{8}(\Psi(T_0) - M(T_0, \theta_0))$.
The convergence of GDA is a direct application of the theorem:

\begin{corollary}[$\theta$ fast NC-SC]
\label{cor:theta-fast-NC-SC}
Under Assumptions 
\ref{assump:l0-smooth-ell},\ref{assump:l-C2},\ref{assump:SC-good-T},\ref{assump:Tgood-in-open-set-SC},
suppose $\psi(T) = \max_{\theta \in \R^p} H(T,\theta)$ has a finite lower bound.
Then, for some $\tau \sim 1/l$, $\eta \sim 1/\kappa^2 l$, 
the GDA scheme \eqref{eq:GDA-1} 
finds $(T_k, \theta_k)$
for some $k \le K = O(\kappa^2 l /\varepsilon^2)$ 
s.t.
i) $\| \partial \psi ( T_k) \|_{L^2(P)} \le \varepsilon$,
and 
ii)  $\| \partial_T H (T_k, \theta_k) \|_{L^2(P)} \le \sqrt{2} \varepsilon$,
$\| \partial_\theta H (T_k, \theta_k) \| \le \varepsilon/\kappa$.
\end{corollary}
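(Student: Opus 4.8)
The plan is to obtain Corollary~\ref{cor:theta-fast-NC-SC} as a direct specialization of Theorem~\ref{thm:theta-fast-NC-SC}, taking $M = H$ and $\calT = \calT_{\mu}^{\rm SC}$; the work then reduces to checking that the hypotheses of the theorem hold in this concrete setting, and that the abstract GDA scheme \eqref{eq:GDA-M-theta-fast} instantiates to \eqref{eq:GDA-1}.

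First I would record the smoothness. By Lemma~\ref{lemma:theta-fast-lsmooth-H}, $L$ is coordinate $l$-smooth on $\R^p \times L^2(P)$ with $l = l_0 + 1/\gamma$; since $H(T,\theta) = -L(\theta, T)$ merely negates and transposes the two coordinates, $H$ is coordinate $l$-smooth on $L^2(P) \times \R^p$ in the sense of Definition~\ref{def:coord-L-smoothness-M} (as already noted right after \eqref{eq:wass-minimax-3}). Next I would verify the two structural conditions on $\calT = \calT_{\mu}^{\rm SC}$ required by Lemma~\ref{lemma:theta-fast-Tgood-Phi}. Condition (a): for $T \in \calT_{\mu}^{\rm SC}$, using Assumption~\ref{assump:l-C2} so that $\partial^2_{\theta\theta}\ell$ exists and differentiating under the expectation, $-\partial^2_{\theta\theta} H(T,\theta) = \E_{x\sim P}\,\partial^2_{\theta\theta}\ell(\theta, T(x)) \succeq \mu I$ for every $\theta$, which is precisely $\mu$-strong concavity of $H(T,\cdot)$. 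Condition (b) is exactly Assumption~\ref{assump:Tgood-in-open-set-SC}, which supplies an open superset $\bar\calT \supset \calT_{\mu}^{\rm SC}$ on which $H(T,\cdot)$ is $\beta$-strongly concave with $0 < \beta < \mu$, so that $\theta^{*}[T]$ and $\partial\psi$ are well defined on a neighborhood of each iterate. Assumption~\ref{assump:SC-good-T} then guarantees that the GDA iterates $T_k$ remain in $\calT_{\mu}^{\rm SC}$, and the corollary's hypothesis that $\psi(T) = \max_\theta H(T,\theta)$ is lower bounded gives the lower-bound requirement on $\Psi$ in Theorem~\ref{thm:theta-fast-NC-SC}.

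Finally I would check that scheme \eqref{eq:GDA-M-theta-fast} applied to $M = H$ is scheme \eqref{eq:GDA-1}: since $\partial_T H(T,\theta) = -\partial_T L(\theta, T)$ and $\partial_\theta H(T,\theta) = -\partial_\theta L(\theta, T)$, the update $T_{k+1} \leftarrow T_k - \eta\,\partial_T H(T_k,\theta_k)$ becomes $T_k + \eta\,\partial_T L(\theta_k, T_k)$ and $\theta_{k+1} \leftarrow \theta_k + \tau\,\partial_\theta H(T_k,\theta_k)$ becomes $\theta_k - \tau\,\partial_\theta L(\theta_k, T_k)$, matching \eqref{eq:GDA-1}. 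Invoking Theorem~\ref{thm:theta-fast-NC-SC} with $\tau \sim 1/l$ and $\eta \sim 1/\kappa^2 l$, where $\kappa = l/\mu$, then yields for some $k \le K = O(\kappa^2 l/\varepsilon^2)$ the bounds $\|\partial\Psi(T_k)\|_{L^2(P)} \le \varepsilon$, $\|\partial_T M(T_k,\theta_k)\|_{L^2(P)} \le \sqrt{2}\varepsilon$, $\|\partial_\theta M(T_k,\theta_k)\| \le \varepsilon/\kappa$; rewriting $\Psi = \psi$ and $M = H$ gives the stated conclusion. Since the argument is essentially a matter of matching definitions, I do not anticipate a genuine obstacle; the only point demanding care is confirming that the strong-concavity modulus assumed after averaging over $T_\#P$ is exactly what Lemma~\ref{lemma:theta-fast-Tgood-Phi} consumes, together with the open-neighborhood technicality supplied by Assumption~\ref{assump:Tgood-in-open-set-SC}.
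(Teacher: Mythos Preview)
Your proposal is correct and follows essentially the same approach as the paper: apply Theorem~\ref{thm:theta-fast-NC-SC} with $M=H$ and $\calT=\calT_{\mu}^{\rm SC}$, invoking Lemma~\ref{lemma:theta-fast-lsmooth-H} for coordinate $l$-smoothness and Assumptions~\ref{assump:l-C2},~\ref{assump:SC-good-T},~\ref{assump:Tgood-in-open-set-SC} for the conditions on $\calT$. Your write-up is simply more explicit than the paper's three-line proof, in particular spelling out why scheme~\eqref{eq:GDA-M-theta-fast} with $M=H$ coincides with~\eqref{eq:GDA-1}.
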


\subsection{NC-NC: convergence of one-sided PPM}\label{subsec:theta-fast-NC-NC}

We start by assuming that the loss $\ell(\theta, v)$ is weakly-convex-weakly-concave (WC-WC) in the vector space.

\begin{assumption}[$\rho$-WC-WC of $\ell$]\label{assump:rho-WC-WC-ell}
For some $\rho > 0$,
$\forall \theta \in \R^p$, $\ell( \theta, \cdot)$ is $\rho$-weakly concave on $\R^d$;
$\forall v \in \R^d$, 
$\ell( \cdot, v)$ is $\rho$-weakly convex on $ \R^p$. 
\end{assumption}
We know that $0 < \rho \le l_0$.
Actually, since $\ell$ is $l_0$-smooth, it must be $l_0$-WC-WC.
Assumption \ref{assump:rho-WC-WC-ell} is to impose that potentially where is a better $\rho$ than $l_0$.

To extend the theory to NC-NC type, we propose to modify the $T$-step in GDA \eqref{eq:GDA-1} to be a (damped) proximal point update, while the $\theta$-step will remain explicitly first-order.
We call our scheme a ``one-sided PPM'', and we will show convergence rate under the interaction dominant condition in $\theta$.
The one-sided PPM scheme treats the two variables $\theta$ and $T$ asymmetrically, and we think this is suitable for our problem,
because the transport map $T$ and the decision variable $\theta$ 
lie in different spaces and have different roles in this game. 
For applications in robust learning, leaving the update on $\theta$ explicit will be useful when $\theta$ is a neural network containing a large number of parameters, and $m$ is possibly larger than the data dimension $d$.

Our analysis is much inspired by  \cite{grimmer2023landscape}, which studied the vector-space NC-NC type min-max problem and proved the convergence rate of a full PPM update (of both variables) under interaction-dominant conditions. 
We will only impose the interaction dominant condition on the ``inner'' max variable. In this case, 
our one-sided PPM has the same $O(\varepsilon^{-2})$ iteration complexity as the full PPM \cite{grimmer2023landscape}.
Computation-wise,  while the proximal point of $T$ still needs to be solved, this is simpler than a full PPM update in $T$ and $\theta$ jointly.
One can also approximately solve the proximal point by an extra-gradient method (EGM), see \cite{hajizadeh2024linear} as a follow-up of \cite{grimmer2023landscape}. Here, we focus on the (one-sided) PPM scheme for theoretical purpose, and we expect that computational tractability can be similarly resolved, e.g., by EGM, with a convergence guarantee by combining the techniques in our analysis and those in prior works.

To be specific, we introduce $H_s$ defined as
\[
H_s(T, \theta)  = \min_{V \in L^2(P)} H(V, \theta) + \frac{1}{2s} \| V - T\|_{L^2(P)}^2 := h(V; T, \theta),
\]
where $ 0 < s <   1/\rho$ is the step size (of the proximal map).
$H_s(\cdot, \theta)$ is a Moreau envelope of $H(\cdot, \theta)$ for fixed $\theta$.
The proximal mapping  is defined as
\begin{equation}\label{eq:def-prox-operator}
T^+ ( T ; \theta)
:= \arg  \min_{V \in L^2(P)}  h(V; T, \theta).
\end{equation}

The following lemma shows that the proximal mapping is well-defined and characterizes the differential of $H_s$:
\begin{lemma}\label{lemma:Hs-prox-diff}
Under Assumptions \ref{assump:l0-smooth-ell},\ref{assump:l-C2},\ref{assump:rho-WC-WC-ell},
for any $0< s < 1/\rho $, 
$T^+ = T^+(T , \theta)$ is well-defined, and
\begin{align}
\partial_T H_s( T,\theta) & = \frac{1}{s}(T - T^+  )  = \partial_T H( T^+ , \theta), 
	\label{eq:partial_T_Hs} \\
\partial_\theta H_s( T,\theta) &  = \partial_\theta H( T^+ , \theta). 
	\label{eq:partial_theta_Hs}	
\end{align}
\end{lemma}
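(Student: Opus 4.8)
The plan is to show that the proximal subproblem is strongly convex (so that $T^+$ is well-defined and unique), to read off the second identity in \eqref{eq:partial_T_Hs} from the first-order optimality condition, and then to obtain the two gradient formulas by envelope (Danskin-type) arguments in the Hilbert space $L^2(P)$. The first point to establish is that, for each fixed $\theta$, the functional $V \mapsto H(V,\theta)$ is $\rho$-weakly convex on $L^2(P)$: pointwise, $v \mapsto -\ell(\theta,v) + \frac{1}{2\gamma}\|v-x\|^2$ is $\rho$-weakly convex on $\R^d$, since $-\ell(\theta,\cdot)$ is $\rho$-weakly convex by Assumption \ref{assump:rho-WC-WC-ell} and the added quadratic is convex, and taking $\E_{x\sim P}$ (which is finite because $P\in\calP_2^r$, $V\in L^2(P)$ and $\ell$ is $l_0$-smooth) preserves weak convexity. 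Combined with the coordinate $l$-smoothness of $H$ (Lemma \ref{lemma:theta-fast-lsmooth-H}, since $H=-L$), for any $0<s<1/\rho$ the map $h(\cdot\,;T,\theta) = H(\cdot,\theta) + \frac{1}{2s}\|\cdot - T\|_{L^2(P)}^2$ is $C^1$, coercive, and $(\tfrac1s-\rho)$-strongly convex on $L^2(P)$, hence attains its infimum at a unique point $T^+(T;\theta)$, which proves well-definedness. Differentiating, the optimality condition $\partial_V h(T^+;T,\theta) = \partial_T H(T^+,\theta) + \tfrac1s(T^+-T) = 0$ gives $\partial_T H(T^+,\theta) = \tfrac1s(T-T^+)$, the second equality in \eqref{eq:partial_T_Hs}.

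For the first equality in \eqref{eq:partial_T_Hs}, I would use convex duality. Set $\widetilde H := H(\cdot,\theta) + \frac{1}{2s}\|\cdot\|_{L^2(P)}^2$, which is $(\tfrac1s-\rho)$-strongly convex, $C^1$, and real-valued (hence proper lsc). Completing the square in the definition of $H_s$ yields $H_s(T,\theta) = \frac{1}{2s}\|T\|_{L^2(P)}^2 - \widetilde H^*\!\big(\tfrac1s T\big)$, where $\widetilde H^*$ is the Fenchel conjugate on $L^2(P)$. Strong convexity of $\widetilde H$ makes $\widetilde H^*$ Fréchet differentiable with $\nabla\widetilde H^*(\tfrac1s T) = \argmin_{V} h(V;T,\theta) = T^+$, so the chain rule gives $\partial_T H_s(T,\theta) = \tfrac1s T - \tfrac1s T^+ = \tfrac1s(T-T^+)$. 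Equivalently, one may bound the difference quotient of $H_s$ directly, using the suboptimality inequality $H_s(T',\theta) \le H(T^+,\theta) + \frac{1}{2s}\|T^+-T'\|_{L^2(P)}^2$ together with the $(1-s\rho)^{-1}$-Lipschitz continuity of $T\mapsto T^+(T;\theta)$ (which follows from strong monotonicity of $\partial_V h(\cdot;T,\theta)$), sandwiching the quotient around $\tfrac1s(T-T^+)$.

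For \eqref{eq:partial_theta_Hs} I would fix $T$ and vary $\theta$, invoking Danskin's theorem for the minimization in $V$. The penalty term is $\theta$-independent, so $\partial_\theta h(V;T,\theta) = \partial_\theta H(V,\theta) = -\E_{x\sim P}\partial_\theta\ell(\theta,V(x))$, which is jointly continuous in $(V,\theta)$; the minimizer in $V$ is unique by strong convexity; and $\theta\mapsto T^+(T;\theta)$ is Lipschitz, since subtracting the two stationarity relations $0 = \partial_V h(T^+(T;\theta);T,\theta)$ and $0 = \partial_V h(T^+(T;\theta');T,\theta')$, using $(\tfrac1s-\rho)$-strong monotonicity of $\partial_V h(\cdot;T,\theta)$ and the estimate $\|\partial_T H(V,\theta)-\partial_T H(V,\theta')\|_{L^2(P)}\le l_0\|\theta-\theta'\|$ (from Assumption \ref{assump:l0-smooth-ell} applied to $\partial_v\ell$), yields $\|T^+(T;\theta)-T^+(T;\theta')\|_{L^2(P)}\le \frac{l_0}{1/s-\rho}\|\theta-\theta'\|$ (here Assumption \ref{assump:l-C2} is available but not strictly needed). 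Danskin's theorem then gives $\partial_\theta H_s(T,\theta) = \partial_\theta h(V;T,\theta)\big|_{V=T^+} = \partial_\theta H(T^+,\theta)$.

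I expect the main obstacle to be the two envelope steps carried out in the infinite-dimensional space $L^2(P)$: rigorously justifying Fréchet differentiability of $H_s$ and the continuous (Lipschitz) dependence of the argmin $T^+$ on both $T$ and $\theta$. The conjugate-function identity disposes of the $T$-derivative cleanly, and the strong-monotonicity/nonexpansiveness estimates provide the needed continuity of $T^+$; once these are in place, Danskin's theorem applies verbatim and the remaining computations are routine.
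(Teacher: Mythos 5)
Your proposal is correct and follows the same overall architecture as the paper's proof: establish well-definedness of $T^+$ from the $(1/s-\rho)$-strong convexity/monotonicity created by the proximal term, read the second equality of \eqref{eq:partial_T_Hs} off the first-order optimality condition, and obtain the two gradient formulas by envelope arguments. The differences are in the tools used inside two of the steps. For well-definedness, the paper works at the level of the gradient field $\partial_V h(\cdot;T,\theta)$, verifying continuity, coercivity, and $(1/\gamma+1/s-\rho)$-strong monotonicity and invoking the Browder--Minty theorem; you argue directly at the level of the functional, using $\rho$-weak convexity of $H(\cdot,\theta)$ (from Assumption \ref{assump:rho-WC-WC-ell}) plus the quadratic to get $(1/s-\rho)$-strong convexity of $h$ and hence a unique minimizer. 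Both hinge on the same quantitative fact $s<1/\rho$; your constant is slightly weaker (it discards the helpful $1/\gamma$ contribution) but still positive, so nothing is lost. For the first equality of \eqref{eq:partial_T_Hs}, the paper simply cites a ``Danskin's type argument,'' whereas you supply an explicit justification via the Fenchel-conjugate identity $H_s(T,\theta)=\frac{1}{2s}\|T\|_{L^2(P)}^2-\widetilde H^*(T/s)$ and differentiability of the conjugate of a strongly convex function; this is the standard rigorous route to Fr\'echet differentiability of the Moreau envelope in a Hilbert space and actually fills in the detail the paper glosses over. Your Lipschitz estimate $\|T^+(T;\theta)-T^+(T;\theta')\|_{L^2(P)}\le \frac{l_0}{1/s-\rho}\|\theta-\theta'\|$, used to license Danskin for \eqref{eq:partial_theta_Hs}, is consistent with (and slightly weaker than) the bound the paper derives later in the proof of Lemma \ref{lemma:theta-fast-lsmooth-Hs}. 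No gaps.
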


With GDA step sizes $\eta$ and $\tau$, the proposed one-sided damped PPM scheme is 
\begin{equation}\label{eq:damped-PPM}
\begin{cases}
T_k^+ \leftarrow T^+ (T_k; \theta_k) \\
T_{k+1} 		\leftarrow  (1-  \frac{\eta}{s} ) T_k +  \frac{\eta}{s} T_k^+  \\
\theta_{k+1}  	\leftarrow  \theta_k + \tau \partial_\theta H(T_k^+, \theta_k) , \\
\end{cases}
\end{equation}
where the proximal operator $T^+$ is as in \eqref{eq:def-prox-operator}.
We will see that the scheme is equivalent to GDA applied to $H_s$,
and will apply Theorem \ref{thm:theta-fast-NC-SC} to prove the convergence rate. 
To proceed, we establish the coordinate smoothness of $H_s$
and its strong concavity in $\theta$ under interaction dominant condition.

\begin{lemma}\label{lemma:theta-fast-lsmooth-Hs}
Under Assumptions \ref{assump:l0-smooth-ell},\ref{assump:l-C2},\ref{assump:rho-WC-WC-ell},
for any $0< s < 1/\rho $,
$H_s$ is  coordinate  $\bar l$-smooth on $L^2(P) \times \R^p$, where
\[
\bar l := \max \Big\{ \frac{1}{s}\vee 
        \frac{|\rho - 1/\gamma|}{ 1 - s ( \rho - 1/\gamma) },  \,
	 \frac{l_0}{ 1 - s ( \rho - 1/\gamma) },  \,
	l_0 \big(1 +   \frac{s l_0  }{ 1 - s ( \rho - 1/\gamma) } \big)  \Big\}.
	\]
\end{lemma}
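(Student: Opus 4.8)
The plan is to verify, one by one, the four coordinate Lipschitz inequalities of Definition~\ref{def:coord-L-smoothness-M} for $H_s$ on $L^2(P)\times\R^p$ (with $T$ the first coordinate and $\theta$ the second), and then read off $\bar l$ as the maximum of the four constants obtained. The workhorse is Lemma~\ref{lemma:Hs-prox-diff}, which supplies the closed forms $\partial_T H_s(T,\theta)=\tfrac1s\big(T-T^+(T;\theta)\big)=\partial_T H(T^+(T;\theta),\theta)$ and $\partial_\theta H_s(T,\theta)=\partial_\theta H(T^+(T;\theta),\theta)$, so everything reduces to Lipschitz estimates for the proximal map $T^+(T;\theta)$ in each of its two arguments.

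\emph{Step 1 (pointwise reduction and curvature bounds).} Since both $H(\cdot,\theta)$ and the regularizer $\tfrac1{2s}\|\cdot-T\|_{L^2(P)}^2$ decouple over $x$, the minimization defining $T^+$ separates: $v:=T^+(T;\theta)(x)$ is the unique minimizer of $-\ell(\theta,v)+\tfrac1{2\gamma}\|v-x\|^2+\tfrac1{2s}\|v-T(x)\|^2$, characterized by $-\partial_v\ell(\theta,v)+\tfrac1\gamma(v-x)+\tfrac1s(v-T(x))=0$ (well-posedness is already granted by Lemma~\ref{lemma:Hs-prox-diff}). Writing $f_{x,\theta}(v):=-\ell(\theta,v)+\tfrac1{2\gamma}\|v-x\|^2$, Assumption~\ref{assump:rho-WC-WC-ell} gives $\partial^2_{vv}\ell\preceq\rho I$ while Assumptions~\ref{assump:l0-smooth-ell} and~\ref{assump:l-C2} give $\partial^2_{vv}\ell\succeq-l_0 I$, hence $mI\preceq\nabla^2 f_{x,\theta}(v)\preceq lI$ pointwise, with $m:=\tfrac1\gamma-\rho$ and $l=\tfrac1\gamma+l_0$; and $s<1/\rho$ yields $1-s(\rho-\tfrac1\gamma)=1+sm>s/\gamma>0$, so everything below is well-defined.

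\emph{Step 2 (Lipschitz constants of $T^+$).} Apply the stationarity equation at two points, replace the difference of $f_{x,\theta}$-gradients by a mean-value Hessian $B\in[mI,lI]$, and use $B+\tfrac1s I\succeq(m+\tfrac1s)I\succ0$ with $(m+\tfrac1s)^{-1}=\tfrac{s}{1-s(\rho-1/\gamma)}$. Perturbing only $T$ gives $(B+\tfrac1s I)(v_1-v_2)=\tfrac1s(T_1(x)-T_2(x))$, whence $\|T^+(T_1;\theta)(x)-T^+(T_2;\theta)(x)\|\le c_T\|T_1(x)-T_2(x)\|$ with $c_T:=\tfrac1{1-s(\rho-1/\gamma)}$; perturbing only $\theta$ gives $(B+\tfrac1s I)(v_1-v_2)=\partial_v\ell(\tilde\theta,v_1)-\partial_v\ell(\theta,v_1)$, whose right side has norm $\le l_0\|\tilde\theta-\theta\|$, whence $\|T^+(T;\tilde\theta)(x)-T^+(T;\theta)(x)\|\le c_\theta\|\tilde\theta-\theta\|$ with $c_\theta:=\tfrac{sl_0}{1-s(\rho-1/\gamma)}$. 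Then, combining with Lemma~\ref{lemma:Hs-prox-diff} and, where needed, Cauchy--Schwarz ($\E_x\|\cdot\|\le(\E_x\|\cdot\|^2)^{1/2}$): (a) $\partial_\theta H_s$ is $\tfrac{l_0}{1-s(\rho-1/\gamma)}$-Lipschitz in $T$, via $\E_x\|T^+(\tilde T;\theta)(x)-T^+(T;\theta)(x)\|\le c_T\|\tilde T-T\|_{L^2(P)}$ and $l_0$-smoothness of $\ell$; (b) $\partial_T H_s(T,\tilde\theta)-\partial_T H_s(T,\theta)=\tfrac1s\big(T^+(T;\theta)-T^+(T;\tilde\theta)\big)$ is $\tfrac{c_\theta}{s}=\tfrac{l_0}{1-s(\rho-1/\gamma)}$-Lipschitz in $\theta$; (c) $\partial_\theta H_s$ is $l_0(1+c_\theta)=l_0\big(1+\tfrac{sl_0}{1-s(\rho-1/\gamma)}\big)$-Lipschitz in $\theta$, by the triangle inequality through $\|\tilde\theta-\theta\|$ and $\|T^+(T;\tilde\theta)-T^+(T;\theta)\|_{L^2(P)}$; (d) for $\partial_T H_s$ in $T$, since $\partial_T H_s(T,\theta)(x)=\nabla f_{x,\theta}(v(x))$ the difference is $B_x\big(\tilde v(x)-v(x)\big)$ with $\tilde v(x)-v(x)=\tfrac1s(B_x+\tfrac1s I)^{-1}(\tilde T(x)-T(x))$, so the operator $\tfrac1s B_x(B_x+\tfrac1s I)^{-1}$ has eigenvalues $\lambda/(1+s\lambda)$ for $\lambda\in[m,l]$, and $\sup_{\lambda\in[m,l]}|\lambda/(1+s\lambda)|\le\max\{\tfrac1s,\ \tfrac{|\rho-1/\gamma|}{1-s(\rho-1/\gamma)}\}$. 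Integrating each pointwise estimate over $P$ preserves the constant, and taking the maximum of (a)--(d) gives precisely $\bar l$.

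The main obstacle is item (d): equivalently, it asserts that the Moreau envelope of the (pointwise) possibly-weakly-convex map $H(\cdot,\theta)$ has Lipschitz gradient, and one must track the sign of $m=1/\gamma-\rho$ — i.e., whether the step size $s$ is smoothing a strongly convex or a genuinely weakly convex objective — and carry out the elementary but sign-sensitive scalar optimization of $\lambda\mapsto|\lambda/(1+s\lambda)|$ on $[m,l]$: the branch $\lambda\ge0$ is increasing with $\lambda/(1+s\lambda)<\tfrac1s$, while the branch $\lambda<0$ is nonempty only when $1/\gamma<\rho$ and attains its largest magnitude $\tfrac{\rho-1/\gamma}{1-s(\rho-1/\gamma)}$ at $\lambda=m$, where the condition $s<1/\rho$ is exactly what keeps $1+sm>0$ and prevents blow-up. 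Once $c_T$ and $c_\theta$ are in hand, items (a)--(c) are routine.
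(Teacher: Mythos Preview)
Your proposal is correct and follows the same overall strategy as the paper: verify the four coordinate Lipschitz inequalities of Definition~\ref{def:coord-L-smoothness-M} for $H_s$, obtaining exactly the four constants $l_0/(1-s(\rho-1/\gamma))$, $l_0/(1-s(\rho-1/\gamma))$, $l_0(1+sl_0/(1-s(\rho-1/\gamma)))$, and $\tfrac1s\vee\tfrac{|\rho-1/\gamma|}{1-s(\rho-1/\gamma)}$, then take the maximum.

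The execution differs in one noteworthy respect. The paper works throughout at the $L^2(P)$ level, using the $(1/s+1/\gamma-\rho)$-strong monotonicity of $\partial_V h(\cdot;T,\theta)$ established in Lemma~\ref{lemma:Hs-prox-diff}; in particular, for the $T$-Lipschitz constant of $\partial_T H_s$ (your item (d)) it expands $\tfrac1{s^2}\|(\tilde T-T)-(\tilde T^+-T^+)\|_{L^2(P)}^2$ and splits into two cases according to the sign of $\tfrac{1}{1-s(\rho-1/\gamma)}-2$. You instead exploit the pointwise separability and Assumption~\ref{assump:l-C2} to write everything through a mean-value Hessian $B_x\in[mI,lI]$, then read off (d) directly as the operator norm of $\tfrac1s B_x(B_x+\tfrac1s I)^{-1}$ via the scalar function $\lambda\mapsto\lambda/(1+s\lambda)$. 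Your spectral argument is a bit cleaner here and avoids the case split; the paper's monotonicity argument, on the other hand, would go through without $C^2$ regularity (though $C^2$ is assumed anyway). Items (a)--(c) are handled identically in spirit by both proofs.
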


The ``good'' subset of transport maps is now
 \begin{align*}
\calT_{\bar \mu}^{\rm NC}
 :=\{ 
T \in L^2(P)  \,
& s.t. \, 
\forall \theta \in \R^p, \, \text{let } T^+= T^+( T, \theta), \\
& \E_{ x \sim P} \big[ \partial^2_{\theta \theta} \ell 
			+ \partial^2_{\theta v} \ell 
				(  (s^{-1}+\gamma^{-1})I_d - \partial^2_{v v} \ell )^{-1} 
				\partial^2_{v \theta} \ell 
 			\big]|_{(\theta, T^+(x))}
	\succeq
	\bar \mu I  \}.
\end{align*}
The constant $\bar \mu$ potentially depends on $s$.

\begin{lemma}\label{lemma:TNC-SC-Hs}
Under Assumptions  \ref{assump:l0-smooth-ell},\ref{assump:l-C2},\ref{assump:rho-WC-WC-ell},
if for a pair of $0< s < 1/\rho $ and $\bar \mu > 0$,  $T \in \calT_{\bar \mu}^{\rm NC}$,
then $H_s(T, \cdot)$ is $\bar \mu$-strongly concave in $\theta \in \R^p$, that is,
$-\partial_{\theta \theta}^2 H_s( T, \theta) \succeq \bar  \mu I$ for all $\theta \in \R^p$.
\end{lemma}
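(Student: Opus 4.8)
The plan is to compute the Hessian $\partial_{\theta\theta}^2 H_s(T,\theta)$ explicitly and show it is dominated below by $-\bar\mu I$ exactly when $T \in \calT_{\bar\mu}^{\rm NC}$. By Lemma \ref{lemma:Hs-prox-diff} we have $\partial_\theta H_s(T,\theta) = \partial_\theta H(T^+,\theta)$ where $T^+ = T^+(T;\theta)$. Since $H(V,\theta) = \E_{x\sim P}[-\ell(\theta,V(x)) + \frac{1}{2\gamma}\|V(x)-x\|^2]$, this reads $\partial_\theta H_s(T,\theta) = -\E_{x\sim P}\,\partial_\theta\ell(\theta, T^+(x))$. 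Differentiating once more in $\theta$, the chain rule produces two contributions: the direct term $-\E_{x\sim P}\,\partial_{\theta\theta}^2\ell(\theta, T^+(x))$, and an indirect term coming from the dependence of $T^+(x) = T^+(T;\theta)(x)$ on $\theta$, namely $-\E_{x\sim P}\,\partial_{v\theta}^2\ell(\theta,T^+(x))\cdot \partial_\theta T^+(x)$ (with the appropriate transpose/index bookkeeping). So the first real step is to compute $\partial_\theta T^+$.

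For that I would use the first-order optimality condition characterizing $T^+$. Since $h(V;T,\theta) = H(V,\theta) + \frac{1}{2s}\|V-T\|_{L^2(P)}^2$ is strongly convex in $V$ for $s < 1/\rho$ (this uses $\rho$-weak concavity of $\ell$ in $v$, which makes $-\ell(\theta,\cdot)$ only $\rho$-weakly convex, offset by $\frac{1}{2s}\|\cdot\|^2$ when $1/s > \rho$, as already invoked in Lemma \ref{lemma:Hs-prox-diff}), the minimizer satisfies pointwise, $P$-a.s.,
\[
-\partial_v\ell(\theta, T^+(x)) + \tfrac{1}{\gamma}(T^+(x)-x) + \tfrac{1}{s}(T^+(x) - T(x)) = 0.
\]
Implicitly differentiating this identity in $\theta$ and solving for $\partial_\theta T^+(x)$ gives
\[
\partial_\theta T^+(x) = \big((s^{-1}+\gamma^{-1})I_d - \partial_{vv}^2\ell(\theta,T^+(x))\big)^{-1}\,\partial_{v\theta}^2\ell(\theta,T^+(x));
\]
the matrix being inverted is positive definite precisely because $\partial_{vv}^2\ell \preceq \rho I_d \prec (s^{-1}+\gamma^{-1})I_d$ under $s<1/\rho$ and $\gamma^{-1}>0$ — this is the same nondegeneracy already built into the constant $\bar l$ in Lemma \ref{lemma:theta-fast-lsmooth-Hs}. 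Substituting back, I get
\[
-\partial_{\theta\theta}^2 H_s(T,\theta) = \E_{x\sim P}\Big[\partial_{\theta\theta}^2\ell + \partial_{\theta v}^2\ell\,\big((s^{-1}+\gamma^{-1})I_d - \partial_{vv}^2\ell\big)^{-1}\partial_{v\theta}^2\ell\Big]\Big|_{(\theta, T^+(x))},
\]
which is exactly the matrix appearing in the definition of $\calT_{\bar\mu}^{\rm NC}$. Hence $T\in\calT_{\bar\mu}^{\rm NC}$ means this expression is $\succeq\bar\mu I$ for every $\theta$, i.e. $H_s(T,\cdot)$ is $\bar\mu$-strongly concave, as claimed.

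I would need to make sure the differentiation-under-the-expectation and the implicit function theorem application are legitimate: $C^2$-ness of $\ell$ (Assumption \ref{assump:l-C2}) plus the uniform positive-definiteness of $(s^{-1}+\gamma^{-1})I_d - \partial_{vv}^2\ell$ give local Lipschitz dependence of $T^+$ on $\theta$ in $L^2(P)$, and the $l_0$-smoothness of $\ell$ bounds the integrands so dominated convergence applies; this is routine and I would relegate it to a sentence. The main obstacle — or rather the only place care is truly needed — is the implicit differentiation of the $L^2$-valued map $\theta\mapsto T^+$: one must justify that the pointwise identity can be differentiated pointwise and that the resulting pointwise formula indeed gives the Fréchet derivative in $L^2(P)$. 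This follows from the uniform invertibility bound (a Neumann-series / uniform-contraction argument), but it is the step that distinguishes this functional-space computation from the finite-dimensional analogue and deserves explicit verification.
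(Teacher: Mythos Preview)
Your proposal is correct and follows essentially the same approach as the paper: both compute $-\partial_{\theta\theta}^2 H_s(T,\theta)$ by differentiating $\partial_\theta H_s(T,\theta)=\partial_\theta H(T^+,\theta)$, obtain $\partial_\theta T^+(x)$ via implicit differentiation of the pointwise first-order optimality condition for $T^+$, and substitute back to recover exactly the matrix in the definition of $\calT_{\bar\mu}^{\rm NC}$. The paper presents this in perturbation notation ($\delta\theta$, $\delta T^+$) rather than your explicit-derivative notation, and in fact glosses over the $L^2(P)$-level implicit-function and dominated-convergence justifications that you correctly flag as the only nontrivial technical point.
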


\begin{assumption}[NC-good-$T$]\label{assump:NC-good-T}
For $\bar \mu > 0$, over the iterations for all $k$, 
$T_k \in \calT_{\bar \mu}^{\rm NC}$.
\end{assumption}

\begin{assumption}\label{assump:Tgood-in-open-set-NC}
There exists an open set  $\bar \calT$ in $L^2(P)$
that contains $\calT_{\bar \mu}^{\rm NC}$,
where $ T \in \bar \calT$ implies that for some $0 < \beta < \bar \mu$,
$T \in \calT_{\beta}^{\rm NC}$.
\end{assumption}

Define $
\bar \kappa : = \bar l / \bar \mu $, we are ready to prove the convergence rate of the one-sided PPM scheme.

\begin{corollary}[$\theta$ fast NC-NC]
\label{cor:theta-fast-NC-NC}
Under Assumptions  \ref{assump:l0-smooth-ell},\ref{assump:l-C2},\ref{assump:rho-WC-WC-ell},\ref{assump:NC-good-T},\ref{assump:Tgood-in-open-set-NC},
suppose  $0 < s < 1/\rho$, and $\psi_s(T) = \max_{\theta \in \R^p} H_s(T,\theta)$ has a finite lower bound.
Then, for some $\tau \sim 1/ \bar l$, $\eta \sim 1/ \bar\kappa^2 \bar l$, 
the T-only damped PPM scheme \eqref{eq:damped-PPM} 
finds $(T_k, \theta_k)$
for some $k \le K = O(\bar \kappa^2 \bar l /\varepsilon^2)$ 
s.t.
i) $\| \partial \psi_s ( T_k) \|_{L^2(P)} \le \varepsilon$,
and 
ii)  $\| \partial_T H_s (T_k, \theta_k) \|_{L^2(P)} \le \sqrt{2} \varepsilon$,
$\| \partial_\theta H_s (T_k, \theta_k) \| \le \varepsilon/\bar \kappa$.
\end{corollary}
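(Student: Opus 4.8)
The plan is to recognize that the one-sided damped PPM scheme \eqref{eq:damped-PPM} is precisely the two-scale GDA scheme \eqref{eq:GDA-M-theta-fast} applied to the Moreau-regularized objective $M = H_s$ together with the good subset $\calT = \calT_{\bar\mu}^{\rm NC}$, and then to invoke Theorem \ref{thm:theta-fast-NC-SC} essentially verbatim, reading off the relevant constants from Lemma \ref{lemma:theta-fast-lsmooth-Hs} and Lemma \ref{lemma:TNC-SC-Hs}. For the first part: since $0 < s < 1/\rho$, Lemma \ref{lemma:Hs-prox-diff} (which uses Assumptions \ref{assump:l0-smooth-ell}, \ref{assump:l-C2}, \ref{assump:rho-WC-WC-ell}) guarantees that $T^+ = T^+(T;\theta)$ is well defined and supplies the identities \eqref{eq:partial_T_Hs}--\eqref{eq:partial_theta_Hs}, namely $\partial_T H_s(T,\theta) = s^{-1}(T - T^+)$ and $\partial_\theta H_s(T,\theta) = \partial_\theta H(T^+,\theta)$. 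Substituting $M = H_s$ into \eqref{eq:GDA-M-theta-fast}, the $T$-update becomes $T_{k+1} = T_k - \eta\, s^{-1}(T_k - T_k^+) = (1-\eta/s)T_k + (\eta/s)T_k^+$, which is the second line of \eqref{eq:damped-PPM}, and the $\theta$-update becomes $\theta_{k+1} = \theta_k + \tau\,\partial_\theta H(T_k^+,\theta_k)$, which is the third line of \eqref{eq:damped-PPM}. Hence, for matching initialization and step sizes, the one-sided PPM iterates coincide with the GDA-on-$H_s$ iterates.

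Next I would verify the hypotheses of Theorem \ref{thm:theta-fast-NC-SC} (equivalently, of Lemma \ref{lemma:theta-fast-Tgood-Phi}) for $(M,\calT) = (H_s, \calT_{\bar\mu}^{\rm NC})$: (i) coordinate $\bar l$-smoothness of $H_s$ on $L^2(P)\times\R^p$ holds by Lemma \ref{lemma:theta-fast-lsmooth-Hs}, which also fixes the constant $\bar l$; (ii) for $T \in \calT_{\bar\mu}^{\rm NC}$, $H_s(T,\cdot)$ is $\bar\mu$-strongly concave by Lemma \ref{lemma:TNC-SC-Hs}, so condition (a) holds with $\mu = \bar\mu$; (iii) condition (b) --- containment of $\calT_{\bar\mu}^{\rm NC}$ in an open set on which only $\beta$-strong concavity with $0<\beta<\bar\mu$ is available --- is exactly Assumption \ref{assump:Tgood-in-open-set-NC}; (iv) the iterates $T_k$ remain in $\calT_{\bar\mu}^{\rm NC}$, which is Assumption \ref{assump:NC-good-T} and, by the identification above, is the very invariance that Theorem \ref{thm:theta-fast-NC-SC} requires of its GDA iterates; and (v) $\Psi := \psi_s = \max_\theta H_s(\cdot,\theta)$ is lower bounded, which is assumed in the corollary.

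With these in place, applying Theorem \ref{thm:theta-fast-NC-SC} with $l \mapsto \bar l$, $\mu \mapsto \bar\mu$, $\kappa \mapsto \bar\kappa = \bar l/\bar\mu$ yields, for $\tau \sim 1/\bar l$ and $\eta \sim 1/(\bar\kappa^2\bar l)$, some index $k \le K = O(\bar\kappa^2\bar l/\varepsilon^2)$ at which $\|\partial\psi_s(T_k)\|_{L^2(P)}\le\varepsilon$, $\|\partial_T H_s(T_k,\theta_k)\|_{L^2(P)}\le\sqrt 2\,\varepsilon$, and $\|\partial_\theta H_s(T_k,\theta_k)\|\le\varepsilon/\bar\kappa$, which is the statement of Corollary \ref{cor:theta-fast-NC-NC}. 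The constant factor in the $O(\cdot)$ is inherited from Theorem \ref{thm:theta-fast-NC-SC} and equals $\Delta_0 = (\psi_s(T_0) - \psi_s^*) + \tfrac18(\psi_s(T_0) - H_s(T_0,\theta_0))$ up to an absolute constant.

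I do not expect a genuine obstacle at the level of the corollary itself once Lemmas \ref{lemma:Hs-prox-diff}, \ref{lemma:theta-fast-lsmooth-Hs}, \ref{lemma:TNC-SC-Hs} and Theorem \ref{thm:theta-fast-NC-SC} are available; the substantive work has been pushed into those results --- differentiation of the Moreau envelope $H_s$, the explicit $\bar l$ with its $1 - s(\rho - 1/\gamma)$ denominators, and the Schur-complement form of the interaction-dominant matrix defining $\calT_{\bar\mu}^{\rm NC}$. The one point that deserves care is the bookkeeping in the first step --- confirming that the damped convex-combination update in \eqref{eq:damped-PPM} is literally the plain gradient step of GDA on $H_s$ with step size $\eta$ --- together with making sure that the ``good-$T$'' and open-set assumptions are phrased for exactly the set $\calT_{\bar\mu}^{\rm NC}$ appearing in Lemma \ref{lemma:TNC-SC-Hs}, so that Theorem \ref{thm:theta-fast-NC-SC} transfers without any gap.
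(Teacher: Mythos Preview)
Your proposal is correct and follows essentially the same route as the paper: identify the damped PPM \eqref{eq:damped-PPM} with GDA on $H_s$ via Lemma~\ref{lemma:Hs-prox-diff}, then apply Theorem~\ref{thm:theta-fast-NC-SC} with $M=H_s$, $\calT=\calT_{\bar\mu}^{\rm NC}$, checking the hypotheses through Lemmas~\ref{lemma:theta-fast-lsmooth-Hs}, \ref{lemma:TNC-SC-Hs} and Assumptions~\ref{assump:NC-good-T}, \ref{assump:Tgood-in-open-set-NC}. Your write-up is in fact more explicit than the paper's in spelling out each verification step.
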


Note that when ii) holds, i.e.,
 $(T_k, \theta_k)$ is an $(\sqrt{2}\varepsilon, \varepsilon/\bar \kappa)$-stationary point of $H_s$,
 then, by Lemma \ref{lemma:Hs-prox-diff}, 
 $(T_k^+, \theta_k)$ is an $(\sqrt{2}\varepsilon, \varepsilon/\bar \kappa)$-stationary point of $H$.

\section{Algorithm in practice}\label{sec:algorithm-practice}

In the previous two sections, 
we proposed GDA-type schemes with convergence analysis,
where we focused on population (infinite-sample) formulation of the loss, and the variable $T$ is in $L^2(P)$ space.
In practice, we would like to solve a finite-sample optimization problem towards inferring the solution $(\theta^*, T^*)$ corresponding to the infinite-sample problem.
While we postpone a theoretical analysis of the finite-sample effect, in this section, we explain how our algorithms are implemented on finite samples.
We also introduce a neural network parametrization of the transport map $T$ that can be efficiently trained simultaneously with our minimax solvers, both of which can be computed on batches of samples.

\subsection{Particle optimization on finite samples}\label{subsec:particle-opt-algo}

Given finite samples $\{ x_i\}_{i=1}^n$, $x_i \sim P$ i.i.d., our algorithm has two components, 
i) solve the minimax problem on the $n$ samples,
and ii) to learn a neural network  $T_\varphi$ that parametrizes $T$.
Because in i), we will solve $T(x_i)$ on the $n$ samples only,
the learned $T_\varphi$ will allow us to extrapolate to a test sample that is not in the $n$ (training) samples. 
Below we introduce the two components, i) and ii) respectively. 

In i), we aim to solve for $\theta$ and $v_i = T(x_i)$, that is, we only solve for the particles $v_i$, $i=1,\cdots, n$, as images of $x_i$ under the transport $T$. 
We call this part the ``particle optimization".
Below we use superscript $^k$ to stand for iteration index $k$,
that is, we will solve for $v_i^k = T^k(x_i)$ and $\theta^k$.

In each iteration, we load a random batch $B$ of samples, $|B| = m$. 
If we load all the samples,
i.e. $B = [n]:=\{1, \cdots, n \}$, then $m=n$, and otherwise, our algorithm is stochastic. 
We give the finite-sample version of our GDA scheme  below:
suppose step sizes $\eta$, $\tau$  are given, 
the GDA scheme \eqref{eq:GDA-1} or equivalently \eqref{eq:GDA-2} becomes
\begin{eqnarray}
\label{eq:vi-GDA-3}
v_i^{k+1} &\leftarrow &v_i^k +  \eta \big(  \partial_v  \ell( \theta^k, v_i^k) - \frac{1}{\gamma} (v_i^k - x_i) \big), \quad i\in B; \\
\label{eq:theta-step-GDA}
\theta^{k+1}&  	\leftarrow & \theta^k  - \tau  \frac{1}{m}\sum_{i \in B} \partial_\theta  \ell( \theta^k, v^k_i).
\end{eqnarray}
Note that in \eqref{eq:vi-GDA-3}, 
we let $v_i^{k+1} \leftarrow v_i^k$ for $i \notin B$,
that is, $v_i^{k}$ is only changed if the $i$-th particle is loaded in batch $B$, because here $k$ stands for the number of batches.
Some popular modifications of GDA can be derived. 
For example, if one replaces $v_i^k$ with $v_i^{k+1}$ in \eqref{eq:theta-step-GDA}, then this gives an Alternating GDA update. 
One can also derive the on-sample version of the one-sided PPM scheme \eqref{eq:damped-PPM}, that is, the update on $v_i^{k+1}$ and $\theta^{k+1}$ given $\{ \theta^k, v_i^k\}$,
and further approximate it by an extra-gradient method, similarly to \cite{hajizadeh2024linear}.
Generally, one may consider any first-order update for vector-space minimax optimization here
by updating $T$ on the particles, i.e., to update $v_i^k$.

\subsection{Neural transport map and learning by matching}

We propose to parametrize $T$ by $T_\varphi: \R^d \to \R^d$, where $\varphi$ is the network parameters.
During the $k$-th iteration of the particle optimization, 
we have updated pairs of $\{x_i, v_i = v_i^k\}_i$ on a batch $B$, 
and then we can compute a ``matching loss'' defined as 
\begin{equation}\label{eq:loss-matching}
\calL( \varphi; \{ x_i, v_i\}) = 
 \frac{1}{m}\sum_{i \in B} \|  T_\varphi(x_i) - v_i\|^2.
\end{equation}
There is flexibility in the training  of $T_\varphi$ in this part:
one can use the same batch $B$ as in the particle (stochastic) optimization,
or, if $m$ is large,
use a smaller batch size $m'$ to update the network $T_\varphi$ in multiple steps on $B$.
We summarize our batch-based algorithm in Algorithm \ref{alg:particle-gda}, where the iteration index $k$ counts the number of batches, and each batch cycles through all $n$ samples without repetition.

Note that $\calL$ is an $L^2$ loss, and intuitively it asks the mapped image $T_\varphi(x_i)$ to match the target particle $v_i$. 
It can be interpreted as a student network to learn the output from teacher-provided pairs $\{ x_i, v_i\}$, which is similar to distillation in generative models \cite{salimans2022progressive,song2023consistency}.
The training of $T_\varphi$ is standard, and one can use SGD with momentum or Adam.
Our algorithm is efficient because the training of $T_\varphi$ does not need to wait for the particle solutions to be perfect: even when $v_i^k$ are not fully convergent yet, they can be useful as teacher output to train the neural transport map $T_\varphi$,
and the particle iterations and the matching net training can converge concurrently.

\begin{algorithm}[t]
\caption{Particle GDA + neural transport matching}\label{alg:particle-gda}
\begin{algorithmic}[1]
\Statex \textbf{Input:}
		Samples $\{x_i\}_{i=1}^n$, 
		initial decision model $\theta^0$, 
		initial transport network ${\varphi^0}$
		
\Statex \textbf{Parameters:}		
		step sizes $\eta, \tau$, 
		batch size $m$ 
\State Initialize particles: $v_i^0 \gets x_i$ for $i=1,\dots,n$
\vspace{0.05in}

\For{$k = 0,1,2,\dots$}
    \State Sample a batch $B$ with $|B| = m$

    \State Compute $v_i^{k+1}$ and $\theta^{k+1}$ 
    via GDA as in \eqref{eq:vi-GDA-3}\eqref{eq:theta-step-GDA}
     
    \State Update $\varphi$ by SGD/Adam  on minimizing
    $\calL(\varphi; \{ x_i, v_i^{k+1}\})$ as in
    \eqref{eq:loss-matching}
    	 to obtain $\varphi^{k+1}$  from $\varphi^k$      
\Statex \Comment{possibly using batches of size $m' < m$}
\EndFor{}
\end{algorithmic}
\end{algorithm}

\section{Experiments}

We will apply the proposed GDA approach to simulated 2D data and image data.
Codes available at \url{https://github.com/VinciZhu/wasserstein_minimax}.
To measure empirical convergence of the algorithm, we compute the gradient norm of $\theta$ and $T$
at the $k$-th iteration on a sample batch $B$ as
\begin{equation}\label{eq:GN-on-batch}
{\rm GN}_\theta^k =  \| \frac{1}{m}\sum_{i \in B} \partial_\theta \ell ( \theta^k, v^k_i) \|, \quad
{\rm GN}_T^k  =   \big( \frac{1}{m}\sum_{i \in B} \|  \partial_v  \ell( \theta^k, v_i^k) - \frac{1}{\gamma} (v_i^k - x_i) \|^2 \big)^{1/2},
\end{equation}
which are the empirical versions of $\| \partial_\theta L (\theta^k, T^k)\|$ and $\|\partial_T L (\theta^k, T^k) \|_{L^2(P)}$ respectively.

\begin{figure}[t]
\centering
\begin{minipage}{0.275\textwidth}
\includegraphics[height=.95\linewidth]{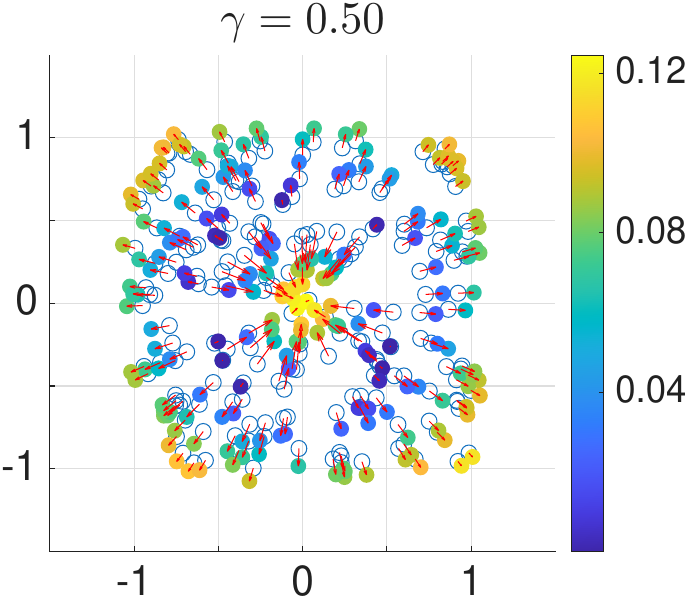}\\[10pt]
\includegraphics[height=.95\linewidth]{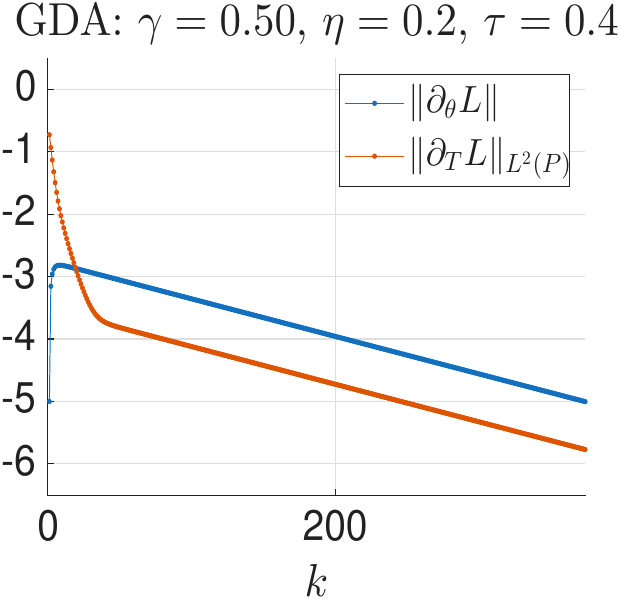}
\end{minipage}\hspace{10pt}
\begin{minipage}{0.275\textwidth}
\includegraphics[height=.95\linewidth]{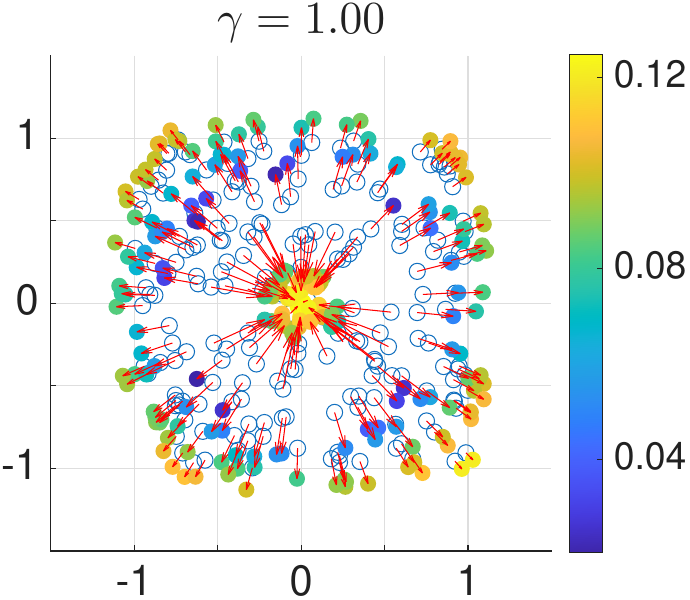}\\[10pt]
\includegraphics[height=.95\linewidth]{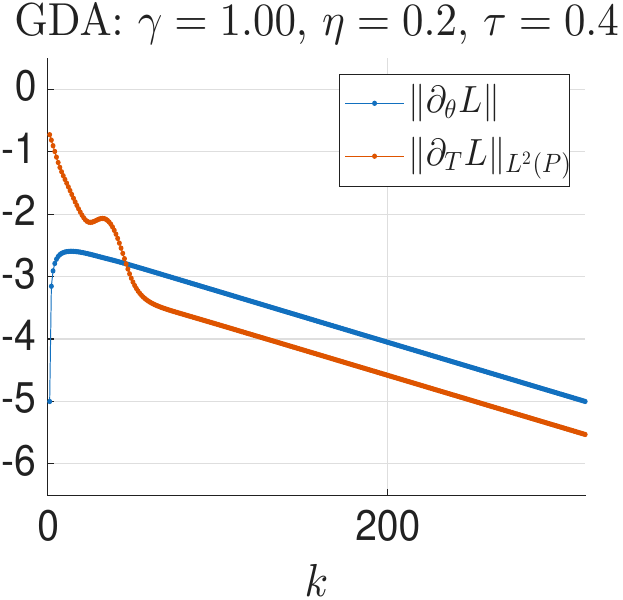}
\end{minipage}\hspace{10pt}
\begin{minipage}{0.275\textwidth}
\includegraphics[height=.95\linewidth]{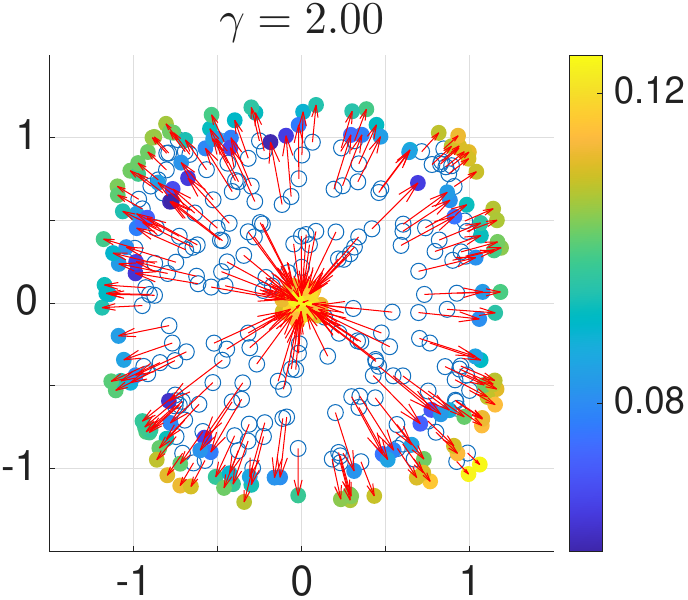}\\[10pt]
\includegraphics[height=.95\linewidth]{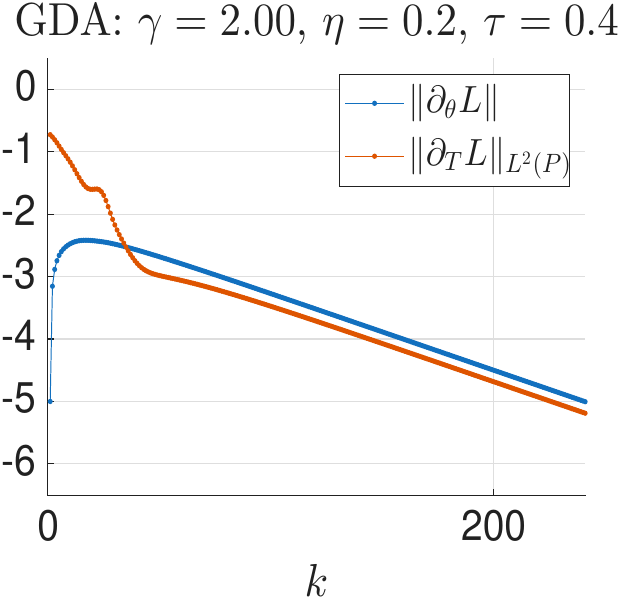}
\end{minipage}
\caption{
$L^2$ regression loss  on 2D data, GDA in the $\theta$-fast setting.
Upper: The saddle point solution found by the algorithm, 
where open circles are original samples $x_i$, solid circles are converged sample locations $v_i = T(x_i)$, and the color bar indicates the loss $\ell(\theta_k, \cdot)$ evaluated on $v_i$ in the last iteration.
Lower: Gradient norms of $\theta$ and $T$ computed on finite samples, see \eqref{eq:GN-on-batch}, along the iterations by GDA.
}
\label{fig:2d-regression-gda-theta-fast}
\end{figure}

\begin{figure}[t]
\centering
\begin{minipage}{0.275\textwidth}
\includegraphics[height=1\linewidth]{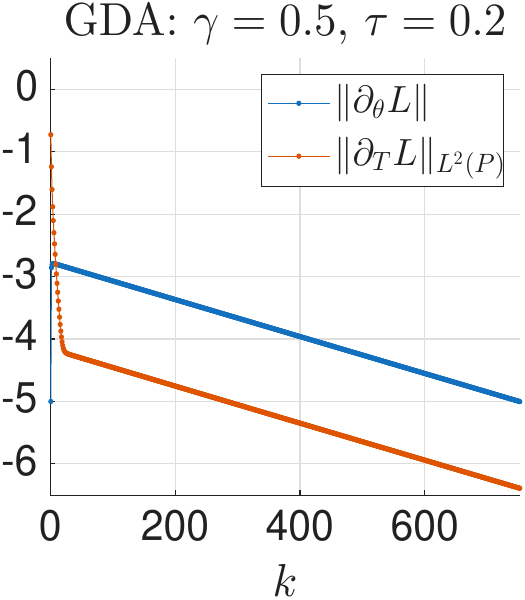} \\[10pt]
\includegraphics[height=1\linewidth]{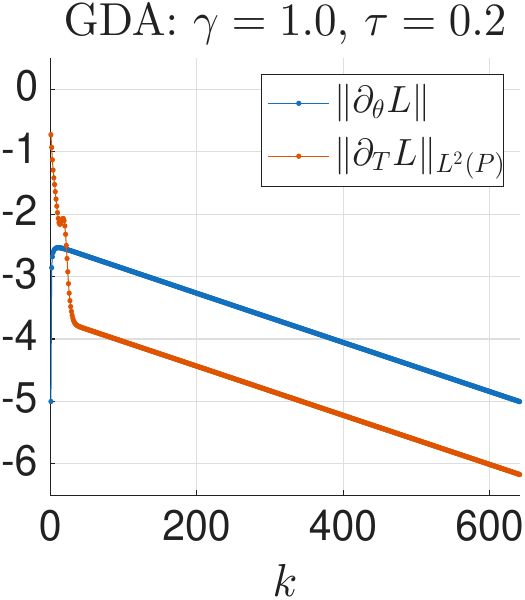}
\subcaption{}
\end{minipage}
\begin{minipage}{0.275\textwidth}
\includegraphics[height=1\linewidth]{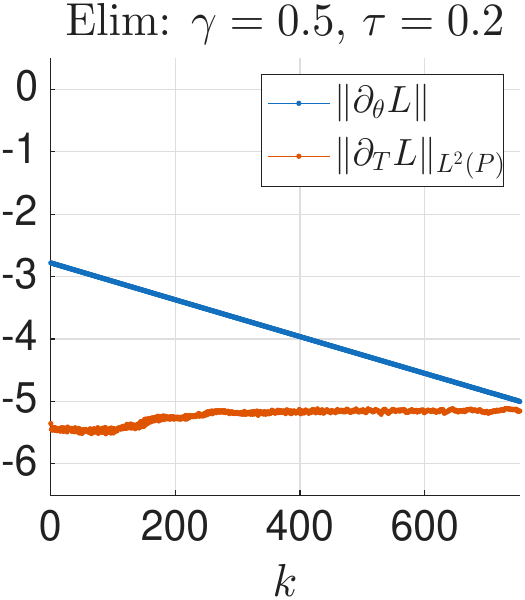} \\[10pt]
\includegraphics[height=1\linewidth]{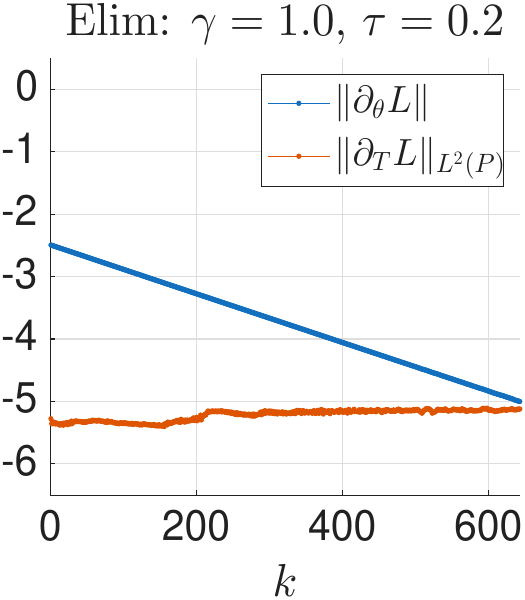}
\subcaption{}
\end{minipage}
\begin{minipage}{0.275\textwidth}
\includegraphics[height=1\linewidth]{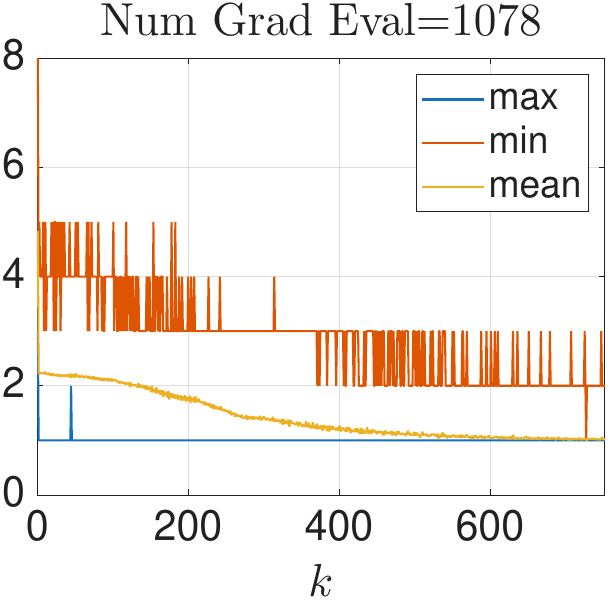} \\[10pt]
\includegraphics[height=1\linewidth]{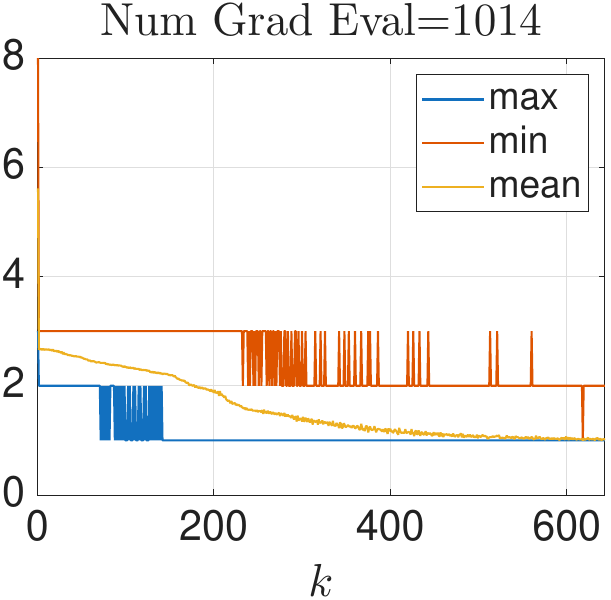}
\subcaption{}
\end{minipage}
\caption{
$L^2$ regression loss on 2D data, comparing GDA and Elim in the $T$-fast setting.
Upper: $\gamma = 0.5$. Lower: $\gamma=1$.
In this example, GDA and Elim start from the same initial value of $\{ \theta^0, v^0_i \}$ and converge to the same local (near) stationary point.
(a) Gradient norms by GDA, same plots in Figure \ref{fig:2d-regression-gda-theta-fast}.
(b) Gradient norms by Elim. Since $T$ is solved via an inner-loop, the gradient norm of $T$ is always below the tolerance 1e-5.
(c) Number of gradient evaluations (NGE) of the inner-loop in the Elim method, showing max, min and average values over the $n$ samples, and the total NGE is shown in the plot title. 
}
\label{fig:2d-regression-gda-elim}
\end{figure}

\subsection{Regression on two-dimensional data}

We consider a two-dimensional example following the setup in Example \ref{ex:l2-regression} (Figure \ref{fig:2d-regression-data}),
and more details can be found in Appendix \ref{app:exp-more}.
We use $n= 200$ samples, and since we focus on the particle optimization convergence on this example, we use all the samples in each iteration, that is, $B= [n]$ and $m=n$.
We implement the following methods:
\begin{enumerate}
\item[(i)] Elim: The elimination approach by solving for $\arg\max_{v} \ell(\theta_k, v) - \frac{1}{2\gamma} \| v -x_i \|^2 $ at each $x_i$ using an inner-loop in each iteration. For the inner-loop, we adopt the standard BFGS algorithm
with optimality tolerance \texttt{1e-5}.

\item[(ii)] GDA: as introduced in Section \ref{subsec:particle-opt-algo}, and we consider both the $T$-fast and $\theta$-fast settings by choosing different step sizes $\eta$, $\tau$.
\end{enumerate}

\paragraph{$\theta$-fast setting}
On this example, we numerically compute the $\partial_{\theta \theta}^2 H$ as explained in Example \ref{ex:l2-regression},
and numerically the Hessians are positive definite in all the cases that we observed.
Thus, we think this problem has almost strongly concavity in $\theta$.
We set  $\eta = 0.2$, $\tau = 0.4$,
and compute for $\gamma$ equals $\{0.5, 1, 2 \}$.
The GDA scheme achieves convergence using 
373, 315, 240 iterations respectively.
The results are shown in Figure \ref{fig:2d-regression-gda-theta-fast}.
When $\gamma$ increases, 
the found solution shows larger movement from $x_i$ to $v_i$.
The decrease of gradient norm in the plots demonstrates exponential convergence,
suggesting a near-strongly convex strongly concave property of this problem (at least locally)
possibly due to the simplicity of this example. 
Note that in this case, though we impose $\theta$ as the fast variable,
the gradient norm in $T$ decreases as fast as that in $\theta$ over the GDA iterations, 
and faster when $\gamma$ is small. 
Thus, it could be reasonable to use larger step size on $T$ as well.
We will examine the $T$-fast setting below.

\paragraph{Comparing GDA with Elim} We consider under the $T$-fast setting.
For GDA, we set $\eta = 0.4$, $\tau = 0.2$.
For Elim, $\tau = 0.2$, and $T^*[\theta]$ is solved by the inner-loop in each iteration. 
We set the stopping criterion as both gradient norms in \eqref{eq:GN-on-batch} less than \texttt{Tol}=1e-5.
We also set \text{Tol} as the optimality tolerance of the inner-loop solver of Elim.
We compute with two values of $\gamma = 0.5, 1$ respectively, and the results are shown in Figure \ref{fig:2d-regression-gda-elim}.

When $\gamma = 0.5$, GDA takes  751 iterations to achieve
$({\rm GN}_\theta, {\rm GN}_T)$ = (9.93e-06, 4.06e-07),
and Elim takes 752 iterations to achieve
$({\rm GN}_\theta, {\rm GN}_T)$ =(9.97e-06, 7.00e-06).
Note that by the inner-loop, ${\rm GN}_T$ is always below \texttt{Tol}, as shown in Figure \ref{fig:2d-regression-gda-elim}(b).
The two methods use the same initial value $\{\theta^0, v^0_i \}$, and converge to the same stationary point in this example.
We also record the number of gradient evaluations (NGE) of computing $\partial_T L$  of both methods: 
in GDA, NGE equals number of iterations,
and in Elim, NGE can be larger due to the inner loop. 
$\partial_T L$ is computed on the samples via the gradient in $v_i$, and we count NGE +1 when gradients on all samples are computed once more.
(Number of evaluations of $\partial_\theta L$ is always smaller: it is the same as that of $\partial_T L$ for GDA, and for Elim, it equals the number of outer iterations.)
In addition, we note that the number of function evaluations in Elim can be larger than NGE, because the BFGS with line search can use more function evaluations than gradient evaluations
(the latter equals the number of inner-loop iterations plus 1).
On this example, in Elim, when $k=1$, it takes up to 8 gradient evaluations for BFGS to converge, and in later phase of the Elim iterations the average NGE gradually drops to 1, which makes the method like a single-loop one (this is possible because we warm-start the BFGS by using $v^{k-1}_i$ as the initial value to solve for $v^k_i$ in $k$-th iteration), see Figure \ref{fig:2d-regression-gda-elim}(c).
Even though, over all the iterations, Elim still uses 1078 NGE, which is significantly more than the NGE of GDA, which is 751.

When increasing $\gamma$ to 1, the behavior of GDA and Elim are similar.
 While both methods use less iterations to converge -- GDA uses 640 iterations, and Elim uses 643 -- 
 the total NGE of Elim is still more than that of GDA.

\subsection{Classification on image data}

We consider the classification task 
where each data sample $x_i$ also has a class label $y_i$ belonging to $K$ classes, namely, $y \in [K]$. 
The decision model $\theta$ is a classifier neural network, and we use the cross-entropy loss.
Strictly speaking, the $K$-classes render the setting here different from before, in that the data distribution $P$ of $x$, the loss function $\ell$, and the transport map $T$ will all depend on the class label $y$,
and we denote the $y$-specific $T$ as $T_y$. 
In our experiments, we apply the minimax optimization in a latent space provided by a pretrained variational autoencoder (VAE) \cite{kingma2013auto}, where the latent representation captures high-level semantic structures in the images and supports smooth interpolation along the data manifold. The Euclidean distance in the latent space, which is the ground distance of the $\W$ space, is more meaningful than that in the image input space. 
Details of the classification network $\theta$, the pre-trained VAE, 
the $y$-specific loss and neural transport map $\varphi$ can be found in Appendix \ref{app:exp-more}.

\begin{figure}[t]
\centering
\begin{subfigure}{0.32\linewidth}
\hspace{-8pt}
\includegraphics[width=\linewidth]{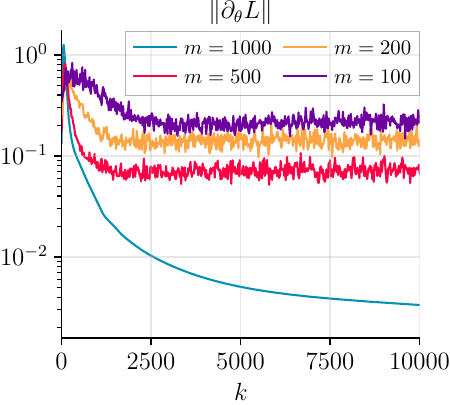}
\subcaption{}
\end{subfigure}
\begin{subfigure}{0.32\linewidth}
\hspace{-8pt}
\includegraphics[width=\linewidth]{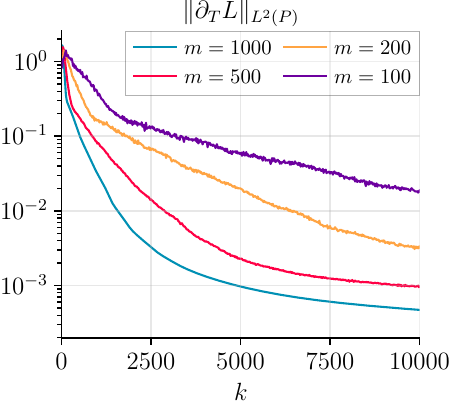}
\subcaption{}
\end{subfigure}
\begin{subfigure}{0.32\linewidth}
\hspace{-8pt}
\includegraphics[width=\linewidth]{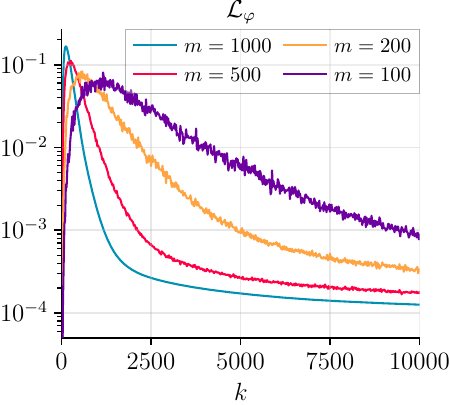}
\subcaption{}
\end{subfigure}
\caption{
GDA with momentum on MNIST
with different batch sizes $m$,
where $k$ stands for the number of batches.
$\gamma=8.0$, $\eta=\tau=0.01$, and momentum 0.7. 
(a) Gradient norm ${\rm GN}_\theta^k$,
and 
(b) gradient norm ${\rm GN}_T^k $, computed on batches as defined in \eqref{eq:GN-on-batch}.
(c) Matching loss $\calL$ for $k$-th batch (average over the $m/m'$ smaller batches, $m'=50$).
All errors are shown in the log scale
and subsampled at intervals of 20 iterations for better visualization.}
\label{fig:mnist-curves-momentum}
\end{figure}

\paragraph{MNIST} 
The dataset consists of 10 classes of grayscale handwritten digits of size $28 \times 28$. 
We use the original training set (having 60,000 samples) to pre-train the VAE with a 32-dimensional latent space.
We then use a subset of $n=1{,}000$ training samples in our minimax optimization, and we implement the GDA scheme with and without momentum, and with varying step sizes $\eta$ and $\tau$.
We use varying batch sizes $m= 1{,}000, 500, 200, 100$ for the particle updates, and the batch size for training $T_\varphi$ is fixed at $m'=50$. See Appendix \ref{app:exp-more} for more details of the setup.

Figure \ref{fig:mnist-curves-momentum} depicts the evolution of the two gradient norms and the matching loss $\calL$ in log scale.
For plain GDA without momentum, the same error plots are shown in Figure \ref{fig:mnist-curves-without-momentum}. 
It can be seen that the gradient norm and matching loss curves all exhibit a steady decreasing trend after a short warm-up period, showing that the mini-batch updates of stochastic GDA are effective. 
While our theory addresses the vanilla GDA, here we observe that the scheme converges after adding momentum, which also accelerates convergence (by comparing  Figure \ref{fig:mnist-curves-momentum} with Figure \ref{fig:mnist-curves-without-momentum}).
With varying step sizes $\eta$ and $\tau$, the scheme remains convergent as shown in Figure \ref{fig:mnist-curves-step-size}, where the convergence of particles (measured by gradient norm in $T$) is faster when using a larger step size $\eta$.
In addition, the matching loss $\calL$ decreases at a pace comparable to that of $\|\partial_T L\|_{L^2(P)}$ and does not exhibit noticeable delay, indicating that the concurrent training of $T_\varphi$ with the particle iterations is efficient.

After the neural transport map $T_\varphi$ is trained, we apply it to the test set of MNIST to compute from a test sample $x_i'$ the transported point $T(x_i')$ (in the latent space). 
To illustrate the transport map, we linearly interpolate between $x_i'$ and $T(x_i')$ and show the images after decoding to the input space. 
In Figure \ref{fig:interpolation-panel}(a), we show the results on the top sample from 5 classes (in the test set) that achieves the largest magnitude of change of pixel values in each class.
The model is trained with $m = 500$ and $k=20{,}000$ iterations (batches),
and the results are almost identical to when $m=n=1{,}000$.
(The results with a smaller batch size like $m=100$ are also similar, but have some visual discrepancy from the $m=1{,}000$ result.)
The panel shows semantically meaningful deformations of the digits towards a different class (labeled by $\theta_0$). 
To further examine the out-of-sample extension performance of our model,
we also plot the interpolation trajectories between $x_i$ and its corresponding $v_i$ (solved by the particle optimization) where $x_i$ is the nearest neighbor of $x_i'$ in the training set (Figure \ref{fig:train_vs_test}). The visual similarity between the trajectory of $x_i$ and $x_i'$ indicates that the neural transport $T_\varphi$ effectively generalizes to the test samples.

\begin{figure}[t]
\centering
\begin{subfigure}{0.48\linewidth}
\includegraphics[width=\linewidth]{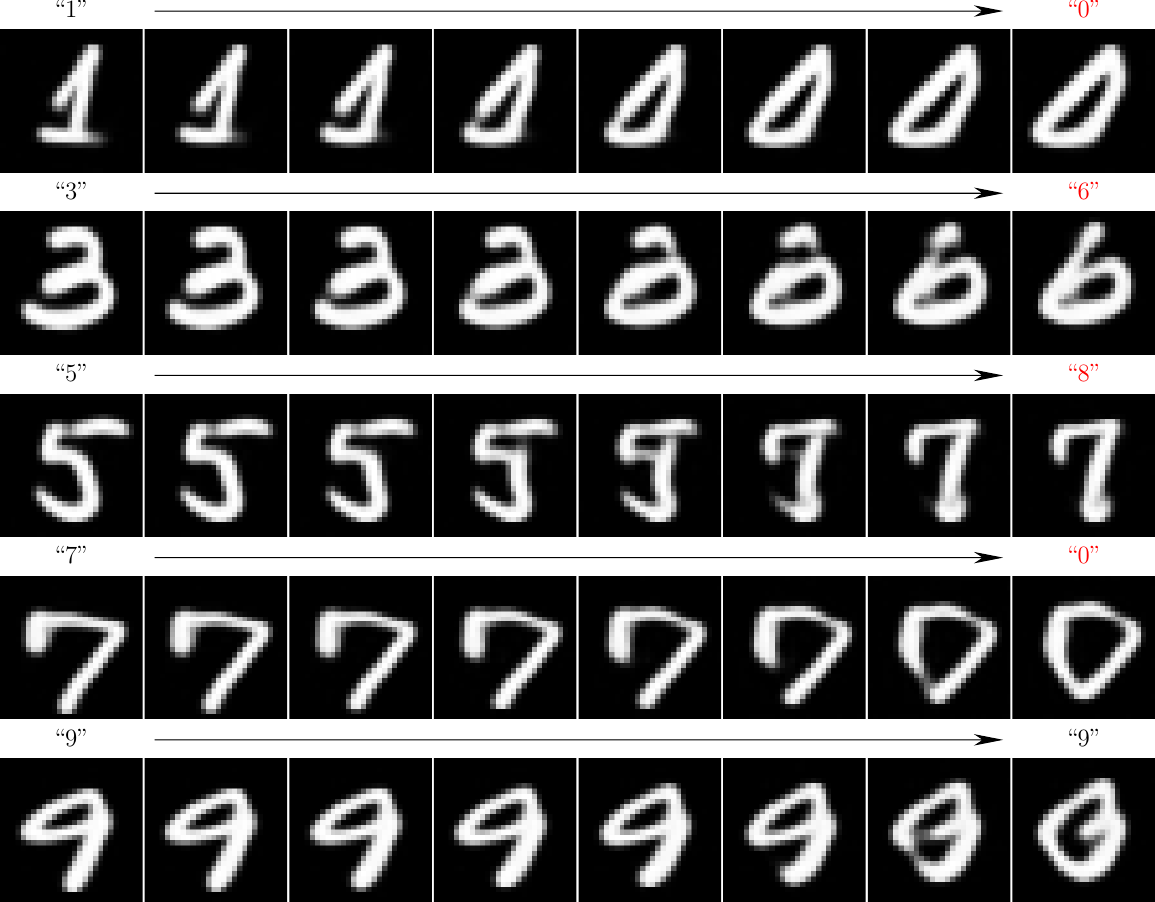}
\caption{MNIST}
\end{subfigure}
\hfill
\begin{subfigure}{0.48\linewidth}
\includegraphics[width=\linewidth]{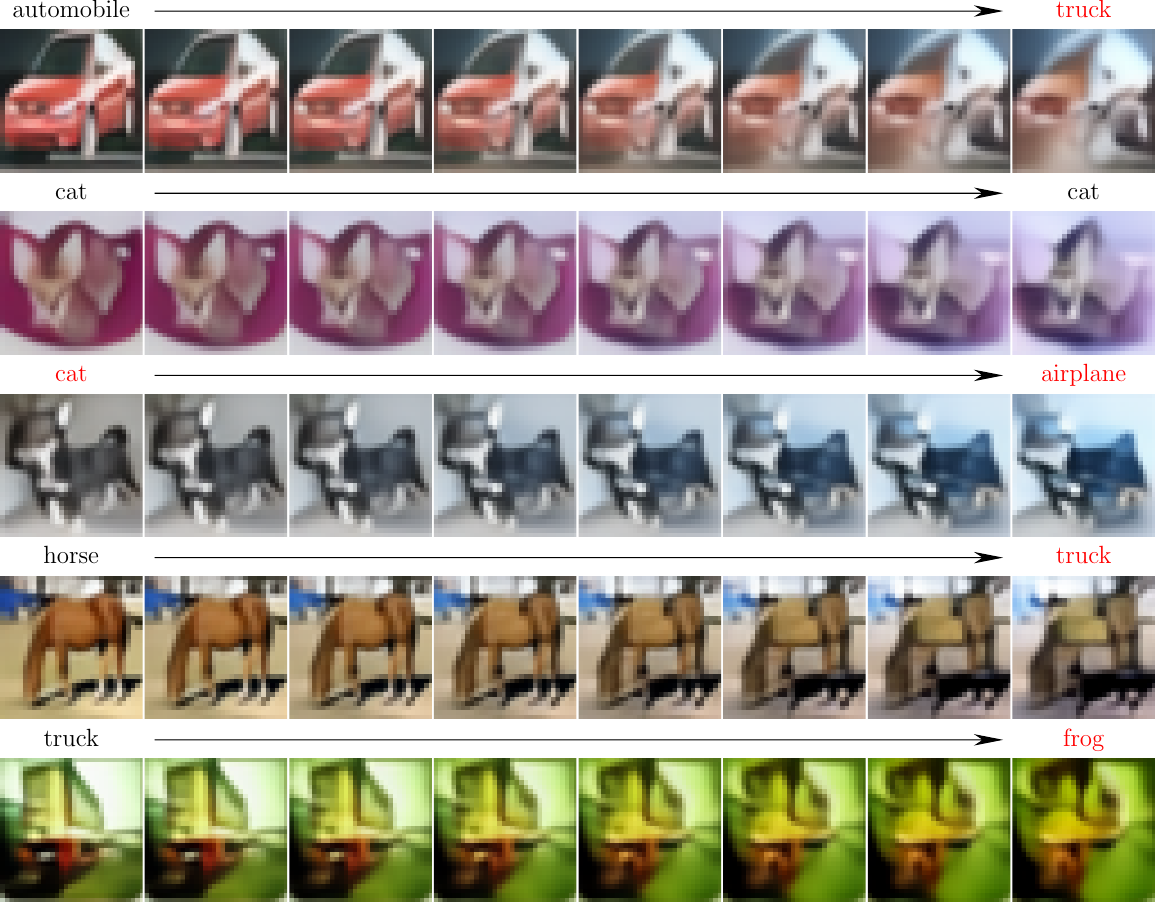}
\caption{CIFAR-10}
\end{subfigure}
\caption{
\label{fig:interpolation-panel}
Interpolation trajectories between $x_i$ and $T(x_i)$
by the learned neural transport $T_\varphi$ on test samples $x_i$. 
Class labels predicted by $\theta^0$ are shown for $x_i$ and $T(x_i)$, and incorrect predictions are colored in red. 
For both MNIST and CIFAR-10,
we apply the GDA with $m=500$ and $k=20{,}000$ batches.}
\label{fig:interpolation-panels}
\end{figure}
 
\paragraph{CIFAR-10} The dataset consists of colored images of size $32 \times 32 $ across 10 object categories. 
We use the original training set of 50,000 images to pretrain the VAE with a 256-dimensional latent space,
and take 200 training images from each class to form a subset of 2,000 samples to use in our minimax optimization.
The overall experimental setup is similar to that of MNIST.
Specifically, we adopt GDA with momentum 0.9, 
step sizes $\eta=\tau=0.001$,
batch size $m=500$ (and $m'=100$ in training $T_\varphi$),
and train for $k=20,000$ batches.
The other details can be found in Appendix \ref{app:exp-more}. 
The convergence of the gradient norms in $\theta$ and $T$ and the matching loss are similar to those on MNIST, and the interpolation panel on the test set is shown in Figure \ref{fig:interpolation-panels}(b).
The learned neural transport $T_\varphi$ successfully generates sample images exhibiting meaningful semantic deformations of the objects, demonstrating the effectiveness of our method on this higher-dimensional example.

\section{Discussion}\label{sec:discuss}

 We begin with a few remarks on the PL condition in Assumption \ref{assump:PL-ptwise-in-T}.
Although somewhat strong, this pointwise PL requirement can be interpreted as imposing a regularizing effect on the transport map $T$, arising fundamentally from the quadratic anchoring term $ -\frac{1}{2 \gamma}\|T(x)-x\|^2$ in the objective.
Geometrically, this term anchors  $T(x)$ to its input $x$ and prevents the map from collapsing, for example, to a single worst-case point.
At the same time, we think it is possible to relax the pointwise PL condition by a one in $T$ that is ``in expectation over $P$'',
 and specifically, the $T$-Error Bound \eqref{eq:EB-T} shown in Lemma \ref{lemma:T-fast-Tstar-phi}
 (together with other needed regularity condition of $T^*(\theta)$).
 It is also possible to relax the uniqueness assumption on $v^*(\theta;x)$, e.g. to allow the maximizers being in a compact set,  following standard analysis in the vector space.
 
This work can also be extended in several other directions. 
 First, we focused on GDA updates, but the framework based on the transport map $T \in L^2(P)$ could accommodate other first-order minimax algorithms to apply to the Wasserstein minimax problem here.
Examples include Optimistic Gradient Descent-Ascent (OGDA) and Extragradient (EG) \cite{nemirovski2004prox,mokhtari2020convergence} methods, which are potentially more stable than GDA and have been studied in the GAN training \cite{daskalakis2018training, chavdarova2019reducing}.
Strengthening the theoretical analysis to establish convergence rates for these methods in our framework is an interesting avenue for future work. At the same time, extending the approach to more high-dimensional data,
including images and beyond, may reveal additional practical insights.

\subsubsection*{Acknowledgments}
The authors thank 
Johannes Milz, 
Jiajin Li
for helpful discussions. 
The work of XC and YX was
partially supported by NSF DMS-2134037.
XC was also partially supported by 
NSF DMS-2237842 
and the Simons Foundation (MPS-MODL-00814643).
The work of 
LZ, YZ and YX 
was partially funded 
by NSF DMS-2134037, CMMI-2112533, and the Coca-Cola Foundation.

\bibliographystyle{plain} 
\bibliography{opt}

\begin{thebibliography}{10}

\bibitem{amari2008information}
Shun-ichi Amari.
\newblock Information geometry and its applications: Convex function and dually
  flat manifold.
\newblock In {\em LIX Fall Colloquium on Emerging Trends in Visual Computing},
  pages 75--102. Springer, 2008.

\bibitem{amari2016information}
Shun-ichi Amari.
\newblock {\em Information geometry and its applications}, volume 194.
\newblock Springer, 2016.

\bibitem{ambrosio2008gradient}
Luigi Ambrosio, Nicola Gigli, and Giuseppe Savar{\'e}.
\newblock {\em Gradient flows: In metric spaces and in the space of probability
  measures}.
\newblock Springer Science \& Business Media, 2008.

\bibitem{attouch1983convergence}
H{\'e}dy Attouch and Roger J-B Wets.
\newblock A convergence theory for saddle functions.
\newblock {\em Transactions of the American Mathematical Society},
  280(1):1--41, 1983.

\bibitem{ben2013robust}
Aharon Ben-Tal, Dick Den~Hertog, Anja De~Waegenaere, Bertrand Melenberg, and
  Gijs Rennen.
\newblock Robust solutions of optimization problems affected by uncertain
  probabilities.
\newblock {\em Management Science}, 59(2):341--357, 2013.

\bibitem{bertsimas2000moment}
Dimitris Bertsimas and Jay Sethuraman.
\newblock Moment problems and semidefinite optimization.
\newblock In {\em Handbook of Semidefinite Programming: Theory, Algorithms, and
  Applications}, pages 469--509. Springer, 2000.

\bibitem{blanchet2019quantifying}
Jose Blanchet and Karthyek Murthy.
\newblock Quantifying distributional model risk via optimal transport.
\newblock {\em Mathematics of Operations Research}, 44(2):565--600, 2019.

\bibitem{blanchet2022optimal}
Jose Blanchet, Karthyek Murthy, and Fan Zhang.
\newblock Optimal transport-based distributionally robust optimization:
  Structural properties and iterative schemes.
\newblock {\em Mathematics of Operations Research}, 47(2):1500--1529, 2022.

\bibitem{bohm2023solving}
Axel B{\"o}hm.
\newblock Solving nonconvex-nonconcave min-max problems exhibiting weak minty
  solutions.
\newblock {\em Transactions on Machine Learning Research}, 2023.

\bibitem{bonet2024mirror}
Cl{\'e}ment Bonet, Th{\'e}o Uscidda, Adam David, Pierre-Cyril Aubin-Frankowski,
  and Anna Korba.
\newblock Mirror and preconditioned gradient descent in {W}asserstein space.
\newblock {\em arXiv preprint arXiv:2406.08938}, 2024.

\bibitem{brenier1991polar}
Yann Brenier.
\newblock Polar factorization and monotone rearrangement of vector-valued
  functions.
\newblock {\em Communications on Pure and Applied Mathematics}, 44(4):375--417,
  1991.

\bibitem{brezis2011functional}
Haim Brezis and Haim Br{\'e}zis.
\newblock {\em Functional analysis, Sobolev spaces and partial differential
  equations}, volume~2.
\newblock Springer, 2011.

\bibitem{cai2024accelerated}
Yang Cai, Argyris Oikonomou, and Weiqiang Zheng.
\newblock Accelerated algorithms for constrained nonconvex-nonconcave min-max
  optimization and comonotone inclusion.
\newblock In {\em Proceedings of the 41st International Conference on Machine
  Learning}, pages 5312--5347, 2024.

\bibitem{carmon2020lower}
Yair Carmon, John~C. Duchi, Oliver Hinder, and Aaron Sidford.
\newblock Lower bounds for finding stationary points {I}.
\newblock {\em Mathematical Programming}, 184(1):71--120, 2020.

\bibitem{chavdarova2019reducing}
Tatjana Chavdarova, Gauthier Gidel, Fran{\c{c}}ois Fleuret, and Simon
  Lacoste-Julien.
\newblock Reducing noise in gan training with variance reduced extragradient.
\newblock {\em Advances in Neural Information Processing Systems}, 32, 2019.

\bibitem{chen2007robust}
Xin Chen, Melvyn Sim, and Peng Sun.
\newblock A robust optimization perspective on stochastic programming.
\newblock {\em Operations Research}, 55(6):1058--1071, 2007.

\bibitem{cheng2024convergence}
Xiuyuan Cheng, Jianfeng Lu, Yixin Tan, and Yao Xie.
\newblock Convergence of flow-based generative models via proximal gradient
  descent in {W}asserstein space.
\newblock {\em IEEE Transactions on Information Theory}, 2024.

\bibitem{daskalakis2018training}
Constantinos Daskalakis, Andrew Ilyas, Vasilis Syrgkanis, and Haoyang Zeng.
\newblock Training gans with optimism.
\newblock In {\em International Conference on Learning Representations}, 2018.

\bibitem{delage2010distributionally}
Erick Delage and Yinyu Ye.
\newblock Distributionally robust optimization under moment uncertainty with
  application to data-driven problems.
\newblock {\em Operations Research}, 58(3):595--612, 2010.

\bibitem{diakonikolas2021efficient}
Jelena Diakonikolas, Constantinos Daskalakis, and Michael~I Jordan.
\newblock Efficient methods for structured nonconvex-nonconcave min-max
  optimization.
\newblock In {\em International Conference on Artificial Intelligence and
  Statistics}, pages 2746--2754. PMLR, 2021.

\bibitem{doan2022convergence}
Thinh Doan.
\newblock Convergence rates of two-time-scale gradient descent-ascent dynamics
  for solving nonconvex min-max problems.
\newblock In {\em Learning for Dynamics and Control Conference}, pages
  192--206. PMLR, 2022.

\bibitem{domingo2020mean}
Carles Domingo-Enrich, Samy Jelassi, Arthur Mensch, Grant Rotskoff, and Joan
  Bruna.
\newblock A mean-field analysis of two-player zero-sum games.
\newblock {\em Advances in Neural Information Processing Systems},
  33:20215--20226, 2020.

\bibitem{dongAdversarialDistributionalTraining2020}
Yinpeng Dong, Zhijie Deng, Tianyu Pang, Jun Zhu, and Hang Su.
\newblock Adversarial distributional training for robust deep learning.
\newblock {\em Advances in Neural Information Processing Systems},
  33:8270--8283, 2020.

\bibitem{finlayson2019adversarial}
Samuel~G Finlayson, John~D Bowers, Joichi Ito, Jonathan~L Zittrain, Andrew~L
  Beam, and Isaac~S Kohane.
\newblock Adversarial attacks on medical machine learning.
\newblock {\em Science}, 363(6433):1287--1289, 2019.

\bibitem{gao2023distributionally}
Rui Gao and Anton Kleywegt.
\newblock Distributionally robust stochastic optimization with {W}asserstein
  distance.
\newblock {\em Mathematics of Operations Research}, 48(2):603--655, 2023.

\bibitem{goodfellow2014generative}
Ian~J Goodfellow, Jean Pouget-Abadie, Mehdi Mirza, Bing Xu, David Warde-Farley,
  Sherjil Ozair, Aaron Courville, and Yoshua Bengio.
\newblock Generative adversarial nets.
\newblock {\em Advances in Neural Information Processing Systems}, 27, 2014.

\bibitem{goodfellow2014explaining}
Ian~J Goodfellow, Jonathon Shlens, and Christian Szegedy.
\newblock Explaining and harnessing adversarial examples.
\newblock In {\em International Conference on Learning Representations}, 2015.

\bibitem{grimmer2023landscape}
Benjamin Grimmer, Haihao Lu, Pratik Worah, and Vahab Mirrokni.
\newblock The landscape of the proximal point method for nonconvex--nonconcave
  minimax optimization.
\newblock {\em Mathematical Programming}, 201(1):373--407, 2023.

\bibitem{hajizadeh2024linear}
Saeed Hajizadeh, Haihao Lu, and Benjamin Grimmer.
\newblock On the linear convergence of extragradient methods for
  nonconvex--nonconcave minimax problems.
\newblock {\em INFORMS Journal on Optimization}, 6(1):19--31, 2024.

\bibitem{ho2020denoising}
Jonathan Ho, Ajay Jain, and Pieter Abbeel.
\newblock Denoising diffusion probabilistic models.
\newblock {\em Advances in Neural Information Processing Systems},
  33:6840--6851, 2020.

\bibitem{jin2020local}
Chi Jin, Praneeth Netrapalli, and Michael Jordan.
\newblock What is local optimality in nonconvex-nonconcave minimax
  optimization?
\newblock In {\em International Conference on Machine Learning}, pages
  4880--4889. PMLR, 2020.

\bibitem{jordan1998variational}
Richard Jordan, David Kinderlehrer, and Felix Otto.
\newblock The variational formulation of the {F}okker--{P}lanck equation.
\newblock {\em SIAM Journal on Mathematical Analysis}, 29(1):1--17, 1998.

\bibitem{karimi2016linear}
Hamed Karimi, Julie Nutini, and Mark Schmidt.
\newblock Linear convergence of gradient and proximal-gradient methods under
  the polyak-{\l}ojasiewicz condition.
\newblock In {\em Joint European Conference on Machine Learning and Knowledge
  Discovery in Databases}, pages 795--811. Springer, 2016.

\bibitem{kent2021modified}
Carson Kent, Jiajin Li, Jose Blanchet, and Peter~W Glynn.
\newblock Modified {F}rank {W}olfe in probability space.
\newblock {\em Advances in Neural Information Processing Systems},
  34:14448--14462, 2021.

\bibitem{kingma2013auto}
Diederik~P Kingma and Max Welling.
\newblock Auto-encoding variational bayes.
\newblock {\em arXiv preprint arXiv:1312.6114}, 2013.

\bibitem{korpelevich1976extragradient}
Galina~M Korpelevich.
\newblock The extragradient method for finding saddle points and other
  problems.
\newblock {\em Matecon}, 12:747--756, 1976.

\bibitem{kuhn2024distributionally}
Daniel Kuhn, Soroosh Shafieezadeh-Abadeh, and Wolfram Wiesemann.
\newblock Distributionally robust optimization.
\newblock {\em arXiv preprint arXiv:2411.02549}, 2024.

\bibitem{li2021complexity}
Haochuan Li, Yi~Tian, Jingzhao Zhang, and Ali Jadbabaie.
\newblock Complexity lower bounds for nonconvex-strongly-concave min-max
  optimization.
\newblock In {\em Advances in Neural Information Processing Systems 34}, 2021.

\bibitem{li2025nonsmooth}
Jiajin Li, Linglingzhi Zhu, and Anthony Man-Cho So.
\newblock Nonsmooth nonconvex--nonconcave minimax optimization: Primal--dual
  balancing and iteration complexity analysis.
\newblock {\em Mathematical Programming}, pages 1--51, 2025.

\bibitem{lin2020gradient}
Tianyi Lin, Chi Jin, and Michael Jordan.
\newblock On gradient descent ascent for nonconvex-concave minimax problems.
\newblock In {\em International Conference on Machine Learning}, pages
  6083--6093. PMLR, 2020.

\bibitem{liu2025dro}
Jiashuo Liu, Tianyu Wang, Henry Lam, Hongseok Namkoong, and Jose Blanchet.
\newblock {DRO}: A {Python} library for distributionally robust optimization in
  machine learning.
\newblock {\em arXiv preprint arXiv:2505.23565}, 2025.

\bibitem{liu2017stein}
Qiang Liu.
\newblock Stein variational gradient descent as gradient flow.
\newblock {\em Advances in Neural Information Processing Systems}, 30, 2017.

\bibitem{liu2025convergence}
Zhangyi Liu, Feng Liu, Rui Gao, and Shuang Li.
\newblock Convergence of mean-field langevin stochastic descent-ascent for
  distributional minimax optimization.
\newblock In {\em Forty-second International Conference on Machine Learning},
  2025.

\bibitem{lu2020hybrid}
Songtao Lu, Ioannis Tsaknakis, Mingyi Hong, and Yongxin Chen.
\newblock Hybrid block successive approximation for one-sided non-convex
  min-max problems: algorithms and applications.
\newblock {\em IEEE Transactions on Signal Processing}, 68:3676--3691, 2020.

\bibitem{madry2018towards}
Aleksander Madry, Aleksandar Makelov, Ludwig Schmidt, Dimitris Tsipras, and
  Adrian Vladu.
\newblock Towards deep learning models resistant to adversarial attacks.
\newblock In {\em International Conference on Learning Representations}, 2018.

\bibitem{mertikopoulos2019optimistic}
Panayotis Mertikopoulos, Bruno Lecouat, Houssam Zenati, Chuan-Sheng Foo, Vijay
  Chandrasekhar, and Georgios Piliouras.
\newblock Optimistic mirror descent in saddle-point problems: Going the extra
  (gradient) mile.
\newblock In {\em International Conference on Learning Representations (ICLR)},
  2019.

\bibitem{michelModelingSecondPlayer2021a}
Paul Michel, Tatsunori Hashimoto, and Graham Neubig.
\newblock Modeling the second player in distributionally robust optimization.
\newblock In {\em International Conference on Learning Representations}, 2021.

\bibitem{mohajerin2018data}
Peyman Mohajerin~Esfahani and Daniel Kuhn.
\newblock Data-driven distributionally robust optimization using the
  wasserstein metric: Performance guarantees and tractable reformulations.
\newblock {\em Mathematical Programming}, 171(1):115--166, 2018.

\bibitem{mokhtari2020convergence}
Aryan Mokhtari, Asuman~E Ozdaglar, and Sarath Pattathil.
\newblock Convergence rate of $\mathcal{O}(1/k)$ for optimistic gradient and
  extragradient methods in smooth convex-concave saddle point problems.
\newblock {\em SIAM Journal on Optimization}, 30(4):3230--3251, 2020.

\bibitem{namkoong2016stochastic}
Hongseok Namkoong and John~C Duchi.
\newblock Stochastic gradient methods for distributionally robust optimization
  with $f$-divergences.
\newblock {\em Advances in Neural Information Processing Systems}, 29, 2016.

\bibitem{nemirovski2004prox}
Arkadi Nemirovski.
\newblock Prox-method with rate of convergence $\mathcal{O}(1/t)$ for
  variational inequalities with lipschitz continuous monotone operators and
  smooth convex-concave saddle point problems.
\newblock {\em SIAM Journal on Optimization}, 15(1):229--251, 2004.

\bibitem{nesterov2007dual}
Yurii Nesterov.
\newblock Dual extrapolation and its applications to solving variational
  inequalities and related problems.
\newblock {\em Mathematical Programming}, 109(2):319--344, 2007.

\bibitem{nitanda2022convex}
Atsushi Nitanda, Denny Wu, and Taiji Suzuki.
\newblock Convex analysis of the mean field langevin dynamics.
\newblock In {\em International Conference on Artificial Intelligence and
  Statistics}, pages 9741--9757. PMLR, 2022.

\bibitem{nouiehed2019solving}
Maher Nouiehed, Maziar Sanjabi, Tianjian Huang, Jason~D Lee, and Meisam
  Razaviyayn.
\newblock Solving a class of non-convex min-max games using iterative first
  order methods.
\newblock {\em Advances in Neural Information Processing Systems}, 32, 2019.

\bibitem{olowononi2020resilient}
Felix~O Olowononi, Danda~B Rawat, and Chunmei Liu.
\newblock Resilient machine learning for networked cyber physical systems: A
  survey for machine learning security to securing machine learning for cps.
\newblock {\em IEEE Communications Surveys \& Tutorials}, 23(1):524--552, 2020.

\bibitem{pethick2022escaping}
Thomas Pethick, Panagiotis Patrinos, Olivier Fercoq, and Volkan Cevher{\aa}.
\newblock Escaping limit cycles: Global convergence for constrained
  nonconvex-nonconcave minimax problems.
\newblock In {\em International Conference on Learning Representations}, 2022.

\bibitem{polyak1964gradient}
Boris~T Polyak.
\newblock Gradient methods for solving equations and inequalities.
\newblock {\em USSR Computational Mathematics and Mathematical Physics},
  4(6):17--32, 1964.

\bibitem{rahimian2022frameworks}
Hamed Rahimian and Sanjay Mehrotra.
\newblock Frameworks and results in distributionally robust optimization.
\newblock {\em Open Journal of Mathematical Optimization}, 3:1--85, 2022.

\bibitem{rakhlin2013online}
Alexander Rakhlin and Karthik Sridharan.
\newblock Online learning with predictable sequences.
\newblock In {\em Conference on Learning Theory}, pages 993--1019. PMLR, 2013.

\bibitem{rockafellar1970convex}
R~Tyrrell Rockafellar.
\newblock {\em Convex Analysis}.
\newblock Princeton University Press, 1970.

\bibitem{rockafellar1976monotone}
R~Tyrrell Rockafellar.
\newblock Monotone operators and the proximal point algorithm.
\newblock {\em SIAM Journal on Control and Optimization}, 14(5):877--898, 1976.

\bibitem{rombach2022high}
Robin Rombach, Andreas Blattmann, Dominik Lorenz, Patrick Esser, and Bj{\"o}rn
  Ommer.
\newblock High-resolution image synthesis with latent diffusion models.
\newblock In {\em Proceedings of the IEEE/CVF Conference on Computer Vision and
  Pattern Recognition}, pages 10684--10695, 2022.

\bibitem{salim2020wasserstein}
Adil Salim, Anna Korba, and Giulia Luise.
\newblock The {W}asserstein proximal gradient algorithm.
\newblock {\em Advances in Neural Information Processing Systems},
  33:12356--12366, 2020.

\bibitem{salimans2022progressive}
Tim Salimans and Jonathan Ho.
\newblock Progressive distillation for fast sampling of diffusion models.
\newblock In {\em International Conference on Learning Representations}, 2022.

\bibitem{shapiro2001duality}
Alexander Shapiro.
\newblock On duality theory of conic linear problems.
\newblock {\em Nonconvex Optimization and its Applications}, 57:135--155, 2001.

\bibitem{shapiro2017distributionally}
Alexander Shapiro.
\newblock Distributionally robust stochastic programming.
\newblock {\em SIAM Journal on Optimization}, 27(4):2258--2275, 2017.

\bibitem{sinha2018certifying}
Aman Sinha, Hongseok Namkoong, and John Duchi.
\newblock Certifying some distributional robustness with principled adversarial
  training.
\newblock In {\em International Conference on Learning Representations}, 2018.

\bibitem{song2023consistency}
Yang Song, Prafulla Dhariwal, Mark Chen, and Ilya Sutskever.
\newblock Consistency models.
\newblock In {\em International Conference on Machine Learning}, pages
  32211--32252. PMLR, 2023.

\bibitem{songConstructingUnrestrictedAdversarial2018a}
Yang Song, Rui Shu, Nate Kushman, and Stefano Ermon.
\newblock Constructing unrestricted adversarial examples with generative
  models.
\newblock In {\em Advances in {{Neural Information Processing Systems}}},
  volume~31, 2018.

\bibitem{song2021scorebased}
Yang Song, Jascha Sohl-Dickstein, Diederik~P Kingma, Abhishek Kumar, Stefano
  Ermon, and Ben Poole.
\newblock Score-based generative modeling through stochastic differential
  equations.
\newblock In {\em ICLR}, 2021.

\bibitem{wang2021advsim}
Jingkang Wang, Ava Pun, James Tu, Sivabalan Manivasagam, Abbas Sadat, Sergio
  Casas, Mengye Ren, and Raquel Urtasun.
\newblock Advsim: Generating safety-critical scenarios for self-driving
  vehicles.
\newblock In {\em Proceedings of the IEEE/CVF Conference on Computer Vision and
  Pattern Recognition}, pages 9909--9918, 2021.

\bibitem{wang2020information}
Yifei Wang and Wuchen Li.
\newblock Information {N}ewton's flow: Second-order optimization method in
  probability space.
\newblock {\em arXiv preprint arXiv:2001.04341}, 2020.

\bibitem{wang2016likelihood}
Zizhuo Wang, Peter~W Glynn, and Yinyu Ye.
\newblock Likelihood robust optimization for data-driven problems.
\newblock {\em Computational Management Science}, 13:241--261, 2016.

\bibitem{wenDistributionallyRobustOptimization2025}
Jiaqi Wen and Jianyi Yang.
\newblock Distributionally robust optimization via diffusion ambiguity
  modeling.
\newblock {\em arXiv:2510.22757}, 2025.

\bibitem{wibisono2018sampling}
Andre Wibisono.
\newblock Sampling as optimization in the space of measures: The langevin
  dynamics as a composite optimization problem.
\newblock In {\em Conference on Learning Theory}, pages 2093--3027. PMLR, 2018.

\bibitem{xu2024flow}
Chen Xu, Jonghyeok Lee, Xiuyuan Cheng, and Yao Xie.
\newblock Flow-based distributionally robust optimization.
\newblock {\em IEEE Journal on Selected Areas in Information Theory}, 2024.

\bibitem{xu2020unified}
Zi~Xu, Huiling Zhang, Yang Xu, and Guanghui Lan.
\newblock A unified single-loop alternating gradient projection algorithm for
  nonconvex-concave and convex-nonconcave minimax problems.
\newblock {\em arXiv preprint arXiv:2006.02032}, 2020.

\bibitem{xueDiffusionBasedAdversarialSample2023}
Haotian Xue, Alexandre Araujo, Bin Hu, and Yongxin Chen.
\newblock Diffusion-{{based adversarial sample generation}} for {{improved
  stealthiness}} and {{controllability}}.
\newblock In {\em Thirty-Seventh {{Conference}} on {{Neural Information
  Processing Systems}}}, 2023.

\bibitem{yang2022faster}
Junchi Yang, Antonio Orvieto, Aurelien Lucchi, and Niao He.
\newblock Faster single-loop algorithms for minimax optimization without strong
  concavity.
\newblock In {\em International Conference on Artificial Intelligence and
  Statistics}, pages 5485--5517. PMLR, 2022.

\bibitem{yue2022linear}
Man-Chung Yue, Daniel Kuhn, and Wolfram Wiesemann.
\newblock On linear optimization over {W}asserstein balls.
\newblock {\em Mathematical Programming}, 195(1):1107--1122, 2022.

\bibitem{zhang2020single}
Jiawei Zhang, Peijun Xiao, Ruoyu Sun, and Zhiquan Luo.
\newblock A single-loop smoothed gradient descent-ascent algorithm for
  nonconvex-concave min-max problems.
\newblock {\em Advances in Neural Information Processing Systems},
  33:7377--7389, 2020.

\bibitem{zhang2022lower}
Junyu Zhang, Mingyi Hong, and Shuzhong Zhang.
\newblock On lower iteration complexity bounds for the convex-concave saddle
  point problems.
\newblock {\em Mathematical Programming}, 194(1--2):901--935, 2022.

\bibitem{zhang2021complexity}
Siqi Zhang, Junchi Yang, Crist{\'o}bal Guzm{\'a}n, Negar Kiyavash, and Niao He.
\newblock The complexity of nonconvex-strongly-concave minimax optimization.
\newblock In {\em Proceedings of the 37th Conference on Uncertainty in
  Artificial Intelligence (UAI 2021)}, pages 482--492. PMLR, 2021.

\bibitem{zheng2023universal}
Taoli Zheng, Linglingzhi Zhu, Anthony Man-Cho So, Jos{\'e} Blanchet, and Jiajin
  Li.
\newblock Universal gradient descent ascent method for nonconvex-nonconcave
  minimax optimization.
\newblock {\em Advances in Neural Information Processing Systems},
  36:54075--54110, 2023.

\bibitem{zhu2024distributionally}
Linglingzhi Zhu and Yao Xie.
\newblock Distributionally robust optimization via iterative algorithms in
  continuous probability spaces.
\newblock {\em arXiv preprint arXiv:2412.20556}, 2024.

\end{thebibliography}

\appendix
\setcounter{table}{0}
\setcounter{figure}{0}
\renewcommand{\thetable}{A.\arabic{table}}
\renewcommand{\thefigure}{A.\arabic{figure}}
\renewcommand{\thelemma}{A.\arabic{lemma}}

\setcounter{assumption}{0} \renewcommand{\theassumption}{A.\arabic{assumption}}

\section{Proofs}\label{app:proofs}

\subsection{Auxiliary lemma}

\begin{lemma}\label{lemma:Lip-is-L2}
Suppose $P \in \calP_2(\R^d)$,
a measurable set $A \subset \R^d$ satisfies that $P(A) =1$,
 and $v: A \to \R^d$ is Lipschitz on $A$,
 that is, for some $a \ge 0$,
 $ \| v(\tilde x) - v(x) \| \le a \| \tilde x - x\|$, 
 $\forall \tilde x, x \in A$, 
  then $v \in L^2(P)$.
\end{lemma}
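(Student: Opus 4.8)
The goal is to show that a Lipschitz map $v$ defined on a full-measure set $A$ belongs to $L^2(P)$, i.e.\ $\E_{x \sim P}\|v(x)\|^2 < \infty$. The plan is to reduce the $L^2$-integrability of $v$ to the $L^2$-integrability of the identity map, which holds because $P \in \calP_2(\R^d)$ by assumption. First I would fix an arbitrary basepoint $x_0 \in A$ (such a point exists since $P(A) = 1$ forces $A \neq \emptyset$). Then for every $x \in A$ the Lipschitz bound gives $\|v(x)\| \le \|v(x_0)\| + \|v(x) - v(x_0)\| \le \|v(x_0)\| + a\|x - x_0\|$.

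Next I would square this pointwise bound and use the elementary inequality $(p+q)^2 \le 2p^2 + 2q^2$ to get $\|v(x)\|^2 \le 2\|v(x_0)\|^2 + 2a^2\|x - x_0\|^2$ for all $x \in A$, and then bound $\|x - x_0\|^2 \le 2\|x\|^2 + 2\|x_0\|^2$ once more, so that $\|v(x)\|^2 \le C_1 + C_2 \|x\|^2$ on $A$ for absolute constants $C_1 = 2\|v(x_0)\|^2 + 4a^2\|x_0\|^2$ and $C_2 = 4a^2$ depending only on $a$, $x_0$, and $v(x_0)$.

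Finally, since $P(A) = 1$, integrating against $P$ gives
\[
\E_{x\sim P}\|v(x)\|^2 = \int_A \|v(x)\|^2\, dP(x) \le C_1 + C_2 \int_{\R^d}\|x\|^2\, dP(x) < \infty,
\]
where finiteness of the last integral is exactly the defining property of $P \in \calP_2(\R^d)$. One should also note in passing that $v$ is measurable: being Lipschitz on $A$ it is continuous on $A$, hence Borel measurable there, which is all that is needed for the integral to make sense. This completes the argument.

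\textbf{Main obstacle.} There is essentially no serious obstacle here; the only thing to be slightly careful about is the measurability remark and the bookkeeping that all constants genuinely depend only on the stated data (the Lipschitz constant $a$ and a fixed point of $A$), not on $x$. The proof is a routine two-step estimate.
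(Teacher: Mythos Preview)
Your proposal is correct and follows essentially the same approach as the paper: fix a basepoint $x_0 \in A$, use the Lipschitz bound to get $\|v(x)\| \le \|v(x_0)\| + a\|x - x_0\|$, then square and reduce to the finiteness of $\int \|x\|^2\,dP(x)$ guaranteed by $P\in\calP_2$. The paper's proof differs only cosmetically (it applies the triangle inequality $\|x-x_0\|\le\|x\|+\|x_0\|$ before squaring, whereas you square first and then split), and it omits the measurability remark you included.
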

\begin{proof}[Proof of Lemma \ref{lemma:Lip-is-L2}]
$A$ cannot be empty because $P(A)=1$,
then  there is at least one point $x_0 \in A$  and $\| v(x_0)\| < \infty$.
For any $x \in A$, we have that 
\[
\| v(x) \| \le \| v(x_0)  \| + a \| x - x_0\|, \quad \forall x \in A.
\]
This gives that 
\begin{align*}
\E_{x\sim P} \|v(x)\|^2 
&= \int_{A} \|v(x)\|^2 dP(x) \\
&\le \int_A (\| v(x_0)  \| + a \| x\|  + a\| x_0\| )^2 dP(x) \\
& \le  2 (\| v(x_0)  \|+a \| x_0\| )^2 + 2 a^2 \E_{x\sim P}\| x \|^2 < \infty,
\end{align*}
where the first equality is by that $P(A)=1$,
and to show that the last row is finite we used that $\E_{x\sim P} \|x\|^2 < \infty$ due to that $P \in \calP_2(\R^d)$.
\end{proof}

\subsection{Proofs in Section \ref{sec:T-fast}}

\begin{proof}[Proof of Lemma \ref{lemma:theta-fast-lsmooth-H}]
It is possible to show that $L$ is $l$-smooth in the sense of \eqref{eq:def-L-smoothness},
which then implies coordinate $l$-smoothness.
Here, we prove using the definition of coordinate $l$-smoothness as some intermediate estimates will be used in later analysis. 
We claim that 
\begin{align}
\| \partial_\theta L( \tilde \theta,  T) - \partial_\theta L(  \theta, T) \|
&	\le l_0 \| \tilde \theta - \theta \|, & \forall T, \tilde \theta, \theta, \label{eq:coord-smooth-L-11} \\
\| \partial_\theta L( \theta, \tilde T) - \partial_\theta L(  \theta, T) \|
&	\le l_0 \| \tilde T - T\|_{L^2(P)},   & \forall \tilde T,  T, \theta,  \label{eq:coord-smooth-L-12} \\
\| \partial_T L( \tilde \theta,  T) - \partial_T L(  \theta, T) \|_{L^2(P)}
&	\le l_0 \| \tilde \theta - \theta \|,  & \forall T, \tilde \theta, \theta, \label{eq:coord-smooth-L-21} \\
\| \partial_T L(  \theta,  \tilde T) - \partial_T L(  \theta, T) \|_{L^2(P)}
&	\le (l_0 + \frac{1}{\gamma} )\| \tilde T - T\|_{L^2(P)},  & \forall \tilde T,  T, \theta. \label{eq:coord-smooth-L-22}
\end{align}
If true,  this proves the coordinate $l$-smoothness of $L$ by Definition \ref{def:coord-L-smoothness-M}
with $l = l_0 + 1/\gamma$.

Recall that $\forall \theta \in \R^p, T \in L^2(P)$,
\begin{equation*}
\partial_T L(\theta, T)(x)
=   \partial_v \ell( \theta, T(x) ) - \frac{1}{\gamma} (T(x) - x), \quad P-a.s.
\end{equation*}
\begin{equation*}
 \partial_\theta L(  \theta, T) 
= \E_{x \sim P} \partial_\theta \ell( \theta, T(x)).
 \end{equation*}
We then have
\[
\| \partial_\theta L( \tilde \theta,  T) - \partial_\theta L(  \theta, T) \|
\le \E_{x \sim P} 
	\| \partial_\theta \ell( \tilde  \theta, T(x)) - \partial_\theta \ell( \theta, T(x)) \|
\le l_0 \| \tilde \theta - \theta\|,
\]
where the second inequality is by that $\ell( \theta ,v) $ is $l_0$-smooth on $\R^p \times \R^d$.
This proves  \eqref{eq:coord-smooth-L-11}.
Similarly,
\begin{align*}
\| \partial_\theta L( \theta,   \tilde T) - \partial_\theta L(  \theta, T)  \|
& \le   \E_{x \sim P} 
	\|  \partial_\theta \ell(  \theta, \tilde  T(x)) - \partial_\theta \ell( \theta, T(x)) \| \\
& \le l_0  \E_{x \sim P} \| \tilde T(x) - T(x)\|
\le l_0 \| \tilde T - T\|_{L^2(P)},
\end{align*}
where the second inequality is again by the $l_0$-smoothness of $\ell$. 
This proves \eqref{eq:coord-smooth-L-12}.

To see \eqref{eq:coord-smooth-L-21}, observe that for $P$-a.s. $x$,
\[
\| \partial_T L(\tilde \theta, T)(x) - \partial_T L(\theta, T)(x)  \|
= \| \partial_v \ell( \tilde \theta, T(x) ) -   \partial_v \ell( \theta, T(x) ) \|
\le l_0 \| \tilde \theta - \theta\|
\]
by the $l_0$-smoothness of $\ell$,
and then \eqref{eq:coord-smooth-L-21} follows.
Finally, for $P$-a.s. $x$,
\[
\partial_T L(\theta, \tilde T)(x) - \partial_T L(\theta, T)(x)
= (\partial_v \ell( \theta, \tilde T(x) ) 
	- \partial_v \ell( \theta,  T(x) ))
	 - \frac{1}{\gamma} (\tilde T(x) - T(x)),
\]
and then
\begin{align*}
& \| \partial_T L(\theta, \tilde T) - \partial_T L(\theta, T)\|_{L^2(P)}
 \le 
(\E_{x \sim P} \| \partial_v \ell( \theta, \tilde T(x) ) 
	- \partial_v \ell( \theta,  T(x) ) \|^2 )^{1/2} + \frac{1}{\gamma} \| \tilde T - T\|_{L^2(P)} \\
& \le (  \E_{x \sim P} 	l_0^2 \| \tilde T(x) - T(x)\|^2 )^{1/2} + \frac{1}{\gamma} \| \tilde T - T\|_{L^2(P)}
= ( l_0 + \frac{1}{\gamma} ) \| \tilde T - T\|_{L^2(P)},
\end{align*}
where the second equality is again due to the $l_0$-smoothness of $\ell$. 
This proves \eqref{eq:coord-smooth-L-22}.
\end{proof}

\begin{proof}[Proof of Lemma \ref{lemma:T-fast-Tstar-phi}]
i) For each fixed $\theta$, we first show that $v^*(\theta; x)$ is Lipschitz in $x$ and in $L^2(P)$.
Note that $v^*(\theta; x)$ is defined for $P$-a.s. $x$,
thus we are to show that on the set where it is defined, called $A$, the mapping $v^*(\theta; \cdot): A \to \R^d$ is Lipschitz.  
Specifically, Suppose (for this $\theta$) Assumption \ref{assump:PL-ptwise-in-T}  holds for $x \in A \subset \R^d$, where $P(A)=1$, then $v^*(\theta; x) $ is well-defined for $x \in A$. 
For any $\tilde x, x \in A$, 
let $v^* = v^*(\theta; x)$, $\tilde v^* = v^*(\theta;  \tilde x)$. 
We have \eqref{eq:EB-h-theta-v-x} hold with $x$ and $v = \tilde v^*$, and that is $
\| \nabla_v h(\theta, \tilde v^*; x) \| \ge \mu \| \tilde v^* - v^*\|$.
Meanwhile, $\nabla_v h(\theta, \tilde v^*; \tilde x) = 0$ by first-order optimality, and then
\[
 \nabla_v h(\theta, \tilde v^*; x)
 =  \nabla_v h(\theta, \tilde v^*; x) - \nabla_v h(\theta, \tilde v^*; \tilde x) = \frac{1}{\gamma}(x-\tilde x).
\]
Putting together, this shows that
$
 \mu \| \tilde v^* - v^*\| \le \frac{1}{\gamma} \| x - \tilde x\|$,
 which means that $v^*(\theta; x)  $ is $(\gamma \mu)^{-1}$-Lipschitz in $x$ on $A$.
Then,  by Lemma \ref{lemma:Lip-is-L2},
 this also implies that $v^*(\theta; \cdot)$ is in $L^2(P)$.

We now have that $T^*(\theta)(x) = v^*(\theta; x)$ is a member of $L^2(P)$,
and next, we  show that it is the unique maximizer of $\max_T L(\theta, T)$.
By that $v^*(\theta; x) = \argmax_v h(\theta, v; x)$,  we know that for any $T \in L^2 (P)$,
\[
L(\theta, T) 
= \E_{x \sim P} h(\theta, T(x); x)
\le \E_{x \sim P} h(\theta, v^*(\theta; x); x) = L(\theta, T^*(\theta)).
\]
Thus $T^*(\theta)$ is a maximizer. To show the uniqueness, suppose another $\tilde T$ also has $L(\theta, \tilde T) = L(\theta, T^*(\theta))$.
The $\mu$-PL condition implies Quadratic Growth (QG) \cite[Theorem 2]{karimi2016linear}, namely
\[
h( \theta, v^*(\theta;x); x) - h(\theta, v; x) \ge  \frac{\mu}{2} \|v^*(\theta;x) -v \|^2, \quad \forall v \in \R^d.
\]
Letting $v = \tilde T(x)$ and taking expectation over $x \sim P$, we have
\begin{align*}
\frac{\mu}{2}  \E_{x\sim P} \| T^*(\theta)(x) - \tilde T(x) \|^2 
& \le  \E_{x\sim P} (h( \theta, T^*(\theta)(x); x) - h(\theta, \tilde T(x); x) )  \\
& = L(\theta, T^*(\theta)) - L (\theta, \tilde T)
=0.
\end{align*}
This shows that $\| T^*(\theta) - \tilde T \|_{L^2(P)} = 0$, namely $\tilde T = T^*(\theta)$ in $L^2(P)$.

To prove \eqref{eq:EB-T}, for fixed $\theta$ and any $T \in L^2(P)$,
\begin{align*}
\| T - T^*(\theta) \|_{L^2(P)}^2
& = \E_{x \sim P} \| T(x) - v^*(\theta; x) \|^2 \\
& \le   \frac{1}{\mu^2}  \E_{x \sim P} \| \nabla_v h(\theta, T(x); x)\|^2 \\
& =  \frac{1}{\mu^2}  \| \partial_T L(\theta, T)\|_{L^2(P)}^2,
\end{align*}
where the first inequality is by  \eqref{eq:EB-h-theta-v-x}, and in the last equality we used that  
\[
\nabla_v h( \theta, T(x); x) = \partial_v \ell(\theta, T(x)) - \frac{1}{\gamma}(T(x)-x)
= \partial_T L(\theta, T)(x), \quad P-a.s. 
\]
which holds by definition.

ii) For any $\tilde \theta, \theta \in \R^p$, we apply \eqref{eq:EB-T} to $T= T^*(\tilde \theta)$ to obtain 
\[
\mu \| T^*(\tilde \theta) - T^*(\theta) \|_{L^2(P)}
\le 
\| \partial_T L(\theta, T^*(\tilde \theta) )\|_{L^2(P)}.
\]
Meanwhile, the first order optimality gives that $\partial_v h(\tilde \theta, v^*(\tilde \theta; x); x ) =0$, 
which means that $\partial_T L(\tilde \theta, T^*(\tilde \theta))(x) = 0$ for $P$-a.s. $x$.
Putting together, we have
\[
\mu  \| T^*(\tilde \theta) - T^*(\theta) \|_{L^2(P)}
\le 
\| \partial_T L(\theta, T^*(\tilde \theta) ) - \partial_T L(\tilde \theta, T^*(\tilde \theta) )  \|_{L^2(P)}
\le l \| \theta - \tilde \theta \|,
\]
where in the last inequality is by that $L$ is   coordinate  $l$-smooth on $\R^p \times L^2(P) $ 
shown in Lemma \ref{lemma:theta-fast-lsmooth-H}.

iii)  By that $T^*(\theta)= v^*(\theta; \cdot)$ as shown in i), we have
$
\phi(\theta) = L(\theta, T^*(\theta)) =  \E_{x \sim P} h(\theta, v^*(\theta; x); x).
$
By Danskin's theorem, 
\[
\partial \phi(\theta) 
= \E_{x \sim P} \partial_\theta h(\theta, v^*(\theta; x); x)
= \E_{x \sim P} \partial_\theta \ell (\theta, v^*(\theta; x)).
\]
This shows that 
$\partial \phi(\theta)  = \partial_\theta L(\theta, T^*(\theta))$.
Then, we have
\begin{align*}
\| \partial \phi(\tilde \theta)  - \partial \phi(\theta)  \|
& = \| \partial_\theta L(\tilde \theta, T^*(\tilde \theta)) - \partial_\theta L(\theta, T^*(\theta)) \| \\
& \le \| \partial_\theta L(\tilde \theta, T^*(\tilde \theta)) - \partial_\theta L(\tilde \theta, T^*(\theta)) \| 
	+ \| \partial_\theta L(\tilde \theta, T^*( \theta)) - \partial_\theta L(\theta, T^*(\theta)) \|   \\
& \le 	l (\| T^*(\tilde \theta) - T^*( \theta) \| +  \| \tilde \theta - \theta\| ),
\end{align*}
where the second inequality is by that $L$ is   coordinate  $l$-smooth on $\R^p \times L^2(P) $ (Lemma \ref{lemma:theta-fast-lsmooth-H}).
Combined with ii), we have that
$\| \partial \phi(\tilde \theta)  - \partial \phi(\theta)  \| \le l (1+\kappa) \| \tilde \theta - \theta\|$.
\end{proof}

\begin{proof}[Proof of Lemma \ref{lemma:desdent-coordinate-M}]
The proof of i) is standard since $\theta $ is in $\R^p$,
where one uses 
\[
\| \partial_\theta M(\theta' , T) - \partial_\theta M( \theta, T)  \| \le l \| \theta' - \theta\|, 
\quad \forall \theta', \theta \in \R^p,
\]
due to the   coordinate  $l$-smoothness of $M$.

ii) is proved similarly by considering variables in $L^2(P)$.
 Specifically, by considering the line $T(s):= T + s (\tilde T - T)$, $ s \in [0,1] $, in $L^2(P)$, we have
 \begin{align*}
 M(\theta, \tilde T ) - M( \theta, T) - \langle \partial_T M( \theta, T), \tilde T - T \rangle_{L^2(P)} 
& = \int_0^1
 \langle \partial_T M( \theta, T(s) )-   \partial_T M( \theta, T), \tilde T - T \rangle_{L^2(P)}
 ds, 
 \end{align*}
which gives that
 \begin{align*}
 & |  M(\theta, \tilde T) - M(\theta, T) - \langle \partial_T M( \theta, T), \tilde T - T \rangle_{L^2(P)} | \\
 & \le 
 \int_0^1
\| \partial_T M(\theta, T(s))-   \partial_T M( \theta, T)\| \| \tilde T - T \|_{L^2(P)} ds  \\
& \le \int_0^1
l \| T(s) - T\|_{L^2(P)}
 \| \tilde T - T \|_{L^2(P)} ds
 = \frac{l}{2} \| \tilde T - T \|_{L^2(P)}^2, 
 \end{align*}
 where the second inequality is by that 
 \[
\| \partial_T M(\theta, T') - \partial_T M( \theta, T)  \| \le l \| T' - T\|_{L^2(P)}, 
\quad \forall T', T \in L^2(P)
\]
due to the   coordinate  $l$-smoothness.
\end{proof}

 \begin{proof}[Proof of Theorem \ref{thm:rate-T-fast}]
 The proof follows the same strategy as in \cite{yang2022faster} in the deterministic case,
 and the paper considered vector-space min-max problem in the NC-PL setting.
 Here, we consider GDA without alternating and our variable $T$ is in the functional space $L^2(P)$,
 so some adaptation and additional analysis is needed.
 
Recall that $\kappa = l/\mu$, $L_1 = l (1+\kappa)$.
We assume that
\begin{equation}\label{eq:proof-T-fast-condition-eta-tau}
L_1 \tau \le 1,  \quad l \eta \le 1,
\end{equation}
which will be fulfilled later in the proof.

\vspace{2pt}
$\bullet$ Descent of primal: 
 In the $T$ step, by Lemma \ref{lemma:desdent-coordinate-M} ii),
 \begin{align}
L(\theta_{k+1}, T_{k+1} ) - L ( \theta_{k+1}, T_{k})
& \ge \langle \partial_T L(\theta_{k+1}, T_k),  T_{k+1}- T_k \rangle_{L^2(P)}
	 -  \frac{l}{2} \| T_{k+1} - T_k\|_{L^2(P)}^2   \nonumber \\
& = \eta  \langle  \partial_T L(\theta_{k+1}, T_{k}), \partial_T L( \theta_k, T_k)  \rangle_{L^2(P)}
	-  \frac{l \eta^2 }{2}  \|  \partial_T L( \theta_k, T_k) \|_{L^2(P)}^2, 
	\label{eq:proof-T-fast-T-step-1}
\end{align}
 where in the equality we have used that $T_{k+1} =T_k + \eta \partial_T L( \theta_k, T_k)$.
 Define 
\[
e_k: =  \partial_T L( \theta_{k+1}, T_k) - \partial_T L( \theta_k, T_k), 
\]
By that $L$ is   coordinate  $l$-smooth on $\R^p \times L^2(P) $ (Lemma \ref{lemma:theta-fast-lsmooth-H}), we have
\begin{equation}\label{eq:proof-T-fast-eqn5}
\| e_k \|_{L^2(P)} 
\le  l \| \theta_{k+1} - \theta_{k}\| 
= l \tau \| \partial_\theta L(  \theta_k, T_k) \|,
\end{equation}
where in the equality we inserted that $\theta_{k+1} =\theta_k  - \tau \partial_\theta L( \theta_k, T_k) $.

Back to \eqref{eq:proof-T-fast-T-step-1}, we have
\[
L(\theta_{k+1}, T_{k+1}) - L(\theta_{k+1}, T_k)
\ge  \eta (1-\frac{l \eta }{2}) \| \partial_T L( \theta_k, T_k) \|_{L^2(P)}^2 
	+ \eta \langle e_k,  \partial_T L( \theta_k, T_k) \rangle_{L^2(P)}.
\]
Note that 
\begin{align*}
-   \langle e_k,  \partial_T L( \theta_k, T_k) \rangle_{L^2(P)}
& \le  \| e_k \|_{L^2(P)} \| \partial_T L( \theta_k, T_k) \|_{L^2(P)}
 \le l \tau \| \partial_\theta L(  \theta_k, T_k) \|  \| \partial_T L( \theta_k, T_k) \|_{L^2(P)}  \\
& \le \frac{l \tau}{2} ( \| \partial_\theta L(  \theta_k, T_k) \| ^2 + \| \partial_T L( \theta_k, T_k) \|_{L^2(P)}^2),
\end{align*}
where the second inequality is by \eqref{eq:proof-T-fast-eqn5}.
We then have
\begin{align}
L(\theta_{k+1}, T_{k+1}) - L(\theta_{k+1}, T_k)
& \ge  \eta (1-\frac{l \eta }{2} - \frac{l \tau}{2}) \| \partial_T L(\theta_k, T_k) \|_{L^2(P)}^2 
	-  \eta \frac{l \tau}{2}  \| \partial_\theta L( \theta_k, T_k) \|^2 \nonumber \\
& \ge \frac{\eta}{4} 	\| \partial_T L(\theta_k, T_k) \|_{L^2(P)}^2   
 	- \frac{\tau}{2}   \| \partial_\theta L( \theta_k, T_k) \|^2,
	\label{eq:proof-T-fast-T-step-2}
\end{align}
where the second inequality is by that $l \eta \le 1$,
and that $ l \tau \le \frac{1}{ 1+\kappa} \le 1/2$  under \eqref{eq:proof-T-fast-condition-eta-tau}.

In the $\theta$-step, by Lemma \ref{lemma:desdent-coordinate-M} i),
\begin{align}
 L( \theta_{k+1},  T_k) - L( \theta_k,  T_k)
 &\ge   \partial_\theta L(\theta_k, T_k) \cdot (\theta_{k+1} - \theta_k )
 	- \frac{l}{2} \| \theta_{k+1} - \theta_k \|^2 \nonumber \\
& = 	-\tau (1+ \frac{l \tau}{2}) \| \partial_\theta L(\theta_k, T_k) \|^2 \nonumber \\
& \ge - \frac{5 \tau }{4}  \| \partial_\theta L(\theta_k, T_k) \|^2 \nonumber,
 \end{align}
 where the last inequality is by $ l \tau \le 1/2$ again.
 Together with \eqref{eq:proof-T-fast-T-step-2},  this gives 
\begin{equation}\label{eq:proof-T-fast-primal-descent-3}
L(\theta_{k+1}, T_{k+1}) - L( \theta_{k}, T_k)
\ge  \frac{\eta}{4} 	 \| \partial_T L(\theta_k, T_k) \|_{L^2(P)}^2 
	 - \frac{7 \tau }{4}  \| \partial_\theta L(\theta_k, T_k) \|^2.
\end{equation}

\vspace{2pt}
$\bullet$ Descent  of $\phi$: 
Because $\phi(\theta)$ is $L_1$-smooth by  Lemma \ref{lemma:T-fast-Tstar-phi} iii), we have
\begin{align}
\phi( \theta_{k+1}) - \phi(\theta_k) 
& \le 
\partial\phi (\theta_k)\cdot (\theta_{k+1} - \theta_k)
	+ \frac{L_1}{2} \|  \theta_{k+1} - \theta_k \|^2  \nonumber \\
& = - \tau \partial \phi (\theta_k) \cdot  \partial_\theta L( \theta_k, T_k) 
	+ \frac{L_1 \tau^2}{2} \|   \partial_\theta L( \theta_k, T_k) \|^2, 
	\label{eq:proof-T-fast-phi-descent-1}
\end{align}
where we used $\theta_{k+1} =\theta_k  - \tau \partial_\theta L( \theta_k, T_k) $ in the equality.
Define 
\[
\delta_k 
:= \partial_\theta L( \theta_k, T_k) - \partial \phi( \theta_k).
\] 
By Lemma \ref{lemma:T-fast-Tstar-phi} iii), $\delta_k= \partial_\theta L( \theta_k, T_k) - \partial_\theta L( \theta_k, T^*(\theta_k))$, and then by that $L$ is   coordinate  $l$-smooth, we have
\[
\| \delta_k\| \le l \| T_k - T^*(\theta_k) \|_{L^2(P)}.
\]
Meanwhile, by \eqref{eq:EB-T}, $\| T_k - T^*(\theta_k) \|_{L^2(P)} \le \frac{1}{\mu}  \| \partial_T L(\theta_k, T_k)\|_{L^2(P)}$.
Putting together, we have
\begin{equation}\label{eq:proof-T-fast-eqn4}
\| \delta_k\| \le \kappa  \| \partial_T L(\theta_k, T_k)\|_{L^2(P)}.
\end{equation}
Inserting the definition of $\delta_k$ into \eqref{eq:proof-T-fast-phi-descent-1} gives that
\[
\phi( \theta_{k+1}) - \phi(\theta_k) 
\le -\tau \| \partial \phi (\theta_k) \|^2
	- \tau  \partial \phi (\theta_k) \cdot \delta_k 
 	+ \frac{\tau}{2} \| \partial \phi (\theta_k) + \delta_k \|^2,
\]
where for the last term on the r.h.s. we used that $L_1 \tau \le 1$ under \eqref{eq:proof-T-fast-condition-eta-tau}.
Expanding $\| \partial \phi (\theta_k) + \delta_k \|^2 
= \| \partial \phi (\theta_k) \|^2 + \| \delta_k \|^2 + 2 \partial \phi (\theta_k) \cdot \delta_k $ gives that 
\begin{equation}\label{eq:proof-T-fast-phi-descent-3}
\phi( \theta_{k+1}) - \phi(\theta_k) 
\le
- \frac{\tau}{2} \| \partial \phi (\theta_k) \|^2
+ \frac{\tau}{2}\| \delta_k \|^2.
\end{equation}

\vspace{2pt}
$\bullet$ Lyapunov function:
We construct the Lyapunov function as
\[
V_k: = V( \theta_k, T_k) = \phi( \theta_k) + \alpha( \phi( \theta_k) - L(\theta_k, T_k)), 
\]
where $\alpha > 0$ will be chosen later. Inserting \eqref{eq:proof-T-fast-primal-descent-3}\eqref{eq:proof-T-fast-phi-descent-3},
\begin{align*}
& V_k - V_{k+1}
 = (1+\alpha) (\phi( \theta_k) - \phi( \theta_{k+1}) ) - \alpha (L(\theta_k, T_k) - L(\theta_{k+1}, T_{k+1}))  \\
& \ge 
	(1+\alpha) ( \frac{\tau}{2} \| \partial \phi (\theta_k) \|^2
 	- \frac{\tau}{2} \| \delta_k \|)
	+ \alpha (  \frac{\eta}{4} 	 \| \partial_T L(\theta_k, T_k) \|_{L^2(P)}^2 
	 - \frac{7 \tau }{4}  \| \partial_\theta L(\theta_k, T_k) \|^2 ) \\
& \ge 
	\tau \frac{1-6\alpha}{2} \| \partial \phi (\theta_k) \|^2
  	- \tau  \frac{1+8\alpha}{2} \| \delta_k \|^2 
	+ \eta \frac{\alpha}{4} 	\| \partial_T L(\theta_k, T_k) \|_{L^2(P)}^2, 
\end{align*}
where the second inequality is by that
$
 \| \partial_\theta L(\theta_k, T_k) \|^2
 = \| \partial \phi (\theta_k)  + \delta_k \|^2
 \le 2 ( \| \partial \phi (\theta_k) \|^2 + \| \delta_k \|^2 ).
$
Inserting \eqref{eq:proof-T-fast-eqn4}, we then have
\[
V_k - V_{k+1}
\ge  \frac{\tau( 1-6\alpha)}{2} \| \partial \phi (\theta_k) \|^2
	+ (  \frac{\eta \alpha}{4} -  \frac{ \tau \kappa^2(1+8\alpha)}{2}  )
			 \| \partial_T L(\theta_k, T_k) \|_{L^2(P)}^2. 
\]

Now choose $\alpha = 1/8$, we have
\[
V_k - V_{k+1}
\ge \frac{\tau}{8} \| \partial \phi (\theta_k) \|^2 
	+  (  \frac{\eta}{32} -  \tau \kappa^2   )
			 \| \partial_T L(\theta_k, T_k) \|_{L^2(P)}^2.
\]
We set  $\eta = 1/l$, $\tau = {1}/{(40 \kappa^2 l)}$,
 and one can verify that this fulfills \eqref{eq:proof-T-fast-condition-eta-tau} and also $\eta/32 = \frac{5}{4} \kappa^2 \tau$.
 As a result,
\begin{equation}
V_k - V_{k+1}
\ge \frac{\tau}{8} ( \| \partial \phi (\theta_k) \|^2 
	+    {2 \kappa^2}   
			 \| \partial_T L(\theta_k, T_k) \|_{L^2(P)}^2).
\end{equation}
By summing $k$ from 0 to $K-1$, we have
\[
\frac{\tau}{8} 
\sum_{k=0}^{K-1} 
 ( \| \partial \phi (\theta_k) \|^2 
	+    {2 \kappa^2}   
			 \| \partial_T L(\theta_k, T_k) \|_{L^2(P)}^2)
\le V_0-V_K.
\]
 We now show that $V_0-V_K$ is upper bounded by a constant independent of $K$:
 Since $\phi$ has a finite lower bound, denoted as $\phi^*$,
 we have
  \[
  V(\theta, T) = \phi(\theta) + \alpha (\phi(\theta) - L(\theta, T)) 
  \ge \phi(\theta) \ge \phi^*.
  \]
  and then
 \[
 V_0 - V_K
 \le V_0 - \phi^*
 = \phi(\theta_0) - \phi^* + \alpha (\phi(\theta_0) - L(\theta_0, T_0)) ) 
 := \Delta_\phi.
 \]
 The constant $\Delta_\phi$ depends on the initial value $(\theta_0, T_0)$.
 This implies that 
\begin{equation*}
\frac{1}{K} \sum_{k=0}^{K-1} 
 ( \| \partial \phi (\theta_k) \|^2 
	+    {2 \kappa^2}   
			 \| \partial_T L(\theta_k, T_k) \|_{L^2(P)}^2)
\le \frac{8 \Delta_\phi}{\tau K} 
= \frac{320 \kappa^2 l \Delta_\phi}{K}.
\end{equation*} 
As a result, when $\frac{320 \kappa^2 l \Delta_\phi}{K} \le \varepsilon^2$, there must be a $k \le K $ s.t.
\begin{equation}\label{eq:proof-T-fast-find-k}
\| \partial \phi (\theta_k) \|^2 
	+    {2 \kappa^2}   
			 \| \partial_T L(\theta_k, T_k) \|_{L^2(P)}^2 \le \varepsilon^2.
\end{equation}
This $k$ satisfies that  $\| \partial \phi (\theta_k) \|  \le \varepsilon$ as claimed in i).

To show that it also satisfies ii),  we derive an upper bound of $\| \partial_\theta L (\theta_k, T_k) \|$: 
For any $( \theta, T) \in  \R^p \times L^2(P)$,
we have
\[
\| \partial_\theta L(  \theta, T)- \partial_\theta L( \theta, T^*(\theta)) \|
\le l \| T -  T^*(\theta) \|_{L^2(P)}
\le  \kappa \| \partial_T L(\theta, T)\|_{L^2(P)}, 
\]
where the first inequality is by the  coordinate  $l$-smoothness of $L$ and the second inequality is by \eqref{eq:EB-T}.
 Apply to $(\theta, T )=(\theta_k, T_k )$, and by triangle inequality, we have
 \begin{align*}
\| \partial_\theta L (\theta_k, T_k) \|
& \le \| \partial_\theta L( \theta_k, T^*(\theta_k)) \| +  \kappa \| \partial_T L(\theta_k, T_k)\|_{L^2(P)} \\
& = \| \partial \phi(\theta_k)\| + \kappa \| \partial_T L(\theta_k, T_k)\|_{L^2(P)},
 \end{align*}
 which implies that $
\| \partial_\theta L (\theta_k, T_k) \|^2 
\le 2 (\| \partial \phi(\theta_k)\|^2 + \kappa^2 \| \partial_T L(\theta_k, T_k)\|_{L^2(P)}^2) 
 $.
 Comparing to \eqref{eq:proof-T-fast-find-k},  we have
 \[
 \varepsilon^2 
\ge  \frac{1}{2} \| \partial_\theta L( \theta_k, T_k)  \|^2
	+    \kappa^2   
			 \| \partial_T L(\theta_k, T_k) \|_{L^2(P)}^2,
 \]
and this proves ii).
 \end{proof}

\subsection{Proofs in Section \ref{subsec:theta-fast-NC-SC}}

\begin{proof}[Proof of Lemma \ref{lemma:theta-fast-Tgood-Phi}]
In i),
the uniqueness of $\theta^*[T]$ follows the strong concavity of $M(T, \cdot)$ in $\theta$.
  Note that the coordinate $l$-smoothness of $M(T,\theta)$ implies 
Lipschitz-continuous gradient (Remark \ref{rk:l-smoothness})
and, in particular, 
that $M(T,\theta)$ has a Lipschitz-continuous gradient in $\theta$.
Then, by \cite[Theorem 2]{karimi2016linear}, $\mu$-strong concavity implies the Error-Bound (EB) in $\theta$
namely the  inequality \eqref{eq:EB-M-T-theta}.

Because the strong concavity in $\theta$ holds for $M(T, \cdot)$ whenever $T \in \bar \calT$,
we have that for any $T \in \calT$,
 $\theta^*[T]$ is well-defined on a neighborhood of $T$
 and $\Psi(T) = M(T, \theta^*[T])$ on the neighborhood.
 Thus, $\Psi$ is differentiable at $T$
 and the expression of $\partial \Psi(T)$ is by Danskin's type argument.

Now for $\tilde T, T \in \calT$, let $\theta^* = \theta^*[T]$ and $\tilde \theta^* = \theta^*[\tilde T]$.
By \eqref{eq:EB-M-T-theta}, we have
\[
\|\partial_\theta M( T, \tilde \theta^*)\| \ge \mu \| \tilde \theta^* - \theta^*\|.
\]
Meanwhile, by that $\partial_\theta M( \tilde T, \tilde \theta^*) = 0$ (since $\tilde \theta^*$ is the maximizer),
\[
\|\partial_\theta M( T, \tilde \theta^*)\|
= \|\partial_\theta M( T, \tilde \theta^*) - \partial_\theta M( \tilde T, \tilde \theta^*) \|
\le l\| T - \tilde T\|_{L^2(P)},
\]
where the inequality is by that $M$ is  coordinate  $l$-smooth. Putting together, this proves ii).

To prove iii), by definition, 
\begin{align*}
\Psi(\tilde T) - \Psi( T) 
=M( \tilde T, \tilde \theta^*) - M(T, \theta^*)
= ( M( \tilde T, \tilde \theta^*)  - M( \tilde T, \theta^*)  )
   + ( M( \tilde T, \theta^*)  - M(  T, \theta^*) ).
\end{align*}
To bound the first term,
Lemma \ref{lemma:desdent-coordinate-M} i) gives that
$
- M( \tilde T, \tilde \theta^*)  + M( \tilde T, \theta^*)
\ge \partial_\theta M(\tilde T,  \tilde \theta^*) \cdot ( \theta^* - \tilde \theta^*) 
- \frac{l}{2} \| \theta^* - \tilde \theta^* \|^2, 
$
and since $\partial_\theta M(\tilde T,  \tilde \theta^*) = 0$, this gives 
\[
M( \tilde T, \tilde \theta^*)  - M( \tilde T, \theta^*)
\le  \frac{l}{2} \| \theta^* - \tilde \theta^* \|^2.
\]
The second term can be bounded as 
\[
 M( \tilde T,  \theta^*) - M( T,  \theta^*) 
 \le \langle \partial_T M(T, \theta^*), \tilde T - T \rangle_{L^2(P)} + \frac{l}{2} \| \tilde T - T \|_{L^2(P)}^2
\]
by Lemma \ref{lemma:desdent-coordinate-M} ii).
Putting together, and note that $ \partial_T M(T, \theta^*) = \partial \Psi(T) $, we have
\[
\Psi(\tilde T) - \Psi( T) 
\le   \frac{l}{2} \| \theta^* - \tilde \theta^* \|^2
+ \langle \partial \Psi(T)  , \tilde T - T \rangle_{L^2(P)} + \frac{l}{2} \| \tilde T - T \|_{L^2(P)}^2.
\]
Finally, we already have  $\| \theta^* - \tilde \theta^* \| \le \kappa \| \tilde T - T \|_{L^2(P)}$ proved in ii),
and this leads to iii).
\end{proof}

\begin{proof}[Proof of Theorem \ref{thm:theta-fast-NC-SC}]
The proof much follows the steps  in the proof of Theorem \ref{thm:rate-T-fast}.
However, since the roles of $T$ and $\theta$ is switched and these two variables lie in different spaces (one is in $L^2(P)$ and one is in vector space), the setting still differs as reflected by different assumptions.
We include the proof details for completeness.

Under the assumptions of the theorem, Lemma \ref{lemma:theta-fast-Tgood-Phi}  applies.
Recall that $\kappa = l/\mu$, $L_2 = l (1+\kappa^2)$.
We assume that
\begin{equation}\label{eq:proof-theta-fast-NC-SC-condition-eta-tau}
L_2 \eta \le 1,  \quad l \tau \le 1,
\end{equation}
which will be fulfilled later in the proof.

\vspace{2pt}
$\bullet$ Descent  of $\Psi$: 
Since $T_k, T_{k+1} \in \calT$, by Lemma \ref{lemma:theta-fast-Tgood-Phi} iii),
\begin{align}
\Psi( T_{k+1}) - \Psi(T_k) 
& \le 
\langle \partial \Psi (T_k), T_{k+1} - T_k  \rangle_{L^2(P)}
	+ \frac{L_2}{2} \|  T_{k+1} - T_k \|_{L^2(P)}^2  \nonumber \\
& = - \eta \langle \partial \Psi (T_k),  \partial_T M( T_k, \theta_k) \rangle_{L^2(P)}
	+ \frac{L_2 \eta^2}{2} \|   \partial_T M( T_k, \theta_k) \|_{L^2(P)}^2, \label{eq:proof-theta-fast-NC-SC-psi-descent-1}
\end{align}
where in the equality we used that $T_{k+1} = T_k -  \eta \partial_T M( T_k, \theta_k)$.
Define 
\[
\delta_k 
:= \partial_T M( T_k, \theta_k) - \partial \Psi( T_k).
\]
We know that $\delta_k= \partial_T M( T_k, \theta_k) - \partial_T M( T_k, \theta^*[T_k])$ by Lemma \ref{lemma:theta-fast-Tgood-Phi} i).
By the  coordinate  $l$-smoothness of $M$, we have 
\[
\| \delta_k\|_{L^2(P)}
= \| \partial_T M( T_k, \theta_k) - \partial_T M( T_k, \theta^*[T_k]) \|_{L^2(P)}
\le l \| \theta_k - \theta^*[T_k] \|.
\]
By \eqref{eq:EB-M-T-theta}, 
$\| \theta_k - \theta^*[T_k]\| \le \frac{1}{\mu} \| \partial_\theta M( T_k, \theta_k) \| $.
Thus,
\begin{equation}\label{eq:proof-theta-fast-NC-SC-eqn4}
\| \delta_k\|_{L^2(P)} \le \kappa \| \partial_\theta M( T_k, \theta_k) \|.
\end{equation}
Substituting the definition of $\delta_k$ into \eqref{eq:proof-theta-fast-NC-SC-psi-descent-1} gives that
\[
\Psi( T_{k+1}) - \Psi(T_k) 
\le -\eta \| \partial \Psi (T_k) \|_{L^2(P)}^2
	- \eta \langle \partial \Psi (T_k), \delta_k \rangle_{L^2(P)}
 	+ \frac{\eta}{2} \| \partial \Psi (T_k) + \delta_k \|_{L^2(P)}^2,
\]
where for the last term on the r.h.s. we used that $L_2 \eta \le 1$ under \eqref{eq:proof-theta-fast-NC-SC-condition-eta-tau}.
Expanding $\| \partial \Psi (T_k) + \delta_k \|_{L^2(P)}^2 = \| \partial \Psi (T_k) \|_{L^2(P)}^2 + \| \delta_k \|_{L^2(P)}^2 + 2 \langle \partial \Psi (T_k), \delta_k \rangle_{L^2(P)}$ gives that 
\begin{equation}\label{eq:proof-theta-fast-NC-SC-psi-descent-3}
\Psi( T_{k+1}) - \Psi(T_k) 
\le
- \frac{\eta}{2} \| \partial \Psi (T_k) \|_{L^2(P)}^2
+ \frac{\eta}{2}\| \delta_k \|_{L^2(P)}^2.
\end{equation}

\vspace{2pt}
$\bullet$ Descent of primal: 
In the $\theta$ step, by Lemma \ref{lemma:desdent-coordinate-M} i),
\begin{align}
M(T_{k+1}, \theta_{k+1}) - M(T_{k+1}, \theta_k)
& \ge \partial_\theta M(T_{k+1}, \theta_k) \cdot (\theta_{k+1}- \theta_k) -  \frac{l}{2} \| \theta_{k+1} - \theta_k\|^2  \nonumber \\
& = \tau  \partial_\theta M(T_{k+1}, \theta_k) \cdot \partial_\theta M(T_k, \theta_k) 
	-  \frac{l \tau^2 }{2}  \|  \partial_\theta M(T_k, \theta_k) \|^2, 
	\label{eq:proof-theta-fast-NC-SC-theta-step-1}
\end{align}
where in the equality we used that $\theta_{k+1} - \theta_k = \tau \partial_\theta M(T_k, \theta_k)$.
Define 
\[
e_k: =  \partial_\theta M( T_{k+1}, \theta_k) - \partial_\theta M(T_k, \theta_k).
\]
By that $M$ is  coordinate  $l$-smooth on $L^2(P) \times \R^p$
and that $T_{k+1} - T_k = -  \eta \partial_T M( T_k, \theta_k)$,
\begin{equation}\label{eq:proof-theta-fast-NC-SC-eqn5}
\| e_k \| 
\le  l \| T_{k+1} - T_{k}\|_{L^2(P)} 
= l \eta \| \partial_T M( T_k, \theta_k) \|_{L^2(P)}.
\end{equation}
Back to \eqref{eq:proof-theta-fast-NC-SC-theta-step-1}, we have
\[
M(T_{k+1}, \theta_{k+1}) - M(T_{k+1}, \theta_k)
\ge  \tau (1-\frac{l \tau }{2}) \| \partial_\theta M(T_k, \theta_k) \|^2 
	+ \tau e_k \cdot \partial_\theta M(T_k, \theta_k).
\]
Note that 
\begin{align*}
-  e_k \cdot \partial_\theta M(T_k, \theta_k) 
& \le  \| e_k \| \| \partial_\theta M(T_k, \theta_k)\| 
 \le l \eta \| \partial_T M( T_k, \theta_k) \|_{L^2(P)} \| \partial_\theta M(T_k, \theta_k)\|  \\
& \le \frac{l \eta}{2} ( \| \partial_T M( T_k, \theta_k) \|_{L^2(P)}^2 + \| \partial_\theta M(T_k, \theta_k)\|^2),
\end{align*}
where the second inequality is by \eqref{eq:proof-theta-fast-NC-SC-eqn5}.
We then have
\begin{align}
M(T_{k+1}, \theta_{k+1}) - M(T_{k+1}, \theta_k)
& \ge  \tau (1-\frac{l \tau }{2} - \frac{l \eta}{2}) \| \partial_\theta M(T_k, \theta_k) \|^2 
	-  \tau \frac{l \eta}{2}  \| \partial_T M( T_k, \theta_k) \|_{L^2(P)}^2 \nonumber \\
& \ge \frac{\tau}{4} 	 \| \partial_\theta M(T_k, \theta_k) \|^2 
 	- \frac{\eta}{2} \| \partial_T M( T_k, \theta_k) \|_{L^2(P)}^2,
	\label{eq:proof-theta-fast-NC-SC-theta-step-2}
\end{align}
where the second inequality is by that $l \tau \le 1$,
and that $ l \eta \le \frac{1}{ 1+\kappa^2 } \le 1/2$ under \eqref{eq:proof-theta-fast-NC-SC-condition-eta-tau}.

In the $T$-step, by Lemma \ref{lemma:desdent-coordinate-M} ii),
\begin{align}
 M( T_{k+1},  \theta_k) - M( T_k,  \theta_k)
 &\ge   \langle \partial_T M(T_k, \theta_k),  T_{k+1} - T_k \rangle_{L^2(P)} 
 	- \frac{l}{2} \| T_{k+1} - T_k \|_{L^2(P)}^2 \nonumber \\
& = 	-\eta (1+ \frac{l \eta}{2}) \| \partial_T M(T_k, \theta_k) \|_{L^2(P)}^2 \nonumber \\
& \ge - \frac{5 \eta }{4}  \| \partial_T M(T_k, \theta_k) \|_{L^2(P)}^2 \nonumber,
 \end{align}
where in the last inequality we used $ l \eta \le 1/2$ again.
Together  with \eqref{eq:proof-theta-fast-NC-SC-theta-step-2}, we have
\begin{equation}\label{eq:proof-theta-fast-NC-SC-primal-descent-3}
M(T_{k+1}, \theta_{k+1}) - M(T_{k}, \theta_k)
\ge  \frac{\tau}{4} 	 \| \partial_\theta M(T_k, \theta_k) \|^2 
	 - \frac{7 \eta }{4}  \| \partial_T M(T_k, \theta_k) \|_{L^2(P)}^2.
\end{equation}

\vspace{2pt}
$\bullet$ Lyapunov function:
We define
\[
V_k: = V( T_k, \theta_k) = \Psi( T_k) + \alpha( \Psi(T_k) - M(T_k, \theta_k)),
\]
where $\alpha > 0$ to be determined. By \eqref{eq:proof-theta-fast-NC-SC-psi-descent-3}\eqref{eq:proof-theta-fast-NC-SC-primal-descent-3},
\begin{align*}
& V_k - V_{k+1}
 = (1+\alpha) (\Psi( T_k) - \Psi( T_{k+1}) ) - \alpha (M(T_k, \theta_k) - M(T_{k+1}, \theta_{k+1})) \\
& \ge 
	(1+\alpha) ( \frac{\eta}{2} \| \partial \Psi (T_k) \|_{L^2(P)}^2
 	- \frac{\eta}{2} \| \delta_k \|_{L^2(P)}^2 )
	+ \alpha (  \frac{\tau}{4} 	 \| \partial_\theta M(T_k, \theta_k) \|^2 
	 - \frac{7 \eta }{4}  \| \partial_T M(T_k, \theta_k) \|_{L^2(P)}^2 ) \\
& \ge 
	\eta \frac{1-6\alpha}{2} \| \partial \Psi (T_k) \|_{L^2(P)}^2
  	- \eta  \frac{1+8\alpha}{2} \| \delta_k \|_{L^2(P)}^2 
	+ \tau \frac{\alpha}{4} 	 \| \partial_\theta M(T_k, \theta_k) \|^2, 
\end{align*}
where in the second inequality we used that 
\[
 \| \partial_T M(T_k, \theta_k) \|_{L^2(P)}^2
 = \| \partial \Psi (T_k)  + \delta_k \|_{L^2(P)}^2
 \le 2 ( \| \partial \Psi (T_k) \|_{L^2(P)}^2 + \| \delta_k \|_{L^2(P)}^2 ).
\]
Inserting \eqref{eq:proof-theta-fast-NC-SC-eqn4}, we then have
\[
V_k - V_{k+1}
\ge  \frac{\eta( 1-6\alpha)}{2} \| \partial \Psi (T_k) \|_{L^2(P)}^2
	+ (  \frac{\tau \alpha}{4} -  \frac{ \eta \kappa^2(1+8\alpha)}{2}  )
			 \| \partial_\theta M(T_k, \theta_k) \|^2. 
\]

The rest of the proof takes the same steps as the part in the proof of Theorem \ref{thm:rate-T-fast}, 
swapping the role of $\theta$ and $T$ (and $\eta$ and $\tau$). 
To be complete, we set $\alpha = 1/8$, 
 $\tau = 1/l$, $\eta = {1}/{(40 \kappa^2 l)}$,
 which fulfills \eqref{eq:proof-theta-fast-NC-SC-condition-eta-tau} and also makes $\tau/32 = \frac{5}{4} \kappa^2 \eta$.
 Inserting these into the descent of $V_k$ gives that 
 \begin{equation}
V_k - V_{k+1}
\ge \frac{\eta}{8} ( \| \partial \Psi (T_k) \|_{L^2(P)}^2 
	+    {2 \kappa^2}   
			 \| \partial_\theta M(T_k, \theta_k) \|^2).
\end{equation}
By summing $k$ from 0 to $K-1$, and that 
\[
 V_0 - V_K
 \le V_0 - \Psi^*
 = \Psi(T_0) - \Psi^* + \alpha (\Psi(T_0) - M(T_0, \theta_0)) 
 := \Delta_0,
 \]
a constant depending on the initial value $(T_0, \theta_0)$,
 we have that 
\begin{equation*}
\frac{1}{K} \sum_{k=0}^{K-1} 
 ( \| \partial \Psi (T_k) \|_{L^2(P)}^2 
	+    {2 \kappa^2}   
			 \| \partial_\theta M(T_k, \theta_k) \|^2)
\le \frac{8 \Delta_0}{\eta K} 
= \frac{320 \kappa^2 l \Delta_0}{K}.
\end{equation*} 
As a result, when $\frac{320 \kappa^2 l \Delta_0}{K} \le \varepsilon^2$, there must be a $k \le K $ s.t.
\begin{equation}\label{eq:proof-theta-fast-NC-SC-find-k}
\| \partial \Psi (T_k) \|_{L^2(P)}^2 
	+    {2 \kappa^2}   
			 \| \partial_\theta M(T_k, \theta_k) \|^2 \le \varepsilon^2.
\end{equation}
This $k$ satisfies that  $\| \partial \Psi (T_k) \|_{L^2(P)}  \le \varepsilon$ as claimed in i).

To show that it also satisfies ii), 
we derive an upper bound of $\| \partial_T M (T_k, \theta_k) \|_{L^2(P)}$:
For any $T \in \calT$ and any $\theta \in \R^p$,
\[
\| \partial_T M( T, \theta)- \partial_T M( T, \theta^*[T]) \|_{L^2(P)}
\le l \| \theta -  \theta^*[T] \|
\le  \kappa \| \partial_\theta M(T, \theta)\|, 
\]
where the first inequality is by the  coordinate  $l$-smoothness of $M$, and the second inequality is by \eqref{eq:EB-M-T-theta}.
Apply to $(T,\theta)= (T_k, \theta_k)$, and use triangle inequality, we have
\begin{align*}
\| \partial_T M( T_k, \theta_k)  \|_{L^2(P)}
& \le
	\| \partial_T M( T_k, \theta^*[T_k]) \|_{L^2(P)}
	+  \kappa \| \partial_\theta M( T_k, \theta_k) \| \\
& = 	\| \partial \Psi( T_k ) \|_{L^2(P)} + \kappa \| \partial_\theta M( T_k, \theta_k) \|,
\end{align*}
which gives that
$
\| \partial_T M( T_k, \theta_k)  \|_{L^2(P)}^2
\le 2( \| \partial \Psi( T_k ) \|_{L^2(P)}^2 + \kappa^2 \| \partial_\theta M( T_k, \theta_k) \|^2).
$
Comparing to \eqref{eq:proof-theta-fast-NC-SC-find-k}, we have
\[
\varepsilon^2 
\ge  \frac{1}{2} \| \partial_T M( T_k, \theta_k)  \|_{L^2(P)}^2
	+    \kappa^2   
			 \| \partial_\theta M(T_k, \theta_k) \|^2,
\]
and this proves ii).
\end{proof}

\begin{proof}[Proof of Corollary \ref{cor:theta-fast-NC-SC}]
The corollary is a direct application of Theorem \ref{thm:theta-fast-NC-SC} where $M$ is $H$
and $\calT$ is $\calT_{\mu}^{\rm SC}$.
The  coordinate  $l$-smoothness of $H$ is by Lemma \ref{lemma:theta-fast-lsmooth-H},
and the needed conditions on $\calT$ are satisfied under Assumptions 
\ref{assump:l-C2},\ref{assump:SC-good-T},\ref{assump:Tgood-in-open-set-SC}.
\end{proof}

\subsection{Proofs in Section \ref{subsec:theta-fast-NC-NC}}
\begin{proof}[Proof of Lemma \ref{lemma:Hs-prox-diff}]
We first verify that  $T^+(T,\theta)$  is well-defined, namely $\min_{V \in L^2(P)} h( V; T, \theta)$ has unique minimizer.
We claim that for fixed $(T, \theta) \in L^2(P) \times \R^p$,
the field  
\begin{align}
\partial_V h(V) 
& = \partial_V h(V ; T, \theta) 
 = \partial_T H(V,\theta) +\frac{1}{s}(V-T)
    \label{eq:expression-partialVh-2} \\
& = - \partial_v \ell( \theta, V(x) ) + \frac{1}{\gamma} (V(x) - x)
	+ \frac{1}{s}(V(x) - T(x) ), \quad P-a.s.
	\label{eq:expression-partialVh-3}
\end{align}
as a mapping from $L^2(P)$ to $L^2(P)$ satisfies that 

a) it is continuous on $L^2(P)$,

b) it is coercive, namely $ \lim_{\| V \|_{L^2(P)} \to \infty} \frac{ \langle \partial_V h(V; T, \theta), V \rangle_{L^2(P)}}{\| V\|_{L^2(P)}} \to \infty$,

c) it is $(1/\gamma + 1/s - \rho)$- strongly monotone on $L^2(P)$.

If true, then $\partial_V h$ has a zero point $V^* \in L^2(P)$ by Browder-Minty \cite[Theorem 5.16]{brezis2011functional},  and $V^*$ is also the unique zero point due to the strongly monotonicity of $\partial_V h$. 
Then, since the strongly monotonicity of $\partial_V h$ also implies strong convexity of $h( \cdot ; T,\theta)$ on $L^2(P)$, one can verify that $V^*$ is the unique minimizer of $h(\cdot; T, \theta)$.
This $V^*$ is then defined as $T^+(T ,\theta)$.

To finish proving the well-definedness of the proximal mapping $T^+$, it remains to show a)b)c) above. 
We first prove c):
omitting the dependence on $(T, \theta)$ in the notation of $\partial_V h$, we want to show that 
\begin{equation}\label{eq:strong-monotone-partialVh-goal}
\langle \partial_V h(\tilde V) - \partial_V h(V), \tilde V  - V \rangle_{L^2(P)} \ge 
(1/\gamma + 1/s - \rho) \| \tilde V - V\|_{L^2(P)}^2,
\quad \forall \tilde V, V \in L^2(P).
\end{equation}
By \eqref{eq:expression-partialVh-3}, we have
\[
\partial_V h(\tilde V) - \partial_V h(V)
= \underbrace{ - \partial_v \ell( \theta, \tilde V(x) )  +   \partial_v \ell( \theta, V(x) ) }_{\textcircled{1}}
 + \underbrace{ (\frac{1}{\gamma} + \frac{1}{s})( \tilde V(x) - V(x))}_{\textcircled{2}}.
\]
The term $\textcircled{2}$ will contribute a $(1/\gamma+1/s) \| \tilde V - V\|_{L^2(P)}^2$ term after taking the $L^2$ inner-product with $\tilde V - V$.
The first term $\textcircled{1}$ satisfies that for each $x$,
\[
(- \partial_v \ell( \theta, \tilde V(x) )  +   \partial_v \ell( \theta, V(x) ) )\cdot (\tilde V(x) - V(x) ) 
\ge - \rho \| \tilde V(x) - V(x)\|^2
\]
due to that $\ell(\theta, v)$ is $\rho$-weakly concave in $v$.
This implies that 
\[
 \langle \textcircled{1}, \tilde V - V \rangle_{L^2(P)} \ge -\rho \| \tilde V - V\|_{L^2(P)}^2.
\]
Putting together, this proves \eqref{eq:strong-monotone-partialVh-goal}, so c) holds.

Next, we show that c) implies b):
Define $\nu := 1/\gamma + 1/s - \rho$,
by the $\nu$-strongly monotonicity of $\partial_V h$ on $L^2(P)$, we have
$
\langle \partial_V h( V) - \partial_V h( 0 ), V  \rangle_{L^2(P)} \ge \nu \| V \|_{L^2(P)}^2$,
and this means that 
\[
\langle \partial_V h( V) , V  \rangle_{L^2(P)} 
\ge \langle \partial_V h( 0) , V  \rangle_{L^2(P)}  + \nu \| V \|_{L^2(P)}^2.
\]
Meanwhile, by \eqref{eq:expression-partialVh-3},
$
\partial_V h( 0 )(x)
=  - \partial_v  \ell( \theta, 0) -  \frac{1}{\gamma} x
 - \frac{1}{s}  T(x)$
 is a field in $L^2(P)$ independent from $V$. Thus, 
 $
| \langle \partial_V h( 0) , V  \rangle_{L^2(P)}   |
\le \|  \partial_V h( 0) \|_{L^2(P)}   \| V \|_{L^2(P)}  $ which grows linearly in $\| V\|_{L^2(P)} $.
Thus, $\langle \partial_V h( V) , V  \rangle_{L^2(P)} $ will be dominated by the quadratic growth from $\nu \| V \|_{L^2(P)}^2$ as $\| V\|_{L^2(P)} \to \infty$, 
and this proves b).

Finally, for a), we show that $\partial_V h$ is Lipschitz in $V$: 
by \eqref{eq:expression-partialVh-3},
it suffices to show that $ \partial_v  \ell( \theta, V(x))$ as a field is  Lipschitz in $V$.
For any $x$, by that $\ell(\theta, v)$ is $l_0$-smooth on $\R^p \times \R^d$,
we have
$
\| \partial_v  \ell( \theta , \tilde V(x))
-  \partial_v  \ell( \theta , V(x)) \|
\le l_0  \| \tilde V(x) - V(x)\|$.
Thus,
$
\E_{x \sim P} \| \partial_v  \ell( \theta, \tilde V(x))
			-  \partial_v  \ell( \theta, V(x)) \|^2
\le l_0^2  \| \tilde V - V\|_{L^2(P)}^2$,
 which gives the Lipschitz continuity in $V$ on $L^2(P)$.

We have shown that $T^+$ is well-defined,
and to finish the proof of the lemma, we are to prove \eqref{eq:partial_T_Hs}\eqref{eq:partial_theta_Hs}.
The equation \eqref{eq:partial_T_Hs} follows from that $H_s$ can be viewed as the Moreau envelope of $H(\cdot, \theta)$ for fixed $\theta$.
Specifically, the second equality in  \eqref{eq:partial_T_Hs} is by the first-order optimality condition in the definition of $V^*=T^+(T, \theta)$,
and the first equality is by Danskin's type argument.
The equation \eqref{eq:partial_theta_Hs} again follows by Danskin's type argument.
\end{proof}

\begin{proof}[Proof of Lemma \ref{lemma:theta-fast-lsmooth-Hs}]
We consider the four inequalities in Definition \ref{def:coord-L-smoothness-M},
and denote the corresponding Lipschitz constants as $l_{11}$, $l_{12}$, $l_{21}$, $l_{22}$, respectively. 
We will derive the constants $l_{11}$, $l_{12}$, $l_{21}$, $l_{22}$, and then take $\bar l$ to be the maximum of the four.

Recall that $\forall V \in L^2(P)$, $\forall \theta \in \R^p$,
\begin{equation}\label{eq:partial_T_H-1}
\partial_T H(V,\theta)(x)
=  - \partial_v \ell( \theta, V(x) ) + \frac{1}{\gamma} (V(x) - x), \quad P-a.s.
\end{equation}
\begin{equation}\label{eq:partial_theta_H-1}
 \partial_\theta H( V, \theta) 
=- \E_{x \sim P} \partial_\theta \ell( \theta, V(x)).
 \end{equation}

\vspace{5pt}
\noindent
$\bullet$ $l_{11}$:
We want to show that 
\[
\| \partial_T H_s (\tilde T,  \theta) - \partial_T H_s ( T, \theta)  \|_{L^2(P)}
	  \le l_{11} \| \tilde T - T\|_{L^2(P)},
\quad \forall \tilde T, T \in L^2(P), \forall  \theta \in \R^p.
\]
By \eqref{eq:partial_T_Hs} in Lemma \ref{lemma:Hs-prox-diff},
 we have, denoting  $  T^+( T, \theta)$ as $ T^+$ and $T^+(\tilde T, \theta)$ as  $\tilde T^+$,
\[
\partial_T H_s (\tilde T,  \theta) - \partial_T H_s ( T, \theta) 
= \frac{1}{s}( \tilde T - \tilde T^+) - \frac{1}{s}( T - T^+),
\]
and thus
\begin{align}
& \| \partial_T H_s (\tilde T,  \theta) - \partial_T H_s ( T, \theta) \|_{L^2(P)}^2
 = \frac{1}{s^2}\| ( \tilde T - T )  - (\tilde T^+ - T^+ )\|^2  \nonumber \\
& =  \frac{1}{s^2}
\big( \| \tilde T   - T \|_{L^2(P)}^2
- 2 \langle \tilde T   - T , \tilde T^+ - T^+  \rangle_{L^2(P)}
 + \| \tilde T^+ - T^+ \|_{L^2(P)}^2 \big). \label{eq:l11-proof-1}
\end{align}
Recall that we have shown in \eqref{eq:strong-monotone-partialVh-goal} that $\partial_V h(\cdot; T, \theta)$ is 
$(1/\gamma + 1/s - \rho)$-strongly monotone on $L^2(P)$.
Then,
\begin{align*}
(\frac{1}{s}-\rho + \frac{1}{\gamma}) \| \tilde T^+ - T^+\|_{L^2(P)}^2
& \le \langle \partial_V h (\tilde T^+; T, \theta) - \partial_V h (T^+; T, \theta),  \tilde T^+ - T^+\rangle_{L^2(P)} \\
& = \langle \partial_T H( \tilde T^+, \theta) + \frac{1}{s} (\tilde T^+ - T) ,  \tilde T^+ - T^+\rangle_{L^2(P)} 
\end{align*}
where in the equality we used that $\partial_V h (T^+; T, \theta) = 0$
and the expression \eqref{eq:expression-partialVh-2}.
Meanwhile, \eqref{eq:partial_T_Hs} gives that 
\[
\partial_T H( \tilde T^+, \theta) 
=  \partial_T H( T^+(\tilde T, \theta), \theta) 
= \frac{1}{s} ( \tilde T -  \tilde T^+),
\]
and then we have
\[
(\frac{1}{s}-\rho + \frac{1}{\gamma}) \| \tilde T^+ - T^+\|_{L^2(P)}^2
\le 
\frac{1}{s}  \langle  \tilde T  - T ,  \tilde T^+ - T^+\rangle_{L^2(P)}. 
\]
this implies that
\begin{equation}\label{eq:l11-proof-2}
0 \le \| \tilde T^+ - T^+\|_{L^2(P)}^2 
\le  \frac{1}{1-s ( \rho - 1/\gamma)}  \langle  \tilde T  - T ,  \tilde T^+ - T^+\rangle_{L^2(P)}, 
\end{equation}
and furtherly, by Cauchy-Schwarz,
\begin{equation}\label{eq:dT+-bound-by-dT-1}
\| \tilde T^+ - T^+ \|_{L^2(P)}
=\| T^+( \tilde T, \theta ) - T^+( T, \theta ) \|_{L^2(P)}
\le \frac{1}{1 - s ( \rho - 1/\gamma) } \| \tilde T - T\|_{L^2(P)}.
\end{equation}
Inserting \eqref{eq:l11-proof-2} back to \eqref{eq:l11-proof-1}, we have that
$\| \partial_T H_s (\tilde T,  \theta) - \partial_\theta H_s ( T, \theta) \|_{L^2(P)}^2$
is upper bounded by 
\begin{equation}\label{eq:l11-proof-3}
  \frac{1}{s^2}
\Big( \| \tilde T   - T \|_{L^2(P)}^2
+( \frac{1}{1-s ( \rho - 1/\gamma) } - 2)
 \langle \tilde T   - T , \tilde T^+ - T^+  \rangle_{L^2(P)}
  \Big).
\end{equation}
There are two cases:

Case 1. $\frac{1}{1 - s ( \rho - 1/\gamma) } - 2 \le 0 $: then \eqref{eq:l11-proof-3} is upper bounded by 
$ \frac{1}{s^2}  \|  \tilde T - T   \|^2 $,
because $\langle \tilde T   - T , \tilde T^+ - T^+  \rangle_{L^2(P)} \ge 0$.

Case 2. $\frac{1}{1 - s ( \rho - 1/\gamma) } - 2 > 0 $: 
then by Cauchy-Schwarz and \eqref{eq:dT+-bound-by-dT-1},  \eqref{eq:l11-proof-3} is upper bounded by
$
\frac{1}{s^2}( 1 -\frac{1}{1-s ( \rho - 1/\gamma) } )^2 \| \tilde T   - T \|_{L^2(P)}^2$.

Combining the two cases, we have that the r.h.s. of \eqref{eq:l11-proof-3} is always upper-bounded by 
\[
\big( \frac{1}{s^2} \vee (\frac{  \rho - 1/\gamma }{1 - s ( \rho - 1/\gamma)  })^2   \big)
\|  \tilde T - T \|^2,
\]
and this means that we can set $l_{11} = \frac{1}{s} \vee \frac{ | \rho - 1/\gamma|}{1 - s ( \rho - 1/\gamma)  }  $.

\vspace{5pt}
\noindent
$\bullet$ $l_{12}$:
We want to show that 
\[
\| \partial_T H_s (T, \tilde \theta) - \partial_T H_s ( T, \theta)  \|_{L^2(P)} 
	\le l_{12} \| \tilde \theta - \theta\|,
\quad \forall T \in L^2(P), \forall \tilde \theta, \theta \in \R^p.
\]
 By \eqref{eq:partial_T_Hs} in Lemma \ref{lemma:Hs-prox-diff},
 we have 
 \[
  \partial_T H_s (T, \tilde \theta) - \partial_T H_s ( T, \theta)
  = - \frac{1}{s}( T^+( T, \tilde \theta ) -  T^+( T, \theta) ),
  \]
 and also
 \begin{align*}
 T^+( T, \tilde \theta ) -  T^+( T, \theta)
 & = -s (   \partial_T H_s (T, \tilde \theta) - \partial_T H_s ( T, \theta) ) \\
 & = -s ( \partial_T H( T^+(T, \tilde \theta), \tilde \theta ) -  \partial_T H( T^+(T, \theta), \theta )  ).
 \end{align*}
 Note that by the expression \eqref{eq:partial_T_H-1}, one can verify that $\forall \tilde V, V \in L^2(P)$,
 \begin{align}
&~~~ \langle \partial_T H( \tilde V,  \theta ) -  \partial_T H( V, \theta ), \tilde V - V \rangle_{L^2(P)} \nonumber \\
& = \E_{x \sim P}
	(- \partial_v \ell( \theta, \tilde V(x) ) +  \partial_v \ell( \theta, V(x) ))\cdot( \tilde V(x) - V(x))
	 + \frac{1}{\gamma} \|\tilde V(x) -  V(x)\|^2 \nonumber \\
& \ge (\frac{1}{\gamma} - \rho )	  \| \tilde V - V \|_{L^2(P)}^2. 
	 \label{eq:-rho-tildeV-V-bound-1}
 \end{align}
 We then have
 \begin{align*}
&   \|  T^+( T, \tilde \theta ) -  T^+( T, \theta)\|_{L^2(P)}^2
 =  -s \langle   \partial_T H( T^+(T, \tilde \theta), \tilde \theta ) -  \partial_T H( T^+(T, \theta), \theta )  ,
  	T^+( T, \tilde \theta ) -  T^+( T, \theta)
  	\rangle_{L^2(P)} \\
& = 	-s \langle   \partial_T H( T^+(T, \tilde \theta), \theta ) -  \partial_T H( T^+(T, \theta), \theta )  ,
  	T^+( T, \tilde \theta ) -  T^+( T, \theta)
  	\rangle_{L^2(P)} \\
&~~~	-s \langle   \partial_T H( T^+(T, \tilde \theta), \tilde \theta ) -  \partial_T H( T^+(T, \tilde \theta), \theta )  ,
  	T^+( T, \tilde \theta ) -  T^+( T, \theta)
  	\rangle_{L^2(P)} \\
& \le 	s ( \rho - 1/\gamma)   \|  T^+( T, \tilde \theta ) -  T^+( T, \theta)\|_{L^2(P)}^2
	+ s l_0 \|\tilde \theta - \theta \|\|  T^+( T, \tilde \theta ) -  T^+( T, \theta)\|_{L^2(P)},
 \end{align*}
 where in the last inequality we applied \eqref{eq:-rho-tildeV-V-bound-1} with 
  $\tilde V = T^+(T, \tilde \theta)$, $V = T^+(T, \theta)$,
  and we also used that 
 \[
\|  \partial_T H( T^+(T, \tilde \theta), \tilde \theta ) -  \partial_T H( T^+(T, \tilde \theta), \theta ) \|
\le l_0 \|\tilde \theta - \theta\|
 \]
 due to  \eqref{eq:coord-smooth-L-21}, which holds when replacing $L$ with $H$ since $H = -L$.
Because $s( \rho - 1/\gamma)  < s\rho < 1$, we then have
\begin{equation}\label{eq:tildeT+-T+-bound-by-dtheta}
\| T^+( T, \tilde \theta) - T^+(T, \theta)\|_{L^2(P)}
\le l_0 \frac{s}{1-s ( \rho - 1/\gamma)  } \| \tilde \theta  - \theta\|.
\end{equation}
As a result,
\[
\|  \partial_T H_s (T, \tilde \theta) - \partial_T H_s ( T, \theta) \|_{L^2(P)}
 = \frac{1}{s} \| T^+( T, \tilde \theta) - T^+(T, \theta)\|_{L^2(P)}
 \le \frac{l_0}{1-s ( \rho - 1/\gamma)  } \| \tilde \theta  - \theta\|.
\]
This means that we can set $l_{12} = l_0/(1-s ( \rho - 1/\gamma)  )$.

\vspace{5pt}
\noindent
$\bullet$ $l_{21}$:
We want to show that 
\[
\| \partial_\theta H_s (\tilde T,  \theta) - \partial_\theta H_s ( T, \theta)  \|  
	\le l_{21} \| \tilde T - T\|_{L^2(P)},
\quad \forall \tilde T, T \in L^2(P), \forall  \theta \in \R^p.
\]
  By \eqref{eq:partial_theta_Hs} in  Lemma \ref{lemma:Hs-prox-diff}  and \eqref{eq:partial_theta_H-1},
  \[
  \partial_\theta H_s (\tilde T,  \theta) - \partial_\theta H_s ( T, \theta) 
  = \partial_\theta H( T^+(\tilde T, \theta), \theta) - \partial_\theta H(T^+ (T, \theta), \theta).
  \]
  By \eqref{eq:coord-smooth-L-12} (which holds for $H$ in place of $L$ due to that $H = -L$), we have
  \[
\|   \partial_\theta H( T^+(\tilde T, \theta), \theta) - \partial_\theta H( T^+ (T, \theta), \theta) \|
\le l_0 \| T^+(\tilde T, \theta) - T^+ (T, \theta)\|_{L^2(P)}.
  \]
  Combined with \eqref{eq:dT+-bound-by-dT-1}, 
  we have 
  \[
\|   \partial_\theta H( T^+(\tilde T, \theta), \theta) - \partial_\theta H( T^+ (T, \theta), \theta) \|
\le \frac{l_0}{1- s ( \rho - 1/\gamma) } \| \tilde T -T \|_{L^2(P)},
  \]
which means that $l_{21}$ can be chosen to be $l_0/(1-s ( \rho - 1/\gamma) )$.

\vspace{5pt}
\noindent
$\bullet$ $l_{22}$:
We want to show that 
\[
\| \partial_\theta H_s (T, \tilde \theta) - \partial_\theta H_s ( T, \theta)  \|  
	\le l_{22} \| \tilde \theta - \theta\|,
\quad \forall T \in L^2(P), \forall \tilde \theta, \theta \in \R^p.
\]
 By \eqref{eq:partial_theta_Hs} in  Lemma \ref{lemma:Hs-prox-diff}  and \eqref{eq:partial_theta_H-1},
 \begin{align*}
 \partial_\theta H_s (T, \tilde \theta) - \partial_\theta H_s ( T, \theta)
& = \partial_\theta H( T^+(T, \tilde \theta), \tilde \theta) - \partial_\theta H ( T^+(T, \theta), \theta) \\
& = \E_{x \sim P} ( - \partial_\theta \ell( \tilde \theta, T^+(T, \tilde \theta)(x) ) 
			       +  \partial_\theta \ell( \theta, T^+(T, \theta)(x) ) ).
 \end{align*}
 Note that by the $l_0$-smoothness of $\ell( \theta, v)$, for any $P$-a.s. $x$,
 \[
 \|  \partial_\theta \ell( \tilde \theta, T^+(T, \tilde \theta)(x) ) 
  -  \partial_\theta \ell( \theta, T^+(T, \theta)(x) ) \|^2
  \le l_0^2 
  	( \| \tilde \theta - \theta\|^2 + \| T^+(T, \tilde \theta)(x) -  T^+(T, \theta)(x)   \|^2).
 \]
 Then, we have
  \begin{align*}
 \|  \partial_\theta H_s (T, \tilde \theta) - \partial_\theta H_s ( T, \theta)\|
& \le \E_{x \sim P} \| \partial_\theta \ell( \tilde \theta, T^+(T, \tilde \theta)(x) ) 
			       -  \partial_\theta \ell( \theta, T^+(T, \theta)(x) ) \| \\
& \le l_0 ( \| \tilde \theta - \theta\|^2 + \| T^+(T, \tilde \theta) - T^+(T, \theta) \|_{L^2(P)}^2)^{1/2} \\
& \le l_0 \| \tilde \theta - \theta\| (1 +  ( l_0 \frac{s}{1-s ( \rho - 1/\gamma) })^2  )^{1/2},
 \end{align*}
 where in the last inequality we used  \eqref{eq:tildeT+-T+-bound-by-dtheta}.
Thus, we can set $l_{22} = l_0 ( 1+ l_0 \frac{s}{1-s( \rho - 1/\gamma) })$.
\end{proof}

\begin{proof}[Proof of Lemma \ref{lemma:TNC-SC-Hs}]
Introducing the notation of perturbation $\theta$ to $\theta + \delta \theta$,
and $\delta T = 0$,
we want to show that for any local perturbation $\delta \theta$,
\begin{equation}\label{eq:proof-barmu-SC-goal}
( - \partial_\theta H_s( T, \theta + \delta \theta) +   \partial_\theta H_s( T, \theta ) )\cdot \delta \theta 
\ge \bar \mu \| \delta \theta\|^2.
 \end{equation}
 By \eqref{eq:partial_theta_Hs} in Lemma \ref{lemma:Hs-prox-diff},
 and denoting  $T^+(T, \theta) = T^+$,  $T^+(T, \theta + \delta \theta) = T^+ + \delta T^+$,
we have that 
 \begin{align}
&~~~  - \partial_\theta H_s( T, \theta + \delta \theta) +   \partial_\theta H_s( T, \theta )  \nonumber \\
& =   - \partial_\theta H(  T^+ + \delta T^+, \theta + \delta \theta) 
	+   \partial_\theta H( T^+, \theta )   \nonumber  \\
& = \E_{x \sim P}
 	\big( \partial_\theta \ell( \theta+\delta \theta, (T^+ + \delta T^+)(x))
			- \partial_\theta \ell( \theta , T^+ (x)) \big)  \nonumber \\
& = 	 \E_{x \sim P}
	\big( \partial_{\theta \theta}^2 \ell( \theta, 	 T^+ (x)) \delta \theta
	+ \partial_{\theta v}^2 \ell( \theta, T^+(x) ) \delta T^+(x) \big),	
	\label{eq:proof-barmu-SC-1}
 \end{align}
where in the second equality we used \eqref{eq:partial_theta_H-1}.

 The expression of $\delta T^+(x)$ can be derived using \eqref{eq:partial_T_Hs} in Lemma \ref{lemma:Hs-prox-diff}:
For $P$-a.s. $x$, we have
\[
\frac{1}{s} (T(x) - T^+(x) ) = \partial_T H(  T^+, \theta )(x)
= -\partial_v \ell(\theta, T^+(x)) + \frac{1}{\gamma}( T^+(x) - x).
\]
Take perturbation on both sides,  the above equation implies that
 \[
\delta T^+(x)
 =  \big[ ( ( \frac{1}{\gamma}+\frac{1}{s}) I_d - \partial^2_{vv} \ell )^{-1}   \partial^2_{v \theta} \ell \big] |_{(\theta, T^+(x))}
 	 \delta \theta,
	 \quad P-a.s.
 \]
 Inserting back to \eqref{eq:proof-barmu-SC-1}, we have
\begin{align*}
&~~~
 - \partial_\theta H_s( T, \theta + \delta \theta) +   \partial_\theta H_s( T, \theta ) \\
& = \E_{x \sim P}
	 \big[ 
	\partial_{\theta \theta}^2 \ell + 
	 \partial_{\theta v}^2 \ell  ( ( \frac{1}{\gamma}+\frac{1}{s}) I_d - \partial^2_{vv} \ell )^{-1}   \partial^2_{v \theta} \ell \big] |_{(\theta, T^+(x))}
 	 \delta \theta.	
\end{align*}
 This means that \eqref{eq:proof-barmu-SC-goal} holds as long as the matrix
 $
 \E_{x \sim P}
	 \big[ 
	\partial_{\theta \theta}^2 \ell + 
	 \partial_{\theta v}^2 \ell  ( ( \frac{1}{\gamma}+\frac{1}{s}) I_d - \partial^2_{vv} \ell )^{-1}   \partial^2_{v \theta} \ell \big] |_{(\theta, T^+(x))}
	 \succeq
	\bar \mu I
 $, which is guaranteed when  $T \in \calT_{\bar \mu}^{\rm NC}$. 
 \end{proof}

\begin{proof}[Proof of Corollary \ref{cor:theta-fast-NC-NC}]
By Lemma \ref{lemma:Hs-prox-diff}, the update \eqref{eq:damped-PPM} is equivalent to 
\begin{equation*}
\begin{cases}
T_{k+1} 		\leftarrow T_k - \eta \partial_T H_s( T_k, \theta_k)  \\
\theta_{k+1}  	\leftarrow  \theta_k + \tau \partial_\theta H_s(T_k, \theta_k) , \\
\end{cases}
\end{equation*}
which is the GDA scheme \eqref{eq:GDA-M-theta-fast} applied to $M = H_s$. 
This allows us to prove the corollary by applying Theorem \ref{thm:theta-fast-NC-SC}
with $M = H_s$  and $ \calT = \calT_{\bar \mu}^{\rm NC}$,
and we verify that the needed assumptions on $M$ and $\calT$ are satisfied:
the  coordinate  $\bar l$-smoothness of $H_s$ is by Lemma \ref{lemma:theta-fast-lsmooth-Hs}, 
the $\bar \mu$-strongly concavity of $H_s$ in $\theta $ when $T \in  \calT_{\bar \mu}^{\rm NC}$  is by Lemma \ref{lemma:TNC-SC-Hs},
and the other requirements of $\calT$ and $T_k$ are ensured by Assumptions \ref{assump:NC-good-T},\ref{assump:Tgood-in-open-set-NC}.
\end{proof}

\section{Experimental details}\label{app:exp-more}

\subsection{2D regression example}

The data samples $x \sim P  = {\rm Unif}( [-1,1 ] \times [-1,1])$, and the true response 
$y^*(x) =e^{-\| x\|^2/2\sigma_y^2}$, $\sigma_y=0.5$.
An illustration of the data and true response is given in Figure \ref{fig:2d-regression-data}.

\begin{figure}[t]
\centering
\begin{minipage}{0.275\textwidth}
\includegraphics[height=.9\linewidth]{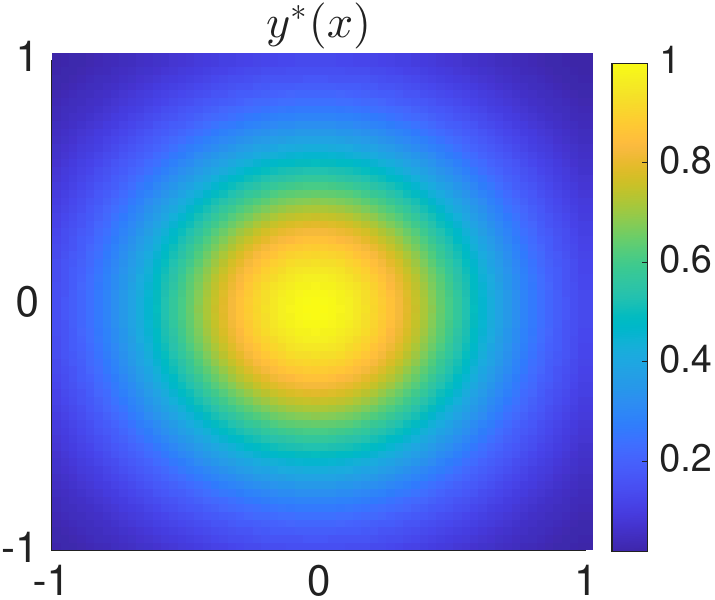}\subcaption{}
\end{minipage}\hspace{20pt}
\begin{minipage}{0.275\textwidth}
\includegraphics[height=.9\linewidth]{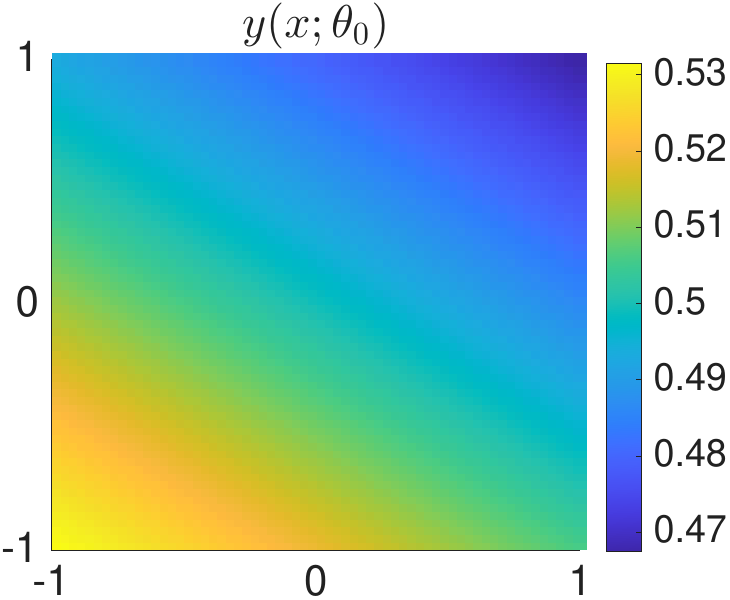}\subcaption{}
\end{minipage}\hspace{20pt}
\begin{minipage}{0.275\textwidth}
\includegraphics[height=.9\linewidth]{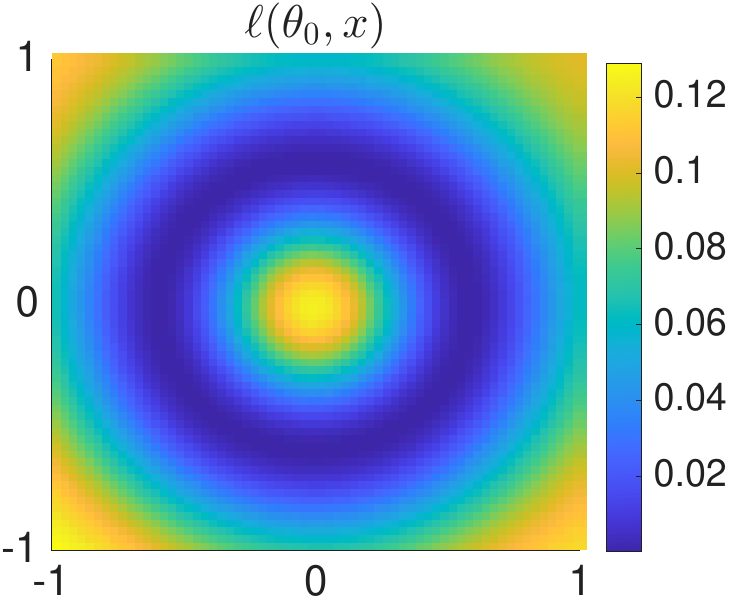}\subcaption{}
\end{minipage}
\caption{
$L^2$ regression loss  on 2D data.
(a) True response.
(b) Model function at initial $\theta_0$.
(c) Loss  function $\ell $ at initial $\theta_0$.
}
\label{fig:2d-regression-data}
\end{figure}

\subsection{General setup for image experiments}

The classifier $\theta$ and the transport maps are all trained in the latent space,
and we take $\theta^0$ as a pretrained classifier. 
For the $y$-specific transport maps $T_{y}$,
we parameterize them as a single neural network that implements label-conditioning in the architecture, sharing the trainable parameters $\varphi$ across $y \in [K]$.
We always initialize $T^0$ as the identity map, omitting the dependence on $y$ in the notation. This is implemented by computing $T_\varphi(x) = x + R_\varphi(x)$,
where $R_\varphi$ is a neural network parametrized by $\varphi$ and zero-initialized (by setting the final layer to be zero).

\paragraph{VAE latent space.}
We train convolutional VAEs to obtain smooth latent representations. The network architectures are adapted from \cite{rombach2022high}. The encoder and decoder follow a symmetric ResNet-style architecture with group normalization and SiLU activations. Downsampling uses average pooling and upsampling uses nearest-neighbor interpolation. More details are provided in each dataset-specific subsection.

\paragraph{Classifier $\theta$ and initialization.}
The decision model $\theta$ is an MLP classifier mapping a $d$-dimensional latent code $x$ to class logits $f_y(\theta, x)$ for $y\in[K]$. The architecture contains two hidden layers of width $2d$ with SiLU activations. We initialize $\theta$ by training it on the full training split of the corresponding image dataset using Adam with weight decay $\omega$. The resulting pretrained model serves as $\theta^0$ in our minimax optimization, where the same weight decay $\omega$ is applied inside the loss $\ell_y(\theta, x)$. Dataset-specific hyperparameters are given in the subsections below.

\paragraph{Class-specific loss $\ell$.}
We denote by $P_y$ the distribution of $x$ conditioned on class $y$, and the loss $\ell$ has the expression
$$
\ell_y(\theta,x) = -\log \frac{\exp(f_{y}(\theta, x))}{\sum_{j=1}^K  \exp(f_{j}(\theta, x))} + \frac{ \omega }{2} \|\theta\|^2,
$$ 
where $f_y( \theta, \cdot )$ is the logit of class $y$,
and $\omega$ is the weight decay parameter. 
Our methodology can be extended to solve for $y$-specific  $T_y$ for each class. 
Specifically, we solve for $\{ \theta, \, T_y, \, y \in [K]\}$ via the following minimax problem 
$$
  \min_{\theta\in\mathbb{R}^m} \max_{T_y \in L^2(P_y),y\in [K ]} 
 \sum_{y =1 }^K p_y
 \mathbb{E}_{x\sim P_y} \big[\ell_y(\theta, T_y(x)) - \frac{1}{2\gamma} \|T_y(x) - x\|^2\big],
$$
where $p_y$ is the (population) proportion of class $y$ in the dataset. With finite samples, the expectation in the above expression is naturally replaced by empirical averages, possibly on batches.  

\paragraph{Training of $T_y$ by matching loss.} The neural transport map $T_\varphi$ is trained using the matching loss \eqref{eq:loss-matching}, optimized using Adam. To represent $K$ class-specific maps $\{T_y\}_{y\in[K]}$ with one network, we condition $T_\varphi$ on the class label using a learnable label embedding concatenated to $x$. The resulting vector is fed into an MLP with two hidden layers of width $2d$ and SiLU activations, producing a residual added back to $x$.

\paragraph{GDA with momentum.}
We set the momentum parameter $\nu_{\rm m}$ and store over iterations indexed by $k$ the velocity vectors:
$h^k$ for gradient in $\theta$,
and $\{ g_i^k \}_i$ for gradients in $v_i$.
The update scheme is 
$$
\begin{gathered}
{g}_{i}^{k+1} \gets \nu_{\rm m} {g}_{i}^k + 
    \big( \partial_v  \ell( \theta^k, v_i^k) - \frac{1}{\gamma} (v_i^k - x_i) \big), 
    \quad v_i^{k+1} \gets v_i^k + \eta  {g}_{i}^{k+1}, \quad i\in B;
    \\
h^{k+1} \gets \nu_{\rm m}  h^k + \frac{1}{m}\sum_{i \in B} \partial_\theta  \ell( \theta^k, v^k_i), \quad \theta^{k+1} \gets  \theta^k  - \tau  h^{k+1}.
\end{gathered}
$$
Here, $g_i^k$ and $v_i^k$ are only changed if $i \in B$, similarly as in \eqref{eq:vi-GDA-3}.

\subsection{MNIST}

We provide  in Table \ref{tab:mnist-hyperparams} the detailed hyperparameter setup,
including the neural network architectures,
the training configurations of the VAE, classifier $\theta$, and neural transport map $T_\varphi$. 

\paragraph{VAE architecture.}
The encoder begins with a $3{\times}3$ convolution mapping the input to 64 channels, followed by three stages that progressively reduce spatial resolution. The first two stages each apply a \texttt{ResBlock} and then average pooling, reducing $28{\times}28 \rightarrow 14{\times}14 \rightarrow 7{\times}7$. At $7{\times}7$, a third \texttt{ResBlock} is followed by GroupNorm, SiLU, and a $4{\times}4$ convolution that produces a $4{\times}4$ feature map. A final \texttt{ResBlock}, GroupNorm, SiLU, and a $1{\times}1$ convolution output the mean and log-variance of the latent code of shape (2, 4, 4). Each \texttt{ResBlock} uses two $3{\times}3$ convolutions, GroupNorm with 16 groups, and SiLU activations.

The decoder mirrors this structure. A $1{\times}1$ convolution first lifts the latent tensor of shape $(2,4,4)$ to 64 channels, followed by a \texttt{ResBlock}. GroupNorm and SiLU precede a $4{\times}4$ transposed convolution that maps $4{\times}4$ back to $7{\times}7$ while reducing the channel width to 64. A \texttt{ResBlock} is applied at $7{\times}7$, followed by two nearest-neighbor upsampling steps ($7{\times}7 \rightarrow 14{\times}14 \rightarrow 28{\times}28$), each paired with a \texttt{ResBlock}. A final normalization, SiLU activation, and a $3{\times}3$ convolution produce the reconstructed grayscale image.

\begin{figure}[h]
\centering
\begin{subfigure}{0.32\linewidth}
\hspace{-8pt}
\includegraphics[width=\linewidth]{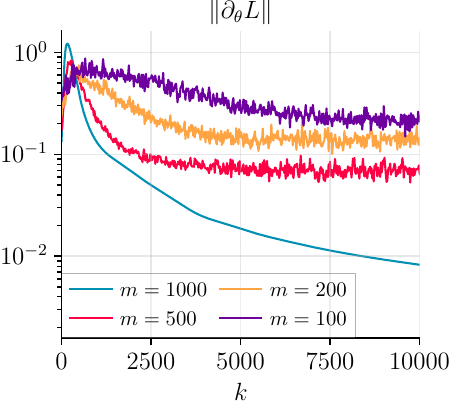}
\subcaption{}
\end{subfigure}
\begin{subfigure}{0.32\linewidth}
\hspace{-8pt}
\includegraphics[width=\linewidth]{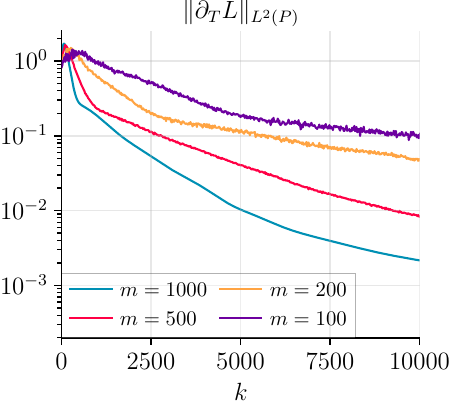}
\subcaption{}
\end{subfigure}
\begin{subfigure}{0.32\linewidth}
\hspace{-8pt}
\includegraphics[width=\linewidth]{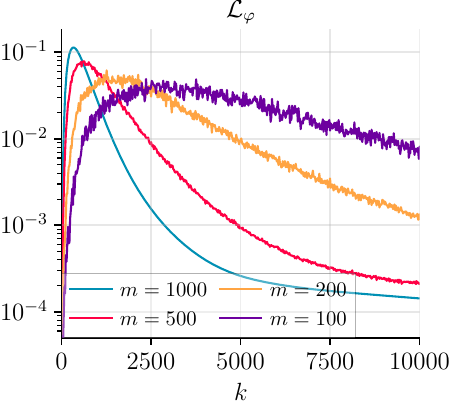}
\subcaption{}
\end{subfigure}
\caption{
GDA without momentum on MNIST. 
Same plots as in Figure \ref{fig:mnist-curves-momentum}, with all the other setups the same.}
\label{fig:mnist-curves-without-momentum}
\end{figure}

\begin{figure}[h]
\centering
\begin{subfigure}{0.32\linewidth}
\hspace{-8pt}
\includegraphics[width=\linewidth]{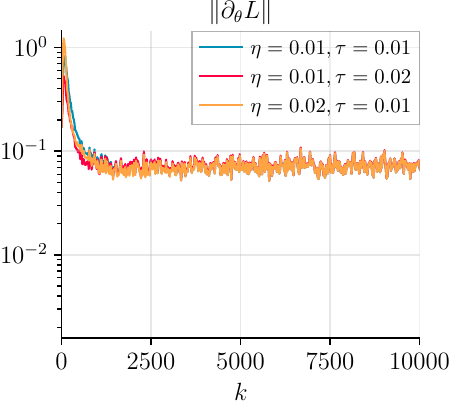}
\subcaption{}
\end{subfigure}
\begin{subfigure}{0.32\linewidth}
\hspace{-8pt}
\includegraphics[width=\linewidth]{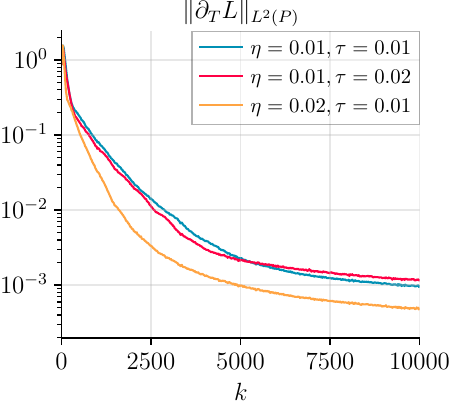}
\subcaption{}
\end{subfigure}
\begin{subfigure}{0.32\linewidth}
\hspace{-8pt}
\includegraphics[width=\linewidth]{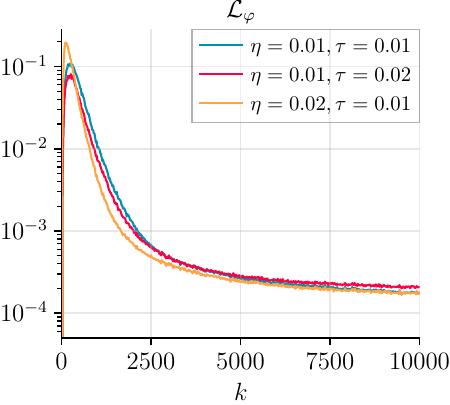}
\subcaption{}
\end{subfigure}
\caption{GDA with momentum and varying step sizes $\eta$ and $\tau$, with
$m=500$. Same plots as in Figure \ref{fig:mnist-curves-momentum}.}
\label{fig:mnist-curves-step-size}
\end{figure}

\begin{figure}[h]
    \centering
    \includegraphics[width=\textwidth]{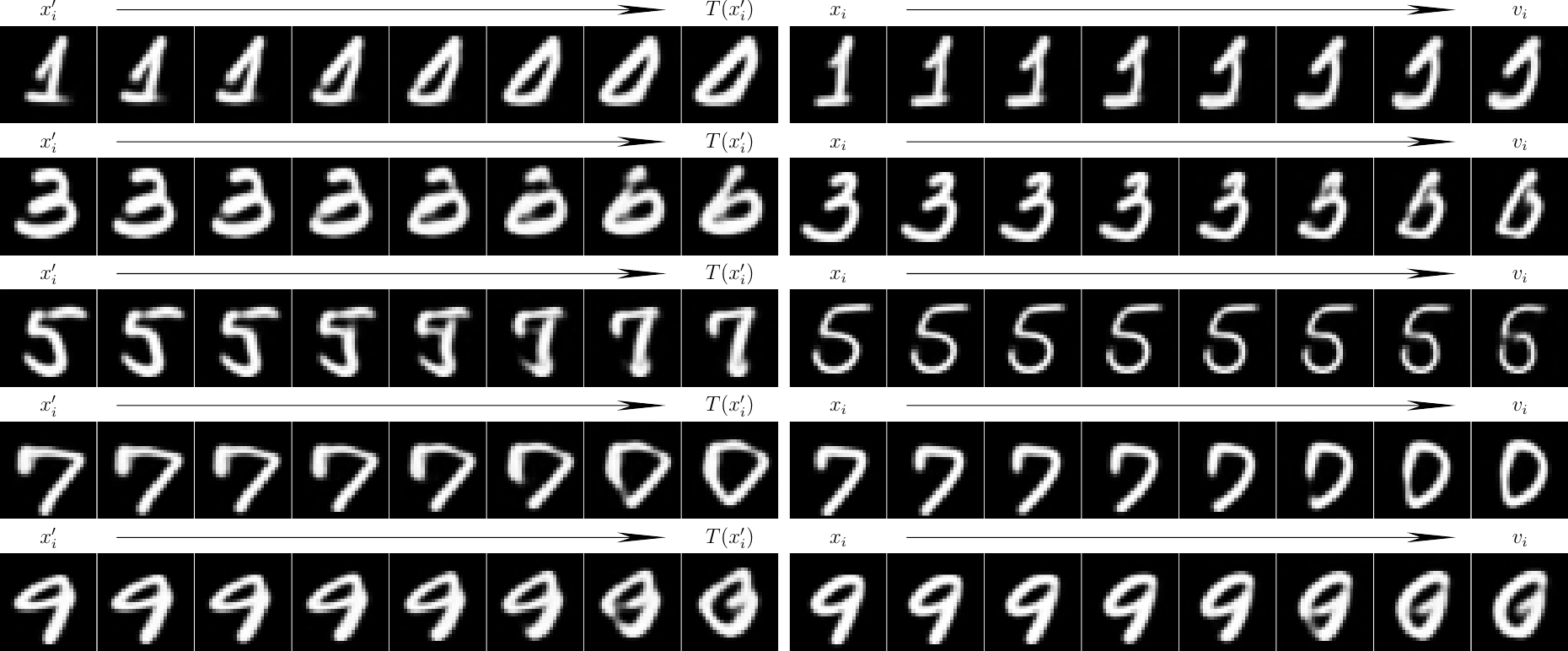}
    \caption{
    Interpolation trajectories
    computed on testing samples $x_i'$
    and the corresponding training sample $x_i$ (the nearest neighbor of $x_i'$ in the training set).
    Each row shows the deformation of $x_i'$ (from $x_i'$ to $T_\varphi(x_i')$) on the left, 
    and  that of $x_i$ (from $x_i$ to the optimized particle $v_i$) on the right.
    The $x_i'$s are selected in the same way as in Figure \ref{fig:interpolation-panels}(a), and the left block consists of the same plots as therein. 
    }
    \label{fig:train_vs_test}
\end{figure}

\paragraph{Ablation of momentum and varying step sizes.} 
Figure \ref{fig:mnist-curves-without-momentum} reports the same set of MNIST experiments as Figure \ref{fig:mnist-curves-momentum} but without momentum. All three metrics show a decreasing trend after a short warm-up phase, indicating that the GDA dynamics remain convergent. Compared with $\mathrm{momentum}=0.7$, the descent of all three metrics is slower. This confirms the role of momentum in accelerating convergence.

The same plots with varying step sizes  $\eta$ and $\tau$ are shown in Figure \ref{fig:mnist-curves-step-size}, with momentum 0.7. The GDA dynamics converge under the different choices, and the found minimax solutions and the learned neural transport map $T_\varphi$ are also similar (not shown).

\paragraph{Training vs. test interpolation trajectories.} For each test sample, we identify its nearest training particle in the latent space and compare their respective interpolation trajectories. The results are shown for $m=500$, $k=20{,}000$, the same as Figure \ref{fig:interpolation-panels}(a).
The visual alignment between the two paths in each row illustrates that the learned transport $T_\varphi$ generalizes continuously to unseen data.

\begin{table}[h]
\centering
\caption{MNIST hyperparameters for VAE, classifier, and transport map.}
\label{tab:mnist-hyperparams}

\begin{tabular}{p{3.5cm} p{4.5cm} p{4.5cm}}
\toprule
\textbf{Component} & \textbf{Hyperparameter} & \textbf{Value} \\
\midrule
\multirow{11}{*}{\textbf{VAE}}
& Latent dimension $d$             & 32 \\
& Hidden channels                 & [64, 64, 64, 64] \\
& Normalization layers             & GroupNorm with 16 groups \\
& Reconstruction loss              & $L^2$ \\
& KL regularization weight         & $10^{-2}$ \\
& Data augmentation                & Random affine transforms \\
& Training epochs                  & 200 \\
& Batch size                       & 600 \\
& Optimizer                        & AdamW \\
& Learning rate                    & $5\times10^{-3}$ (cosine schedule) \\
& Weight decay                     & $10^{-4}$ \\
& EMA decay                        & 0.999 \\
\midrule
\multirow{6}{*}{\textbf{Classifier $\theta$}}
& MLP hidden width                 & 64 \\
& Weight decay $\omega$            & $10^{-2}$ \\
& Optimizer for $\theta_0$         & Adam \\
& Learning rate for $\theta_0$     & $10^{-3}$ \\
& Training batches for $\theta_0$  & 20,000 \\
& Batch size for $\theta_0$        & 600 \\
\midrule
\multirow{5}{*}{\textbf{Transport Map $T_\varphi$}}
& MLP hidden width                 & 64 \\
& Embedding size for $y$           & 64 \\
& Optimizer                        & Adam \\
& Learning rate                    & $10^{-4}$ \\
& Weight decay                     & $10^{-5}$ \\
\bottomrule
\end{tabular}
\end{table}

\subsection{CIFAR-10}

\begin{table}[h]
\centering
\caption{CIFAR-10 hyperparameters for VAE, classifier, and transport map.}
\label{tab:cifar-hyperparams}

\begin{tabular}{p{3.5cm} p{4.5cm} p{4.5cm}}
\toprule
\textbf{Component} & \textbf{Hyperparameter} & \textbf{Value} \\
\midrule

\multirow{11}{*}{\textbf{VAE}}
& Latent dimension $d$               & 256 \\
& Hidden channels                    & [128, 256, 512, 512] \\
& Normalization layers               & GroupNorm with 32 groups \\
& Reconstruction loss                & $L^1$ \\
& Perceptual loss                    & LPIPS (weight 0.2) \\
& KL regularization weight           & $10^{-2}$ \\
& Data augmentation                  & Horizontal flip, color jitter \\
& Training epochs                    & 1,000 \\
& Batch size                         & 250 \\
& Optimizer                          & AdamW \\
& Learning rate                      & $10^{-4}$ (cosine schedule) \\ 
& Weight decay                       & $10^{-4}$ \\
& EMA decay                          & 0.999 \\
\midrule

\multirow{6}{*}{\textbf{Classifier $\theta$}}
& MLP hidden width                   & 512 \\
& Weight decay $\omega$              & $10^{-3}$ \\
& Optimizer for $\theta_0$           & Adam \\
& Learning rate for $\theta_0$       & $10^{-3}$ \\
& Training batches for $\theta_0$    & 20,000 \\
& Batch size for $\theta_0$          & 500 \\
\midrule

\multirow{5}{*}{\textbf{Transport Map $T_\varphi$}}
& MLP hidden width                   & 512 \\
& Embedding size for $y$             & 512 \\
& Optimizer                          & Adam \\
& Learning rate                      & $10^{-4}$ \\
& Weight decay                       & $10^{-5}$ \\
\bottomrule
\end{tabular}
\end{table}

The detailed hyperparameter setup can be found in Table \ref{tab:cifar-hyperparams}. 

\paragraph{VAE architecture.}
The encoder begins with a $3{\times}3$ convolution mapping the input to 128 channels, followed by three stages that reduce the spatial resolution from $32{\times}32 \rightarrow 16{\times}16 \rightarrow 8{\times}8 \rightarrow 4{\times}4$. Each stage applies two \texttt{ResBlock} layers and then average pooling. The \texttt{ResBlock} design matches that used in MNIST but uses GroupNorm with 32 groups, and a $1{\times}1$ shortcut whenever the input and output channels differ (e.g., $128\!\to\!256$, $256\!\to\!512$). After reaching $4{\times}4$, an additional \texttt{ResBlock}, followed by GroupNorm, SiLU, and a $1{\times}1$ convolution, outputs the mean and log-variance of the latent code of shape $(16,4,4)$.

In the decoder, a $1{\times}1$ convolution first lifts the latent tensor to 512 channels, followed by a \texttt{ResBlock}. Three nearest-neighbor upsampling steps then increase the spatial resolution ($4{\times}4 \rightarrow 8{\times}8 \rightarrow 16{\times}16 \rightarrow 32{\times}32$), each followed by two \texttt{ResBlock} layers whose output channels decrease according to the encoder widths. A final GroupNorm, SiLU, and a $3{\times}3$ convolution produce the reconstructed RGB image.

\end{document}